\newtheorem{theorem}{Theorem}
\newtheorem{lemma}[theorem]{Lemma}
\newtheorem{remark}{Remark}
\newtheorem{example}{Example}
\newtheorem{assumption}{Assumption}
\def\R{\mathbb{R}}
\def\BC{{\bm{C}}}
\def\BU{{\bm{U}}}
\def\BR{{\bm{R}}}
\def\BV{\bm{V}}
\def\BW{\bm{W}}
\def\BSigma{\bm{\Sigma}}
\def\BX{\bm{X}}
\def\BY{\bm{Y}}
\def\tX{\widetilde{\bm{X}}}
\def\BW{\bm{W}}
\def\cI{{\mathcal{I}}}
\def\cJ{{\mathcal{J}}}
\def\cH{\mathcal{H}}
\def\cO{\mathcal{O}}
\def\cP{\mathcal{P}}
\def\be{\bm{e}}
\def\fro{{\mathrm{F}}}
\def\rank{\mathrm{rank}}
\DeclareMathOperator*{\minimize}{\mathrm{minimize}}
\DeclareMathOperator*{\subject}{\mathrm{subject~to}}
\definecolor{officegreen}{rgb}{0.0, 0.5, 0.0}
\begin{document}

\title{Matrix Completion with Cross-Concentrated Sampling: Bridging Uniform Sampling and CUR Sampling}

\author{HanQin~Cai, 
        Longxiu~Huang, 
        Pengyu~Li,
        and~Deanna~Needell
\IEEEcompsocitemizethanks{\IEEEcompsocthanksitem 
H.Q.~Cai is with Department of Statistics and Data Science and Department of Computer Science, University of Central Florida, Orlando, FL 32816, USA. (email: \href{mailto:hqcai@ucf.edu}{hqcai@ucf.edu}) \protect\\
 L.~Huang is with   Department of Computational Mathematics, Science and Engineering and Department of Mathematics, Michigan State University, East Lansing, MI 48823, USA. (email: \href{mailto:huangl3@msu.edu}{huangl3@msu.edu}) \protect\\
P.~Li is with Department of Electrical Engineering and Computer Science, University of Michigan, Ann Arbor, MI 48109, USA. (email: \href{mailto:lipengyu@umich.edu}{lipengyu@umich.edu}) \protect\\
D.~Needell is with Department of  Mathematics, University of California, Los Angeles, Los Angeles, CA 90095, USA. (email: \href{mailto:deanna@math.ucla.edu}{deanna@math.ucla.edu})  \protect\\
 }
}

\IEEEtitleabstractindextext{
\begin{abstract}
While uniform sampling has been widely studied in the matrix completion literature, CUR sampling approximates a low-rank matrix via row and column samples. Unfortunately, both sampling models lack flexibility for various circumstances in real-world applications. In this work, we propose a novel and easy-to-implement sampling strategy, coined Cross-Concentrated Sampling (CCS). By bridging uniform sampling and CUR sampling,  CCS provides extra flexibility that can potentially save sampling costs in applications. In addition, we also provide a sufficient condition for CCS-based matrix completion. %has been established \dn{This sentence reads like someone else established this condition}.
Moreover, we propose a highly efficient non-convex algorithm, termed Iterative CUR Completion (ICURC), for the proposed CCS model. Numerical experiments verify the empirical advantages of CCS and ICURC against uniform sampling and its baseline algorithms, on both synthetic and real-world datasets.  ~\\
\end{abstract}

\begin{IEEEkeywords}
Matrix completion, CUR decomposition, low-rank matrix, cross-concentrated sampling, sampling strategy, image recovery, recommendation system, collaborative filtering, link prediction. ~\\~\\
\end{IEEEkeywords}}

\maketitle

\section{Introduction}

\IEEEPARstart{T}{he} problem of \textit{matrix completion} (MC) \cite{candes2009exact} has received much attention since the last decade. It has arisen in a wide range of applications, e.g., collaborative filtering \cite{Netflix,goldberg1992using}, image processing \cite{chen2004recovering,hu2012fast}, signal processing \cite{cai2019fast,cai2022structured}, genomics \cite{chi2013genotype,cai2016structured}, multi-task learning \cite{argyriou2008convex}, system identification \cite{liu2010interior}, and sensor localization \cite{singer2010uniqueness,tasissa2018exact}.

In this paper, we study MC under the setting of fixed rank. Consider a rank-$r$ matrix $\BX$ with observed entries indexed by a set $\Omega$.  MC aims to recover the original matrix $\BX$ from its partial observations. Naturally, we can model this problem as a minimization problem: 
\begin{equation}
    \begin{split}
     \minimize_{\tX}&\quad\frac{1}{2}\langle\cP_{\Omega}(\BX-\tX),\BX-\tX\rangle\\ 
     \subject&\quad \rank(\tX)=r,
    \end{split}
\end{equation} 
where $\langle~,\,\rangle$ denotes the Frobenius inner product and $\cP$ is the sampling operator defined as
\begin{equation} \label{eq:sample operator}
\cP_{\Omega}(\BX)=\sum_{(i,j)\in\Omega}[\BX]_{i,j}\be_i\be_j^\top.
\end{equation}
For the success of recovery, the general setting of MC requires the observation set $\Omega$ to be sampled via a certain unbiased stochastic process, e.g., 
uniform sampling with/without replacement \cite{recht2011simpler,klopp2014noisy} or Bernoulli sampling  \cite{candes2009exact}
 over all matrix entries. While such sample setting has been well studied in both theoretical and empirical aspects \cite{candes2009exact,cai2010singular,candes2010power,keshavan2010matrix,recht2011simpler,jain2013low,sun2016guaranteed}, it is too restricted in some applications due to hardware, financial, or environmental limitations. For instance, in the application of collaborative filtering, the rows and columns of the data matrix represent the users and rated objects (e.g., movies and merchandise) respectively. The unbiased sample models implicitly assume that all users have the interest to rate all objects with same probability---something that is quite unlikely in practice.

Here we consider \textit{CUR decomposition} as a potential alternative for efficient MC. 
CUR decomposition is also known as skeleton decomposition \cite{zamarashkin1997pseudo,chiu2013sublinear}, which attempts to utilize the self-expressiveness of the data in its low-rank matrix decomposition. 
There are several different, yet equivalent, formats for CUR decomposition \cite{HammHuang,hamm2023generalized}. In particular, we focus on the following CUR format:
\begin{equation} \label{eq:CUR}
    \BX=\BC \BU^\dagger \BR,
\end{equation}
where $\BX$ is the low-rank data matrix, $\BR$ and $\BC$ are selected row and column submatrices of $\BX$, and $\BU$ is the intersection submatrix of $\BR$ and $\BC$. Obviously, \eqref{eq:CUR} may not hold for arbitrary column and row submatrices. In fact, the following theorem gives a necessary and sufficient condition for CUR decomposition.
\begin{theorem}[\cite{HammHuang}] \label{thm:cur_iff_condition}
For given row and column submatrices $\BR$ and $\BC$, the CUR decomposition \eqref{eq:CUR} holds if and only if $\rank(\BU)=\rank(\BX)=r$.
\end{theorem}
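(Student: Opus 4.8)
The plan is to prove the two implications separately, reading the statement as: the identity $\BX=\BC\BU^\dagger\BR$ holds if and only if $\rank(\BU)=\rank(\BX)$, where I write $r:=\rank(\BX)$. Let $\cI$ and $\cJ$ denote the row and column index sets selecting $\BR$, $\BC$, and $\BU$.

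For the forward direction I would argue purely by rank inequalities, with no computation. Since $\BU$ is a submatrix of $\BX$, we always have $\rank(\BU)\le\rank(\BX)$. Conversely, if $\BX=\BC\BU^\dagger\BR$, then because the rank of a product never exceeds the rank of any factor and $\rank(\BU^\dagger)=\rank(\BU)$, we obtain $\rank(\BX)=\rank(\BC\BU^\dagger\BR)\le\rank(\BU)$. The two inequalities force $\rank(\BU)=\rank(\BX)=r$, so this direction is essentially immediate.

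For the converse I would start from a rank factorization $\BX=\BW\BY$ with $\BW\in\R^{m\times r}$ and $\BY\in\R^{r\times n}$ both of rank $r$ (for instance from the compact SVD). Writing $\BW_{\cI}$ for the rows of $\BW$ indexed by $\cI$ and $\BY_{\cJ}$ for the columns of $\BY$ indexed by $\cJ$, the three submatrices factor as $\BC=\BW\BY_{\cJ}$, $\BR=\BW_{\cI}\BY$, and $\BU=\BW_{\cI}\BY_{\cJ}$. The key observation is that $\rank(\BU)=r$ together with $\rank(\BW_{\cI})\le r$ and $\rank(\BY_{\cJ})\le r$ forces $\BW_{\cI}$ to have full column rank and $\BY_{\cJ}$ to have full row rank, since the rank of $\BW_{\cI}\BY_{\cJ}$ cannot exceed that of either factor.

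With these two full-rank facts in hand, I would invoke the standard pseudoinverse identity: for $\BU=\BW_{\cI}\BY_{\cJ}$ with the left factor of full column rank and the right factor of full row rank, $\BU^\dagger=\BY_{\cJ}^\dagger\BW_{\cI}^\dagger$. Substituting and using $\BY_{\cJ}\BY_{\cJ}^\dagger=I_r$ and $\BW_{\cI}^\dagger\BW_{\cI}=I_r$ collapses the product:
\begin{equation}
\BC\BU^\dagger\BR=\BW(\BY_{\cJ}\BY_{\cJ}^\dagger)(\BW_{\cI}^\dagger\BW_{\cI})\BY=\BW\BY=\BX.
\end{equation}
I expect the main obstacle to be justifying the factorization $\BU^\dagger=\BY_{\cJ}^\dagger\BW_{\cI}^\dagger$ and the two full-rank reductions feeding into it; once those are secured, the final cancellation is purely mechanical, and everything else reduces to the elementary rank inequalities already used in the forward direction.
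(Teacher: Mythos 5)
Your argument is correct and complete: the forward direction follows from the elementary rank inequalities you state, and the converse is secured by the rank factorization $\BX=\BW\BY$, the observation that $\rank(\BU)=r$ forces $\BW_{\cI}$ to have full column rank and $\BY_{\cJ}$ full row rank, and the reverse-order law $(\BW_{\cI}\BY_{\cJ})^\dagger=\BY_{\cJ}^\dagger\BW_{\cI}^\dagger$, which is valid precisely under those two full-rank hypotheses. The paper itself gives no proof of this statement---it imports it from the cited reference \cite{HammHuang}---and your proof is essentially the standard argument given there (using a compact SVD as the rank factorization), so there is nothing to correct.
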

Moreover, with the commonly assumed $\mu$-incoherence (see Assumption~\ref{asm:inco} in Section~\ref{sec:theoretical_results} later), the following theorem suggests that uniformly selected column and row submatrices are sufficiently good for CUR decomposition.
\begin{theorem}[{\cite{chiu2013sublinear,cai2020rapid,cai2021robust}}] \label{thm:uniform_sample_is_good}
    Let $\BX\in\R^{n\times n}$ be a rank-$r$ matrix with $\mu$-incoherence\footnote{To simplify our expression, we assume the matrices are square throughout the paper but all the results can be generalized to rectangular matrices.}. Suppose we sample $|\cI|\geq 10\mu_1 r \log(n)$ rows and $|\cJ|\geq 10\mu_2 r \log(n)$ columns uniformly with replacement.  Then 
    $\BU = [\BX]_{\cI,\cJ}$  satisfies $\rank(\BU)=\rank(\BX)$ 
    with probability at least $1- \frac{4r}{n^2}$. 
\end{theorem}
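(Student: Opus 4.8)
The plan is to reduce the rank condition to a statement about the smallest eigenvalues of two independent Gram matrices, and then control those eigenvalues with a matrix Chernoff bound. First I would fix a (thin) singular value decomposition $\BX=\BW\BSigma\BV^\top$ with $\BW,\BV\in\R^{n\times r}$ having orthonormal columns and $\BSigma\in\R^{r\times r}$ invertible, so that the sampled submatrix factors as
\begin{equation*}
[\BX]_{\cI,\cJ}=[\BW]_{\cI,:}\,\BSigma\,[\BV]_{\cJ,:}^\top .
\end{equation*}
Because $\BSigma$ is invertible, a short linear-algebra argument shows that $\rank([\BX]_{\cI,\cJ})=r$ if and only if both tall factors $[\BW]_{\cI,:}$ and $[\BV]_{\cJ,:}$ have full column rank $r$; equivalently, both Gram matrices $[\BW]_{\cI,:}^\top[\BW]_{\cI,:}$ and $[\BV]_{\cJ,:}^\top[\BV]_{\cJ,:}$ are positive definite. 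This decouples the row sampling from the column sampling, so it suffices to bound the probability that either Gram matrix is singular and then take a union bound.

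Next I would treat the row factor (the column factor is identical by symmetry). Sampling $|\cI|$ rows uniformly with replacement means the indices are i.i.d.\ uniform on $\{1,\dots,n\}$, so
\begin{equation*}
[\BW]_{\cI,:}^\top[\BW]_{\cI,:}=\sum_{i\in\cI}(\be_i^\top\BW)^\top(\be_i^\top\BW)
\end{equation*}
is a sum of independent, rank-one, positive semidefinite $r\times r$ random matrices. Each summand has expectation $\tfrac1n\BW^\top\BW=\tfrac1n I_r$, so the sum has expectation $\tfrac{|\cI|}{n}I_r$ and hence $\mu_{\min}:=\lambda_{\min}(\tfrac{|\cI|}{n}I_r)=|\cI|/n$. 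The $\mu$-incoherence assumption bounds each summand in operator norm by $\max_i\|\be_i^\top\BW\|_2^2\le \mu_1 r/n=:L$.

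With these two quantities I would invoke the matrix Chernoff inequality for the smallest eigenvalue of a sum of bounded independent positive semidefinite matrices, which gives, for the event that the Gram matrix fails to be positive definite,
\begin{equation*}
\mathbb{P}\big\{\lambda_{\min}\big([\BW]_{\cI,:}^\top[\BW]_{\cI,:}\big)\le 0\big\}\le r\,\exp\!\big(-\mu_{\min}/L\big)=r\,\exp\!\big(-|\cI|/(\mu_1 r)\big),
\end{equation*}
obtained by sending the Chernoff deviation parameter $\delta\to1^-$. Substituting $|\cI|\ge 10\mu_1 r\log(n)$ makes this at most $r\,n^{-10}$, comfortably below $2r/n^2$; the analogous computation for the columns (with $L=\mu_2 r/n$ and $|\cJ|\ge 10\mu_2 r\log(n)$) gives another $2r/n^2$, and the union bound yields the claimed $1-4r/n^2$.

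The main obstacle is not any single estimate but getting the logic of the first step exactly right: one must verify that full column rank of the two sampled singular-factor blocks is both necessary and sufficient for $\rank([\BX]_{\cI,\cJ})=r$, and that sampling with replacement (which produces a multiset, possibly with repeated rows) does not break the equivalence between \emph{full column rank} and \emph{positive-definite Gram matrix}. A secondary technical point is the passage $\delta\to1^-$ in the Chernoff bound, since the sharp lower-tail bound is stated for $\delta\in[0,1)$; one checks that $(1-\delta)^{1-\delta}\to1$, so the bracketed factor tends to $e^{-1}$ and legitimizes the clean exponent $-\mu_{\min}/L$. The generous constant $10$ leaves ample slack, so matching the target probability is routine once the concentration bound is in place.
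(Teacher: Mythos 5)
Your proof is correct: the paper itself does not prove this theorem but imports it from the cited references, and your argument---factoring $[\BX]_{\cI,\cJ}=[\BW]_{\cI,:}\BSigma[\BV]_{\cJ,:}^\top$, reducing the rank condition to full column rank of the two sampled singular factors, and applying the matrix Chernoff lower-tail bound with $\mu_{\min}/L=|\cI|/(\mu_1 r)\geq 10\log(n)$---is exactly the standard route taken in those sources. The constants check out ($r n^{-10}$ per factor, union-bounded well under $4r/n^2$), and you correctly handle the two delicate points, namely that repeated rows from sampling with replacement do not affect the rank/Gram-matrix equivalence and that the $\delta\to 1^-$ limit in the Chernoff bound is legitimate.
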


Combining Theorems~\ref{thm:cur_iff_condition} and \ref{thm:uniform_sample_is_good}, one can see that an incoherent low-rank matrix can be recovered from its uniformly sampled rows and columns via CUR decomposition. In this sense, CUR decomposition can be viewed as an MC solver  \cite{xu2015cur,cai2016structured}. We call the corresponding sampling model \textit{CUR sampling}, i.e., full observation on the sampled rows and columns. 
Nevertheless, the model of CUR sampling is also too restricted in many applications, especially with larger-scale problems.
{For instance, in the application of larege-scale collaborative filtering, CUR sampling implicitly assumes that some users rate \textit{all} objects and some objects are rated by \textit{all}  users, which is clearly impractical.}

In the era of big data, it is urgent to explore some efficient sampling models that suit various real-world circumstances. While both the uniform sampling and CUR sampling have limits in applications, the blank space between them leads to a more flexible and attainable sampling strategy. 
In this work, we propose a novel sampling model, coined \textit{Cross-Concentrated Sampling} (CCS), to bridge the aforementioned uniform sampling and CUR sampling. {Our approach allows partial observations on selected row and column submatrices, making it much practical in many applications.}

\subsection{Related Work and Contributions}
\subsubsection{Matrix Completion}
The pioneering work \cite{candes2009exact} studies the matrix completion problem with the Bernoulli sampling model. By relaxing the non-convex problem to a convex nuclear norm minimization, it shows that sampling $\cO(rn^{1.2}\log(n))$ entries is sufficient for exact recovery with high probability, which is subsequently improved to $\cO(rn\log^2(n))$ in \cite{chen2015incoherence}. 
Another study \cite{recht2011simpler} focuses on the uniform sampling model, and achieves the same improved sample complexity $\cO(rn\log^2(n))$. 
The standard algorithms for solving the nuclear norm minimization are semidefinite programming \cite{vandenberghe1996semidefinite} and singular value thresholding \cite{cai2010singular}. 
More recently, many non-convex algorithms that aim at the original non-convex problem have also been studied: \cite{jain2010guaranteed,jain2015fast} use the technique of singular value projection (SVP) and provide strong empirical performance; however, the theoretical sample complexity is high as $\cO(r^5n\log^3(n))$.  The works \cite{jain2013low,keshavan2012efficient} are based on alternating minimization and have the sample complexities $\cO(r^{2.5}n\log(n)\log(\frac{1}{\varepsilon}))$ and $\cO(r n\log(n)\log(\frac{1}{\varepsilon}))$ respectively, where $\varepsilon$ is the desired accuracy. Note that the term $\log(\frac{1}{\varepsilon})$ is introduced since \cite{jain2013low,keshavan2012efficient} require iterative resampling. 
A series of work \cite{keshavan2010matrix,sun2016guaranteed,zheng2016convergence,tong2021accelerating} studies the (modified) gradient descent methods on Grassmannian manifold where the sharpest sample complexity is $\cO(r^2n\log(n))$.
\cite{vandereycken2013low,wei2020guarantees} focus on fast Riemannian optimization approaches and achieve the sample complexity $\cO(r^2n\log^2(n))$.
Nevertheless, all these algorithms are designed for Bernoulli or uniform sampling models.

Note that \cite{candes2010power} shows the equivalence between the Bernoulli and uniform sampling models in matrix completion. 
Thus, for the ease of presentation, we only discuss the uniform sampling model in this paper; however, we emphasize that our approach can be easily extended to bridge between Bernoulli sampling and CUR sampling for a similar result.

\subsubsection{CUR Decomposition}
For a given rank-$r$ matrix $\BX\in\mathbb{R}^{n\times n}$, CUR decomposition represents $\BX$ by its submatrices. There are two different versions of CUR decomposition. Set $\BC=[\BX]_{:,\cJ}$ and $\BR=[\BX]_{\cI,:}$. One type of CUR decomposition is of the form \eqref{eq:CUR}. Another version expresses $\BX$ as $\BC\BC^\dagger \BX \BR^\dagger \BR$. The equivalence of these two distinct   CUR decompositions is proved in \cite{HammHuang}. Ensurance of the exact CUR decomposition is equivalent to the condition that $\rank(\BU)=\rank(\BX)$ with $\BU=[\BX]_{\cI,\cJ}$. There are deterministic \cite{AB2013,AltschulerGreedyCSSP,LiDeterministicCSSP} and random \cite{DemanetWu,DKMIII,DMM08,hamm2020stability,DMPNAS,WZ_2013,tropp2009column} methods to select the row and column subsets to form CUR decomposition.   Deterministic sampling needs to sample fewer rows and columns, e.g., $\cO(r)$ to guarantee CUR decomposition but it needs to access the full data and is more computationally costly.  Random sampling is usually computationally cheaper, but it requires more rows and columns.   In the literature, there are three popularly used random sampling distributions: uniform \cite{chiu2013sublinear}, column/row length \cite{DKMIII}, and leverage scores \cite{DMM08}. Compared with the other two, uniform sampling is the easiest and cheapest to implement and does not need to access the full data, but it may fail to provide good results for a generic matrix \cite{HH_PCD2019}. However, as discussed, if the given matrix is incoherent, then uniform sampling can guarantee good performance \cite{chiu2013sublinear,cai2020rapid,cai2021robust,hamm2022riemannian}. 

Although the application of CUR decomposition on MC has already been discussed in \cite{xu2015cur,cai2016structured}, both papers require full observation on the selected row and column submatrices, which, as discussed, is too restricted in some applications.

\subsubsection{Contributions}
This paper bridges the uniform sampling and CUR sampling for matrix completion (MC) problems. Under the commonly used incoherence assumption, we propose a novel sampling model, coined Cross-Concentrated Sampling (CCS). To summarize, our main contributions are as follows:
\begin{enumerate}
    \item We propose a flexible and attainable sampling model, coined CCS, for MC that bridges uniform sampling and CUR sampling (see Procedure~\ref{alg:CCS}). 
    \item We establish a sufficient condition for exact data recovery from the proposed CCS model, specifically $\cO(r^2n\log^2(n))$ samples are sufficient to exactly reconstruct the missing data with a high probability (see Theorem~\ref{thm:sufficient_condition}). 
    \item We design a highly efficient non-convex algorithm, 
    dubbed Iterative CUR Completion (ICURC), for solving CCS-based MC problem (see Algorithm~\ref{Algo:ICURC}). In particular, ICURC costs merely $\cO(nr(|\cI|+|\cJ|))$ flops provided $|\cI|,|\cJ|\ll n$.
    \item We demonstrate the effectiveness and efficiency of the CCS model and the corresponding algorithm on both synthetic and real-world datasets (see Section~\ref{sec:numerical}).
\end{enumerate}

\subsection{Notation}
Given matrices $\BX\in\mathbb{R}^{n\times n}$, $[\BX]_{i,j}$, $[\BX]_{\cI,:}$, $[\BX]_{:,\cJ}$, and   $[\BX]_{\cI,\cJ}$ denote the $(i,j)$-th entry of $\BX$,  the row submatrix with row indices $\cI$,   the column submatrix with column indices $\cJ$, and the submatrix of $\BX$ with row indices $\cI$ and column indices $\cJ$, respectively.  $\|\BX\|_{\fro}:=(\sum_{i,j}[\BX]_{i,j}^2)^{1/2}$ denotes the Frobenius norm of $\BX$, $\|\BX\|_{2,\infty}:=\max_{i}(\sum_{j}[\BX]_{i,j}^2)^{1/2}$ denotes the largest row-wise $\ell_2$-norm, $\|\BX\|_{\infty}=\max_{i,j}|[\BX]_{i,j}|$,  $\BX^\dagger$ represents the Moore–Penrose inverse of $\BX$, and $\BX^\top$ is the transpose of $\BX$. For $\BX,\BY\in\mathbb{R}^{n\times n}$,  {$\langle\BX,\BY\rangle=\sum_{i,j}[\BX]_{i,j}[\BY]_{i,j}$} denotes the Frobenius inner product of $\BX$ and $\BY$.
The symbol $[n]$ denotes the set $\{1,\cdots,n\}$ for all $n\in\mathbb{Z}^+$. $\cI\times [n]$ denotes the set $\{(i,j):i\in\cI,j\in[n]\}$ and  $[n]\times \cJ$ denotes the set $\{(i,j):i\in[n],j\in\cJ\}$. Unless otherwise specified, the term \textit{uniform sampling} refers to uniform sampling with replacement throughout the paper. Additionally, some important symbols are summarized in Table~\ref{tab:imp_notation}.

\begin{table}[h]
    \centering
    \caption{Table of notation.}

    \begin{tabular}{c|c}
    \toprule
  \textsc{Not.}  & \textsc{Description}  \\
         \midrule
         $\cP_{\Omega}$&sampling operator on the set $\Omega$ (see \eqref{eq:sample operator})\\
         %\hline
         $\cI$& row indices for the row submatrix $\BR$\\
         %\hline
          $\cJ$& column indices for the column submatrix $\BC$\\
          %\hline
       $\Omega_{\BR}$  & indices set of the samples on the submatrix $\BR$  \\
         %\hline
       $\Omega_{\BC}$  & indices set of the samples on the submatrix $\BC$ \\
        
         %\hline 
         $\delta$ & percentage of sampled columns or rows
         \\
         %\hline
       $p$  & uniform observation rate on the submatrices \\ 
       %\hline
        $s$  &   overall observation size on the full matrix \\
         $\alpha$  &   overall observation rate on the full matrix \\
         
         \bottomrule

    \end{tabular}
    \label{tab:imp_notation}
\end{table}

\section{Proposed Model} \label{sec:proposed_model}

\begin{figure*}[th]
    \centering
     \subfloat[Uniform Sampling]{\includegraphics[width=0.24\textwidth]{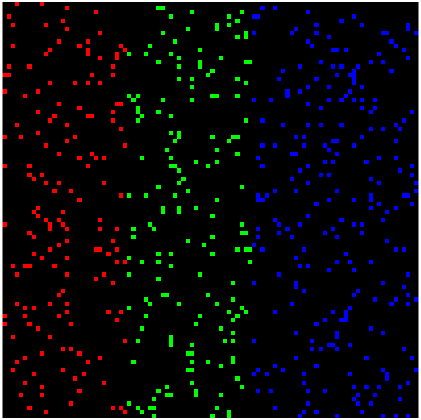}
    }
    \hfill
    \subfloat[CCS--Less Concentrated]{\includegraphics[width=0.24\textwidth]{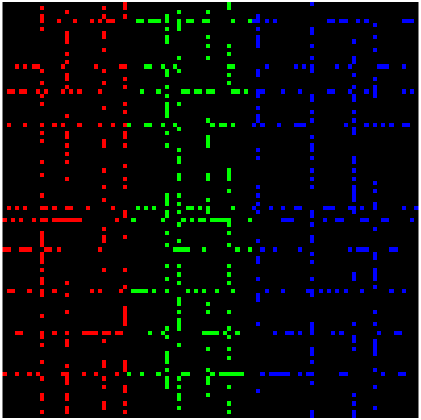}
    }
    \hfill
    \subfloat[CCS--More Concentrated]{\includegraphics[width=0.24\textwidth]{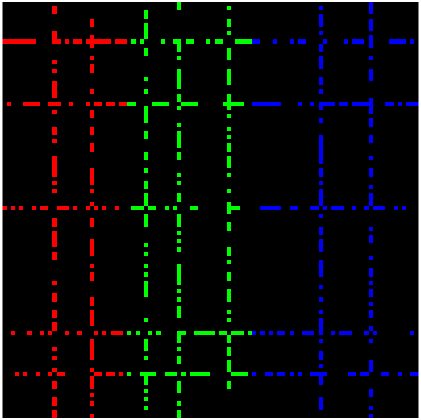}
    }\hfill
      \subfloat[CUR Sampling]{\includegraphics[width=0.24\textwidth]{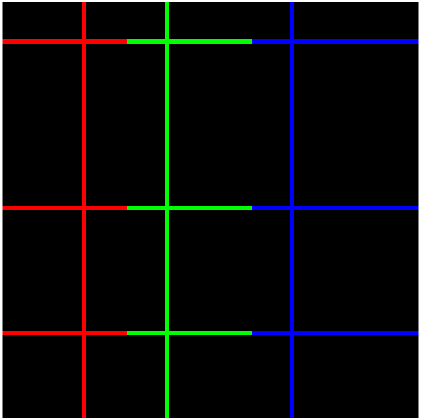}
    }\\
    \caption{Visual illustrations of different sampling schemes. 
    From left to right, sampling methods change from the uniform sampling style to the CUR sampling style with the same total observations rate. Colored pixels indicate  observed entries, while  black pixels mean  missing entries.}
    \label{fig:cur_visual}
\end{figure*}

We aim to design an efficient and effective sampling strategy for varying circumstances in real-world applications. Motivated by the example of collaborative filtering and CUR decomposition, we propose a novel sampling model that samples the entries concentrated on a subset of rows and columns of the original data matrix. Formally, let $\BR=[\BX]_{\cI,:}$ and $\BC=[\BX]_{:,\cJ}$ be selected (by indices sets $\cI$ and $\cJ$) row and column submatrices of the data matrix $\BX$, respectively. Next, we uniformly (with replacement) sample entries on $\BR$ and $\BC$, i.e., the samples are concentrated on the selected row and column submatrices. Since the visualization (Figure~\ref{fig:cur_visual}) of the submatrices $\BR$ and $\BC$ together looks like crosses, we name this sampling model \textit{Cross-Concentrated Sampling} (CCS). 
Moreover, we illustrate CCS against uniform sampling and CUR sampling in Figure~\ref{fig:cur_visual}. One can see that CCS becomes CUR sampling if samples are dense enough to fully observe the submatrices, and CCS becomes uniform sampling if all rows and columns are selected into the submatrices.

We denote the indices sets of the cross-concentrated samples by $\Omega_\BR$ and $\Omega_\BC$ with respect to the notation of the submatrices. The samples in the intersection submatrix $\BU$ are directly inherited from $\Omega_\BR \cup \Omega_\BC$\footnote{We are abusing set notation here. Since we use  ``sampling with replacement'' and thus allow repeated samples, $\Omega_\BR$ and $\Omega_\BC$ are not precisely sets, and $\Omega_\BR \cup \Omega_\BC$ is actually $\Omega_\BR + \Omega_\BC$.}:
\begin{equation}
    \Omega_\BU = \left\{(i,j)\in \Omega_\BR \cup \Omega_\BC~|~ i\in \cI \textnormal{ and } j\in\cJ \right\}.
\end{equation}
Thus, the expected observation rate of $\Omega_\BU$ is the sum of $\Omega_\BR$'s and $\Omega_\BC$'s observation rates. Our task is recovering the underlying rank-$r$ $\BX$ from the observations on $\Omega_\BR \cup \Omega_\BC$: 
\begin{equation} \label{eq:CCS-based MC}
\begin{split}
    \minimize_{\tX}&\quad\frac{1}{2}\langle\cP_{\Omega_\BR \cup \Omega_\BC}(\BX-\tX),\BX-\tX\rangle\\ % + \|\cP_{\Omega_\BC}(\BX-\tX)\|_\fro^2\right)\\
    \subject&\quad \rank(\tX)=r,
\end{split}
\end{equation}
where $\cP$ is the sampling operator defined in \eqref{eq:sample operator}. Note that $\cP$ is not a projection operator if there are repeated samples in the indices set. Hence, $\langle\cP_\Omega(\BX),\BX\rangle$ may not equal to $\|\cP_\Omega(\BX)\|_\fro^2$ in our formula.

\begin{remark}
For successful recovery, the rows and columns we sample from must span the full row and column space of $\BX$. In other words, we require $\rank(\BU)=\rank(\BX)$ where $\BU=[\BX]_{\cI,\cJ}$ is the intersection submatrix of $\BR$ and $\BC$. By Theorem~\ref{thm:uniform_sample_is_good}, this condition can be achieved by uniformly sampling $\cO(r\log(n))$ row and column indices if $\BX$ is incoherent. One may select as low as $\cO(r)$ rows and columns based on prior knowledge in some applications. We consider this as a necessary condition for CCS-based matrix completion.
\end{remark}

\begin{example} \label{exp:Netflix}
Consider the famous Netflix problem where each row of the data matrix represents a user and each column represents a movie \cite{Netflix}. The CCS model randomly selects some users and has them randomly rate a sufficient amount of
movies (perhaps through monetary incentive), then randomly selects some movies and has adequate number of users rate them (perhaps through website promotion).
If CCS has access to some background information of the users, then we can select fewer but representative users from various backgrounds for the survey. Similarly, fewer but representative movies of each category will be promoted for enough ratings. Compare to uniform sampling, CCS is able to collect desired data points much more efficiently. Compared to CUR sampling, CCS is more realistic, as CUR sampling asks all the selected users to rate all existing movies, and all existing users rate all the selected movies.
\end{example}

In summary, we present the CCS model as Procedure~\ref{alg:CCS} for 
low-rank matrices with incoherence.

\floatname{algorithm}{Procedure}
\begin{algorithm}[h]
\caption{Cross-Concentrated Sampling (CCS)}\label{alg:CCS}
\begin{algorithmic}[1]
\State \textbf{Input:}  $\BX$: access to underlying low-rank matrix. 
\State  Uniformly choose row and column indices $\cI,\cJ$.
\State Set $\BR=[\BX]_{\cI,:}$ and $\BC=[\BX]_{:,\cJ}$.
\State Uniformly sample entries in $\BR$ and $\BC$, then record the sampled locations as $\Omega_{\BR}$ and $\Omega_{\BC}$, respectively.
\State\textbf{Output:}  $[\BX]_{\Omega_{\BR}\cup\Omega_{\BC}}$, $\Omega_{\BR}$, $\Omega_{\BC}$, $\cI$, $\cJ$.
\end{algorithmic}
\end{algorithm}

\subsection{Theoretical Results}  \label{sec:theoretical_results}
In this section, we study the %theoretical aspect of 
CCS model from a theoretical perspective. In particular, we provide a sufficient condition to guarantee the uniqueness of solutions with the samples generated by Procedure~\ref{alg:CCS}. 
The proofs are deferred to Section~\ref{sec:proofs}. 

We start with the formal expression of the widely used incoherence assumption. 
\begin{assumption}[$\{\mu_1,\mu_2\}$-incoherence] \label{asm:inco}
Let $\BX\in\R^{n\times n}$  be a rank-$r$ matrix. $\BX$ is $\{\mu_{1},\mu_{2}\}$-incoherent if 
\begin{equation*}
    \left\| {\BW}\right\|_{2,\infty}\leq \sqrt{\frac{\mu_{1} r}{n}} \quad \textnormal{ and } \quad
    \left\|{\BV}\right\|_{2,\infty}\leq \sqrt{\frac{\mu_{2} r}{n}} 
\end{equation*}
for some constants $\mu_{1}$ and $\mu_{2}$, where ${\BW}\bm{\BSigma}{\BV}^\top$ is the compact singular value decomposition (SVD) of $\BX$. In some context, we use the term $\mu$-incoherence where $\mu:=\max\{\mu_1,\mu_2\}$ for simplicity. 
\end{assumption}

Next, we present a variant of \cite[Theorem 3.5]{cai2021robust} that shows how the matrix properties are transformed to its uniformly sampled row submatrix, which is a keystone to our main theorem. Similar results hold for a uniformly sampled column submatrix as well. 

\begin{lemma}\label{COR:UniformIncoherence}
Suppose that $\BX\in\R^{n\times n}$ is a rank-$r$ matrix that satisfies Assumption~\ref{asm:inco}. Let $\kappa$ denote the condition number of $\BX$. Suppose that the indices set $\cI\subseteq[n]$ is chosen by sampling uniformly without replacement {to yield $\BR=[\BX]_{\cI,:}$}. If $|\cI|\geq \mu_1 r^2\log^2(n)$,  
then the following conditions hold with probability at least $1-\frac{r}{n^{0.4r\log(n)}}$: 
\begin{equation*} 
    \begin{split}
        \mu_{1\BR}\leq 4\kappa^2\mu_1, \quad
        \mu_{2\BR}\leq\mu_2,\quad
          \kappa_\BR\leq 2\sqrt{\mu_1 r}\kappa,
    \end{split}
\end{equation*}
where $\{\mu_{1\BR},\mu_{2\BR}\}$ and $\kappa_\BR$ are the incoherence parameters and the condition number of $\BR$ respectively.
\end{lemma}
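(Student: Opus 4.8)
The plan is to reduce all three conclusions to the spectral behaviour of a single random object: the Gram matrix of the sampled rows of the left singular factor. Write the compact SVD $\BX=\BW\BSigma\BV^\top$ and set $\BW_\cI:=[\BW]_{\cI,:}\in\R^{|\cI|\times r}$, so that $\BR=[\BX]_{\cI,:}=\BW_\cI\BSigma\BV^\top$. The matrix that controls everything is $\BW_\cI^\top\BW_\cI=\sum_{i\in\cI}(\be_i^\top\BW)^\top(\be_i^\top\BW)$. Because $\BW^\top\BW=I_r$ and $\cI$ is a uniform size-$|\cI|$ sample without replacement, this Gram matrix has expectation $\tfrac{|\cI|}{n}I_r$, and Assumption~\ref{asm:inco} bounds each rank-one summand in operator norm by $\|\BW\|_{2,\infty}^2\le \mu_1 r/n$. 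A matrix Chernoff inequality then applies; I would use it to show that on an event $\mathcal{E}$ of probability at least $1-r/n^{0.4r\log(n)}$ one has the two-sided bound $\tfrac12\tfrac{|\cI|}{n}\le \lambda_{\min}(\BW_\cI^\top\BW_\cI)\le \lambda_{\max}(\BW_\cI^\top\BW_\cI)\le \tfrac32\tfrac{|\cI|}{n}$. The budget $|\cI|\ge \mu_1 r^2\log^2(n)$ is exactly what makes the Chernoff exponent reach the order $r\log^2(n)$ that produces the stated tail, and in particular $\BW_\cI$ has full column rank $r$ on $\mathcal{E}$, so $\rank(\BR)=r$.

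Conditioning on $\mathcal{E}$, the three bounds follow from linear algebra. The cleanest is $\mu_{2\BR}\le\mu_2$: since $\BW_\cI$ has full column rank, the row space of $\BR$ equals that of $\BX$, i.e.\ $\mathrm{range}(\BV)$, so the right singular vectors $\BV_\BR$ of $\BR$ satisfy $\BV_\BR\BV_\BR^\top=\BV\BV^\top$; reading off diagonal entries gives $\|\be_j^\top\BV_\BR\|_2=\|\be_j^\top\BV\|_2$ for every $j$, hence $\|\BV_\BR\|_{2,\infty}=\|\BV\|_{2,\infty}\le\sqrt{\mu_2 r/n}$ (both matrices have $n$ columns). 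For the condition number I use that the nonzero singular values of $\BR$ are the square roots of the eigenvalues of $\BSigma\,\BW_\cI^\top\BW_\cI\,\BSigma$, so $\sigma_{\max}^2(\BR)\le\sigma_1^2\lambda_{\max}(\BW_\cI^\top\BW_\cI)$ and $\sigma_{\min}^2(\BR)\ge\sigma_r^2\lambda_{\min}(\BW_\cI^\top\BW_\cI)$; the two-sided bound on $\mathcal{E}$ then controls $\kappa_\BR$ in terms of $\kappa$. Finally, for $\mu_{1\BR}$ I bound the left singular vectors $\BW_\BR=\BR\BV_\BR\BSigma_\BR^{-1}$ by $\|\BW_\BR\|_{2,\infty}\le\|\BR\|_{2,\infty}/\sigma_{\min}(\BR)$, with $\|\BR\|_{2,\infty}\le\sigma_1\sqrt{\mu_1 r/n}$ and $\sigma_{\min}(\BR)\ge\sigma_r\sqrt{\tfrac12|\cI|/n}$; this is precisely the route that introduces the factor $\kappa^2$ and yields $\mu_{1\BR}\le 4\kappa^2\mu_1$, while the same $\sigma_{\min}(\BR)$ estimate supplies the $\sqrt{\mu_1 r}$ in $\kappa_\BR\le 2\sqrt{\mu_1 r}\kappa$.

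The main obstacle is the concentration step and its bookkeeping, not the algebra. Because the rows are drawn \emph{without} replacement, I must either invoke a matrix Chernoff bound valid for sampling without replacement or reduce to the with-replacement case, and then extract the precise tail $r/n^{0.4r\log(n)}$ from the budget $|\cI|\ge\mu_1 r^2\log^2(n)$, which is where the numerical constants are pinned down. I should note that the stated factors $4\kappa^2$ and $2\sqrt{\mu_1 r}\kappa$ are conservative: the projection identity $\BW_\BR\BW_\BR^\top=\BW_\cI(\BW_\cI^\top\BW_\cI)^{-1}\BW_\cI^\top$ together with the tight Gram bound already gives $\mu_{1\BR}\le 2\mu_1$ and $\kappa_\BR\le\sqrt{3}\,\kappa$, but the looser estimates are what transcribe directly from \cite[Theorem 3.5]{cai2021robust} to the row-submatrix $\BR$. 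The remaining work is simply to transcribe those perturbation estimates and to take a union bound over the two conclusions that require both tails of $\BW_\cI^\top\BW_\cI$; the analogous statement for a uniformly sampled column submatrix follows by applying the identical argument to $\BX^\top$.
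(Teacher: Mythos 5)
Your plan is correct in its reductions but follows a genuinely different route from the paper. The paper's proof of Lemma~\ref{COR:UniformIncoherence} is a one-line invocation of \cite[Theorem~3.5]{cai2021robust} with the parameter choices $\delta=0.75$ and $\gamma=r\log^2(n)$: the three bounds, the auxiliary inequality $\sqrt{|\cI|/n}\,\|[\BW]_{\cI,:}^\dagger\|_2\leq 2$, and the tail $1-r\exp(-(0.75+0.25\log(0.25))r\log^2(n))\geq 1-r/n^{0.4r\log(n)}$ are all read off from that theorem. You instead re-derive that black box from first principles: matrix Chernoff applied to $\BW_\cI^\top\BW_\cI$ (expectation $\frac{|\cI|}{n}I_r$, summands bounded by $\mu_1 r/n$), then $\BR=\BW_\cI\BSigma\BV^\top$, the row-space identity giving $\mu_{2\BR}\leq\mu_2$, the bound $\|\BW_\BR\|_{2,\infty}\leq\|\BR\|_{2,\infty}/\sigma_{\min}(\BR)$ giving $\mu_{1\BR}\leq 4\kappa^2\mu_1$, and a ratio of singular-value bounds giving $\kappa_\BR$. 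This linear algebra is all correct, your handling of sampling without replacement (reduce to with replacement) is the right fix, and your route has the virtue of being self-contained where the paper's is not; the paper's route buys brevity and constants that are guaranteed to match the citation.

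There is one concrete quantitative step that fails as stated: the event $\frac12\frac{|\cI|}{n}\leq\lambda_{\min}(\BW_\cI^\top\BW_\cI)\leq\lambda_{\max}(\BW_\cI^\top\BW_\cI)\leq\frac32\frac{|\cI|}{n}$ cannot hold with failure probability $r/n^{0.4r\log(n)}$ under the budget $|\cI|\geq\mu_1 r^2\log^2(n)$. With $\mu_{\min}/L=|\cI|/(\mu_1 r)\geq r\log^2(n)$, the matrix Chernoff lower tail at relative deviation $1/2$ has exponent $0.5+0.5\log(0.5)\approx 0.153$ per unit of $r\log^2(n)$, and the upper tail at deviation $1/2$ has exponent $1.5\log(1.5)-0.5\approx 0.108$; the two-sided event therefore only carries a tail of roughly $2r/n^{0.108\,r\log(n)}$. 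To reach the stated exponent $0.4$ you must take the asymmetric, one-sided choice $\lambda_{\min}(\BW_\cI^\top\BW_\cI)\geq\frac14\frac{|\cI|}{n}$ (deviation $0.75$, exponent $0.75+0.25\log(0.25)\approx 0.403$), i.e.\ exactly the inequality $\sqrt{|\cI|/n}\,\|[\BW]_{\cI,:}^\dagger\|_2\leq 2$ that the paper extracts, and you must avoid the upper tail entirely. That is harmless for your argument: bound $\sigma_{\max}(\BR)$ deterministically by $\sqrt{|\cI|}\,\|\BR\|_{2,\infty}\leq\sqrt{|\cI|}\,\sigma_1\sqrt{\mu_1 r/n}$, which together with $\sigma_{\min}(\BR)\geq\sigma_r\cdot\frac12\sqrt{|\cI|/n}$ yields precisely $\kappa_\BR\leq 2\sqrt{\mu_1 r}\,\kappa$ and $\mu_{1\BR}\leq 4\kappa^2\mu_1$ --- no concentration of $\lambda_{\max}$ is needed anywhere. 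Your aside that the tight two-sided Gram bound would give $\kappa_\BR\leq\sqrt{3}\kappa$ is true only at the weaker probability level, so it should be dropped or re-priced.
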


Now, we are ready to present our main theorem and its  proof  is deferred to Section~\ref{subsec:pf_main_thm}. 

\begin{theorem} \label{thm:sufficient_condition}
Suppose $\BX\in\mathbb{R}^{n\times n}$ is rank-$r$ matrix that satisfies Assumption~\ref{asm:inco}. Let $\kappa$ denote the condition number of $\BX$. Suppose that $\cI,\cJ\subseteq[n]$ are chosen uniformly with replacement to yield $\BR=[\BX]_{\cI,:}$ and $\BC=[\BX]_{:,\cJ}$. Suppose $\Omega_{\BR}$ and $\Omega_{\BC}$ are sampled uniformly with replacement. If 
\begin{equation*}
    \begin{aligned}
        |\cI|&\geq 512\beta\kappa^2r^2\mu_1\mu_2  \log^2(n), \\
        |\cJ|&\geq 512\beta\kappa^2r^2\mu_1\mu_2 \log^2(n), \\
        |\Omega_{\BR}|&\geq 128\beta\kappa^2r^2\mu_1\mu_2(n+|\cI|)\log^2(2n),\\
         |\Omega_{\BC}|&\geq 128\beta\kappa^2r^2\mu_1\mu_2(n+|\cJ|)\log^2(2n), 
    \end{aligned}
\end{equation*}
for some absolute constant $\beta>1$, then $\BX$ can be uniquely determined 
from  $\Omega_{\BR}\cup \Omega_{\BC}$ with probability at least
\[1-\frac{2r}{n^{0.4r\log(n)}}-\frac{2}{n^{2\beta^{0.5}-2}}-\sum_{i=1}^{2}\frac{6\log(n)}{(n+\mu_i r^2\log^2(n))^{2\beta-2}}.
\]
\end{theorem}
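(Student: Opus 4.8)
The plan is to reduce completion of the full matrix $\BX$ to completion of its two submatrices $\BR$ and $\BC$, and then to glue these back together through the CUR identity. Concretely, by Theorem~\ref{thm:cur_iff_condition}, once $\rank(\BU)=r$ we have $\BX=\BC\BU^\dagger\BR$, so it suffices to show that (i) $\BR$ is the unique rank-$r$ matrix agreeing with the observations on $\Omega_\BR$, (ii) $\BC$ is the unique rank-$r$ matrix agreeing with the observations on $\Omega_\BC$, and (iii) $\rank(\BU)=r$. I would first dispatch (iii): writing the compact SVD $\BX=\BW\BSigma\BV^\top$, one has $\BU=[\BW]_{\cI,:}\BSigma[\BV]_{\cJ,:}^\top$, so $\rank(\BU)=r$ as soon as $[\BW]_{\cI,:}$ and $[\BV]_{\cJ,:}$ both have full column rank $r$; this is exactly the statement that $\BR$ and $\BC$ are themselves rank $r$, which is part of the conclusion of Lemma~\ref{COR:UniformIncoherence}.

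The technical core is steps (i)--(ii), which I would handle by invoking a uniform-sampling matrix-completion uniqueness result (of Recht type, with sample complexity of order $\mu r (n_1+n_2)\beta\log^2$ and a failure probability of the form $6\log(n_2)(n_1+n_2)^{2-2\beta}+n_2^{2-2\sqrt\beta}$) applied to $\BR$ and $\BC$ separately. The matrix $\BR$ is $|\cI|\times n$ of rank $r$, and $\Omega_\BR$ is a uniform sample on it; the only missing ingredient is control of $\BR$'s incoherence, which is supplied by Lemma~\ref{COR:UniformIncoherence}. Since $|\cI|\geq 512\beta\kappa^2r^2\mu_1\mu_2\log^2(n)\geq\mu_1 r^2\log^2(n)$, the lemma yields $\mu_{1\BR}\leq 4\kappa^2\mu_1$, $\mu_{2\BR}\leq\mu_2$, and $\kappa_\BR\leq 2\sqrt{\mu_1 r}\kappa$ with failure probability at most $\tfrac{r}{n^{0.4r\log(n)}}$, and the analogous statement holds for $\BC$.

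Plugging these transferred parameters into the completion bound is where the stated sample sizes appear, and I expect the bookkeeping here to be the main obstacle. With $n_1=|\cI|$ and $n_2=n$, the standard incoherence of $\BR$ is at most $4\kappa^2\max\{\mu_1,\mu_2\}$, while its joint incoherence satisfies $(\mu_1^{\mathrm{joint}})^2\lesssim \mu_{1\BR}\mu_{2\BR}\,r\lesssim \kappa^2\mu_1\mu_2 r$, so the driving factor $\max\{(\mu_1^{\mathrm{joint}})^2,\,\mu_0\}\,r$ becomes $\cO(\kappa^2\mu_1\mu_2 r^2)$, matching the prefactor in $|\Omega_\BR|\geq 128\beta\kappa^2 r^2\mu_1\mu_2(n+|\cI|)\log^2(2n)$. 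The completion theorem then certifies that $\BR$ is uniquely determined with failure probability at most $\tfrac{6\log(n)}{(n+|\cI|)^{2\beta-2}}+n^{2-2\sqrt\beta}\leq \tfrac{6\log(n)}{(n+\mu_1 r^2\log^2(n))^{2\beta-2}}+n^{2-2\sqrt\beta}$, where the last step uses $|\cI|\geq\mu_1 r^2\log^2(n)$ and the negativity of the exponent; the statement for $\BC$ is symmetric with $\mu_2$.

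Finally I would assemble the events by a union bound: the two incoherence-transfer failures contribute $\tfrac{2r}{n^{0.4r\log(n)}}$, the two $n^{2-2\sqrt\beta}$ terms contribute $\tfrac{2}{n^{2\sqrt\beta-2}}$, and the two $(n+\mu_i r^2\log^2 n)^{2-2\beta}$ terms contribute the remaining sum, giving exactly the claimed probability. On the complement of these events, $\BR$ and $\BC$ are recovered and $\rank(\BU)=r$, whence $\BX=\BC\BU^\dagger\BR$ is uniquely determined. A secondary point to handle carefully is reconciling the sampling models, since Procedure~\ref{alg:CCS} draws $\cI,\cJ$ and $\Omega_\BR,\Omega_\BC$ with replacement while Lemma~\ref{COR:UniformIncoherence} and the completion theorem are stated for sampling without replacement or Bernoulli sampling; I would bridge this using the with/without-replacement equivalence already noted for the uniform model.
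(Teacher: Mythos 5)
Your proposal is correct and follows essentially the same route as the paper's proof: transfer the incoherence to $\BR$ and $\BC$ via Lemma~\ref{COR:UniformIncoherence}, invoke Recht's uniform-sampling nuclear-norm uniqueness theorem on each submatrix separately, deduce $\rank(\BU)=r$ from the full-rankness of $\BR$ and $\BC$, glue with the CUR identity of Theorem~\ref{thm:cur_iff_condition}, and finish with a union bound. Your closing remark about reconciling with/without-replacement sampling is a fair point of care that the paper itself glosses over, but it does not change the argument.
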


 Theorem~\ref{thm:sufficient_condition} shows a sufficient sample complexity for CCS-based MC is $\cO(r^2n\log^2(n))$. 
Compared to the state-of-the-art uniform-sampling-based MC approach \cite{tong2021accelerating} that requires $\cO(r^2n\log(n))$ samples, our result is merely a factor of $\log(n)$ worse under the same incoherence assumption\footnote{Some works achieve better sample complexities by utilizing additional assumption(s) that we do not use. For example, \cite{recht2011simpler,chen2015incoherence} obtain a sample complexity $\cO(rn\log^2(n))$, but also use the additional assumption: $\|\BW\BV^\top\|_\infty\leq \mu_3 \sqrt{r}/n$ for some constant $\mu_3$.}. Moreover, the same condition also guarantees the uniqueness of the solution. 
Essentially, Theorem~\ref{thm:sufficient_condition} states that concentrating the samples into some properly chosen rows and columns will not blow up the required sampling complexity. Hence, based on the circumstance, one can choose how concentrated the samples are, which can potentially simplify the sampling process in some applications, e.g. as we have discussed in Example~\ref{exp:Netflix}.

\section{A Non-Convex Solver} \label{sec:proposed algo}
\begin{figure}[h!]
\centering
    \subfloat[Groundtruth]{\includegraphics[width=0.33\linewidth]{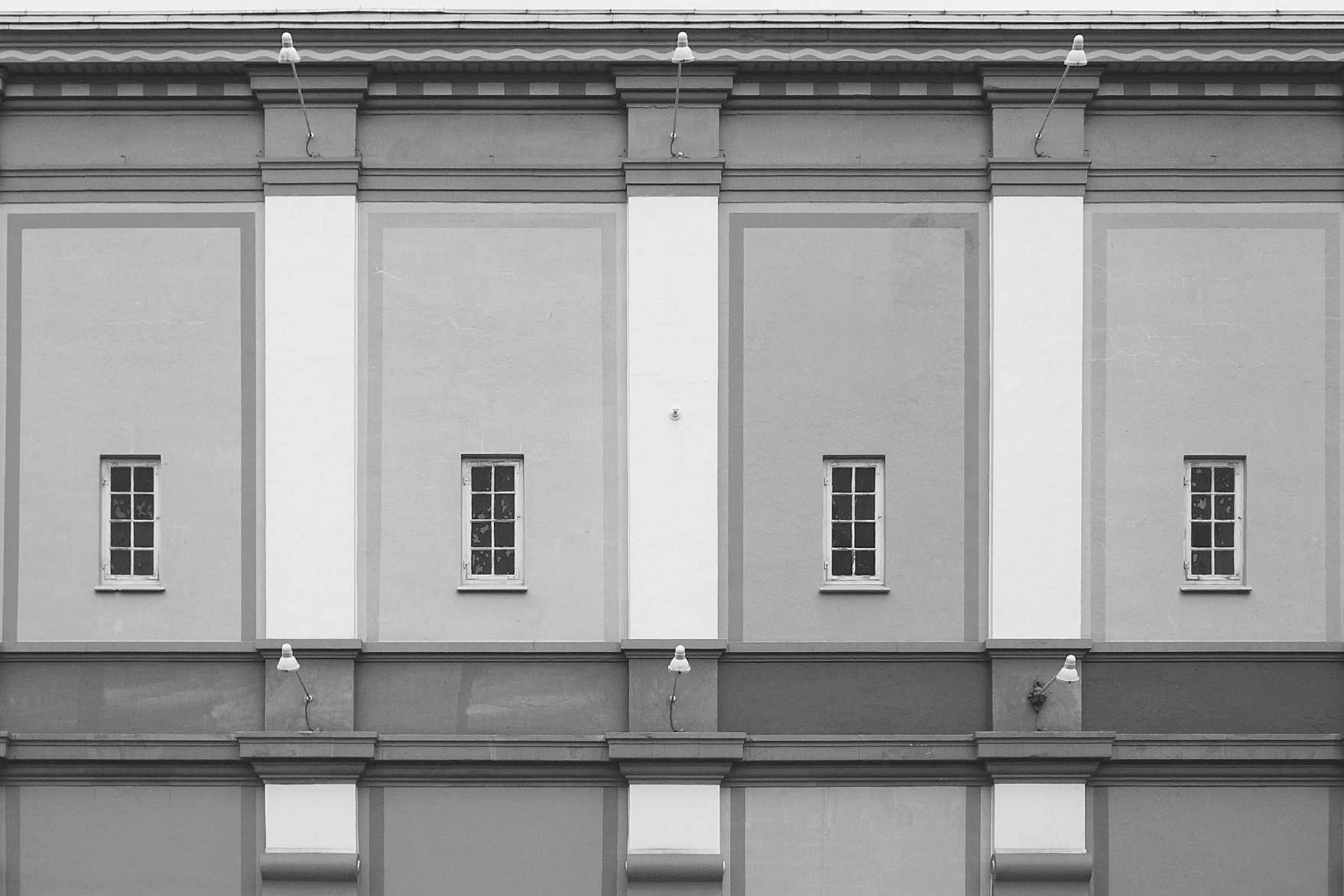}} \hfill
  \subfloat[Observed]{\includegraphics[width=0.33\linewidth]{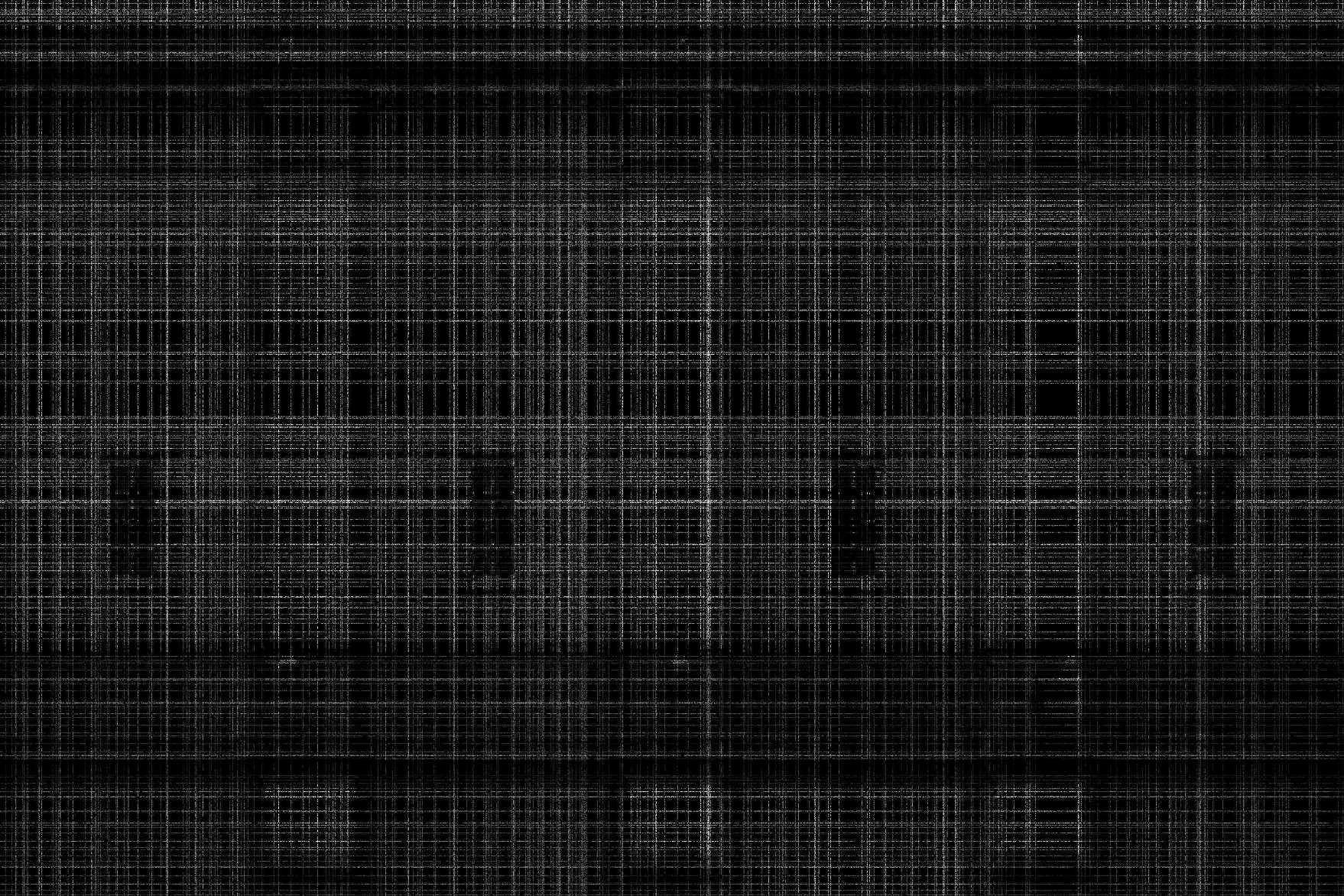}}\hfill
   \subfloat[Reconstruction]{\includegraphics[width=0.33\linewidth]{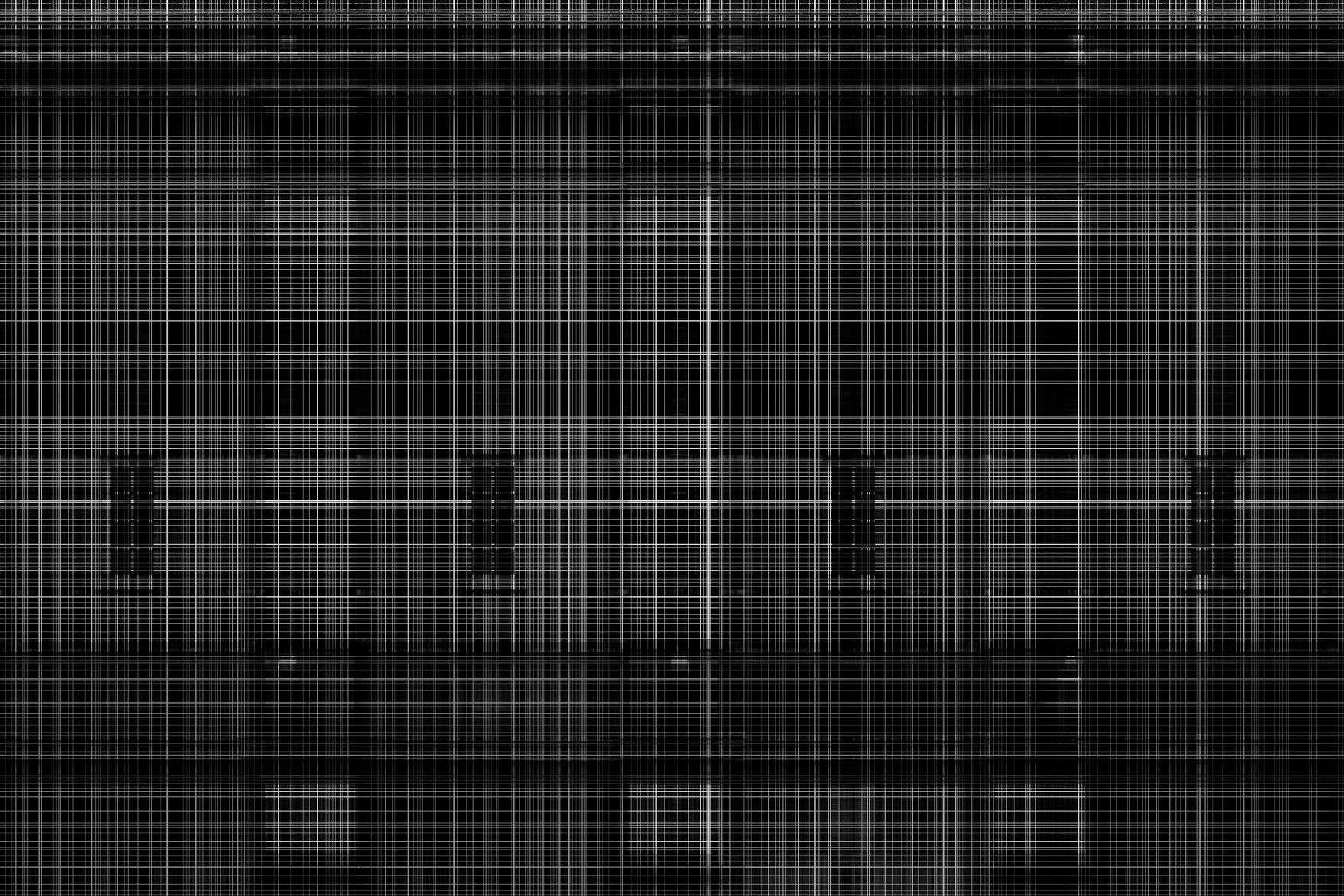}}
    \caption{Visual results for image inpainting  from    the CCS-based  samples   via ScalePGD algorithm \cite{tong2021accelerating}. See a more detailed setting in Section~\ref{subsec:three_image}.
    }
    \label{fig:GMC_on_CCS}
\end{figure}

In this section, we discuss how to effectively and efficiently solve the CCS-based MC problem. First, we consider directly applying the existing uniform-sampling-based MC algorithms. Unfortunately, it turns out that the uniform-sampling-based algorithms are not suitable for the CCS model. For example, as shown in Figure~\ref{fig:GMC_on_CCS}, we apply the state-of-the-art ScaledPGD \cite{tong2021accelerating} on a CCS-based image recovery problem and the result is not visually desirable. 
Therefore, we must develop new algorithm(s) for the proposed CCS model. 

Recall that CCS samples uniformly on the selected row and column submatrices. Thus, a natural and simple approach is solving the MC problem in two steps: 
\begin{enumerate}
    \item Applying certain off-the-shelf MC algorithms to recover the submatrices $\BR$ and $\BC$ separately. Note that $\Omega_\BR$ and $\Omega_\BC$ are uniformly sampled in $\BR$ and $\BC$. Thus, any uniform-sampling-based MC solver will work, provided enough samples.
    \item Applying the standard CUR decomposition, i.e., \eqref{eq:CUR}, to recover $\BX$ from the fully reconstructed $\BR$ and $\BC$.
\end{enumerate}
This approach is named Two-Step Completion (TSC) and is later summarized as Algorithm~\ref{ALG:TS_MC} in Section~\ref{sec:proofs}. However, this algorithm is not utilizing the samples in $\BC$ when recovering $\BR$ and vice versa. Thus, one does not expect TSC to be a highly effective solver for \eqref{eq:CCS-based MC}.

To take full advantage of the CCS structure, we propose a novel non-convex algorithm, coined Iterative CUR Completion (ICURC), for the CCS-based MC problem. ICURC is built upon the framework of projected gradient descent \cite{jain2010guaranteed}. 
Notice that 
\begin{equation}
    \begin{aligned}
        &~\mathbb{E}(\cP_{\Omega_\BR\cup\Omega_{\BC}}(\BX))\\
       = &~\mathbb{E}(\cP_{\Omega_\BR}(\BX))+\mathbb{E}(\cP_{\Omega_{\BC}}(\BX))\\
        =&~p_1\cP_{\cI\times [n] }(\BX)+ p_2\cP_{[n]\times \cJ}(\BX)\\
        =&~p_1\cP_{\cI\times \cJ^c }(\BX)+p_2\cP_{\cI^c\times \cJ}(\BX)+(p_1+p_2)\cP_{\cI\times\cJ}(\BX),
    \end{aligned}
\end{equation}
where $p_1=\frac{|\Omega_{\BR}|}{n|\cI|}$, $p_2=\frac{|\Omega_{\BC}|}{n|\cJ|}$, $\cJ^c=[n]\setminus\cJ:=\{j\in[n]:j\not\in \cJ\}$, and $\cI^c=[n]\setminus\cI:=\{i\in[n]:i\not\in \cI\}$. Therefore, in each iteration, we  divide the updating of  $\BR$ and  $\BC$ into three part: $[\BR]_{:,\cJ^c}$, $[\BC]_{\cI^c,:}$, and $\BU$. Additionally, to enforce the rank constrain on $\BX$, we project the updated $\BU$ to its best rank-$r$ approximation.  Specifically, we perform a step of gradient descent on $[\BR]_{:,\cJ^c}$, $[\BC]_{\cI^c,:}$, and $\BU$    via the following formula:
\begin{equation}\label{eq:update RC}
\begin{aligned}
     [\BR_{k+1}]_{:,\cJ^{c}}&:=[\BX_k]_{\cI,\cJ^c}+\eta_R[\cP_{\Omega_\BR}(\BX-\BX_k)]_{\cI,\cJ^c}, \\
     [\BC_{k+1}]_{\cI^{c},:}&:=[\BX_k]_{\cI^c,\cJ}+\eta_C[\cP_{\Omega_\BC}(\BX-\BX_k)]_{\cI^c,\cJ},
     \end{aligned}
\end{equation}
and
\begin{equation}\label{eq:update U}
\begin{aligned}
     \BU_{k+1}&:=\cH_r\left([\BX_{k}]_{\cI,\cJ}+\eta_U[\cP_{\Omega_\BR\cup\Omega_{\BC}}(\BX-\BX_k)]_{\cI,\cJ}  \right),
\end{aligned}
\end{equation}
where $\eta_R, \eta_C, \eta_U>0$ are the step sizes, and $\cH_r$ is the rank-$r$ truncated SVD operator. We then set  $[\BR_{k+1}]_{:,\cJ}=[\BC_{k+1}]_{\cI,:}=\BU_{k+1}$.  Hence, updated via CUR decomposition, the approximated data matrix 
\begin{equation} \label{eq:update X}
    \BX_{k+1}=\BC_{k+1}\BU_{k+1}^\dagger\BR_{k+1}
\end{equation}
is also rank-$r$. Although CUR decomposition is not the most accurate low-rank approximation, it is close enough for our iterative algorithm.

With the initial guess $\BX_0=\bm{0}$, ICURC runs the above steps iteratively until convergence. In particular, we set the stopping criterion to be $e_k\leq \varepsilon$ where 
\begin{equation} \label{eq:e_k}
    e_k=\frac{\langle\cP_{\Omega_\BR\cup\Omega_\BC}(\BX-\BX_k),\BX-\BX_k\rangle}{\langle\cP_{\Omega_\BR\cup\Omega_\BC}(\BX),\BX\rangle}
\end{equation}
and $\varepsilon$ is the targeted accuracy. 
We summarize the proposed non-convex algorithm as Algorithm~\ref{Algo:ICURC}.

\floatname{algorithm}{Algorithm}
\begin{algorithm}[h]
\caption{Iterative CUR Completion (ICURC) for CCS} \label{Algo:ICURC}
\begin{algorithmic}[1]
\State \textbf{Input:} 
$[\BX]_{\Omega_{\BR}\cup\Omega_{\BC}}$: observed data; 
$\Omega_\BR, \Omega_\BC$: observation locations; 
$\cI,\cJ$: row and column indices that define $\BR$ and $\BC$ respectively; $\eta_R, \eta_C, \eta_U$: step sizes; 
$r$: target rank;
$\varepsilon$: target precision level. 

\State $\BX_0=\bm{0}$
\State $\cI^{c}=[n]\setminus\cI$, $\cJ^{c}=[n]\setminus\cJ$, $k=0$

\While {$e_k > \varepsilon$} \textcolor{officegreen}{\algorithmiccomment{$e_k$ is defined in \eqref{eq:e_k}}}
    \State  $[\BR_{k+1}]_{:,\cJ^{c}}=[\BX_k]_{\cI,\cJ^c}+\eta_R[\cP_{\Omega_\BR}(\BX-\BX_k)]_{\cI,\cJ^c}$
    \State $[\BC_{k+1}]_{\cI^{c},:}=[\BX_k]_{\cI^c,\cJ}+\eta_C[\cP_{\Omega_\BC}(\BX-\BX_k)]_{\cI^c,\cJ}$
    \State $\BU_{k+1}=\cH_r\left([\BX_{k}]_{\cI,\cJ}+\eta_U[\cP_{\Omega_\BR\cup\Omega_{\BC}}(\BX-\BX_k)]_{\cI,\cJ}  \right)$
       \State $[\BR_{k+1}]_{:,\cJ}=\BU_{k+1}$
       \State $[\BC_{k+1}]_{\cI,:}=\BU_{k+1}$
    \State $\BX_{k+1}$ = $\BC_{k+1}\BU_{k+1}^{\dagger}\BR_{k+1}$
    \textcolor{officegreen}{\algorithmiccomment{Do not compute, see \eqref{eq:fast compute}}}
\State $k = k + 1$
\EndWhile
\State \textbf{Output:} 
$\BC_k,\BU_{k},\BR_k$: CUR components of $\BX$. 
\end{algorithmic}
\end{algorithm}

\begin{remark} \label{remark:step size}
Inspired by \cite[Theorem~7]{recht2011simpler}, we recommend the step size $\eta_R=\frac{1}{p_1}$, $\eta_C = \frac{1}{p_2}$, and $\eta_U=\frac{1}{p_1+p_2}$
for Algorithm~\ref{Algo:ICURC}, where $p_1$ and $p_2$ are the observation rates of $\Omega_\BR$ and $\Omega_\BC$, respectively. 
\end{remark}

\subsection{Efficient Implementation}
We provide the implementation details and the breakdown of the computational costs for Algorithm~\ref{Algo:ICURC}. 
The steps of updating $[\BR_{k+1}]_{\cI,\cJ^c}$ and $[\BC_{k+1}]_{\cI^c,\cJ}$, i.e., \eqref{eq:update RC}, cost $\cO(|\Omega_\BR|+|\Omega_\BC|-|\Omega_{\BU}|)$ flops as we only update the observed locations. Note that the intersection matrix $\BU$ is ${|\cI|\times|\cJ|}$. Thus, in \eqref{eq:update U} and \eqref{eq:update X}, computing the truncated SVD and pseudo-inverse of $\BU_{k+1}$ costs $\cO(r|\cI||\cJ|)$ flops. Calculating $\BX_{k+1}$ in \eqref{eq:update X} seems expensive at first sight; fortunately, we do not actually have to form the whole $n\times n$ matrix. Looking back at \eqref{eq:update RC} and \eqref{eq:update U}, we only use the selected rows and columns from $\BX_k$ and they can be efficiently obtained via
\begin{equation} \label{eq:fast compute}
    \begin{split}
        [\BX_k]_{\cI,\cJ^c}&= [\BC_k]_{\cI,:}\BU_k^\dagger[\BR_k]_{:,\cJ^c}=\BU_k\BU_k^\dagger[\BR_k]_{:,\cJ^c},\\
        [\BX_k]_{\cI^c,\cJ}&= [\BC_k]_{\cI^c,:}\BU_k^\dagger[\BR_k]_{:,\cJ}=[\BC_k]_{\cI^c,:}\BU_k^\dagger\BU_k,\\
        [\BX_k]_{\cI,\cJ}&=[\BC_k]_{\cI,:}\BU_k^\dagger[\BR_k]_{:,\cJ}=\BU_k.
    \end{split}
\end{equation}
The computational complexity of \eqref{eq:fast compute} is $\cO(nr(|\cI|+|\cJ|))$. Finally, computing the stopping criterion  $e_k$, i.e., \eqref{eq:e_k}, costs $\cO(|\Omega_\BR|+|\Omega_\BC|)$ flops. 

Overall, ICURC costs total $\cO(nr(|\cI|+|\cJ|))$  flops per iteration. 
Moreover, \eqref{eq:fast compute} also suggests that we only need to pass the updated CUR components (i.e., $[\BR_k]_{:,\cJ^c}$, $[\BC_k]_{\cI^c,:}$, and $\BU_k$) to the next iteration instead of the much larger $\BX_{k}$. Hence, we conclude that ICURC is computationally and memory efficient provided $|\cI|, |\cJ|\ll n$. 

\begin{remark}\label{rmk:linear cong}Empirically, we observe that ICURC achieves a linear convergence rate when the samples are concentrated on a smaller collection of rows and columns. For the interested reader, we include several numerical results for ICURC's convergence behavior in the appendix.
\end{remark}

\section{Numerical Experiments} \label{sec:numerical}

 In this section, we compare the empirical performance of our CCS-model-based ICURC, i.e., Algorithm~\ref{Algo:ICURC}, against the state-of-the-art algorithms (ScaledPGD \cite{tong2021accelerating} and SVP \cite{jain2010guaranteed}) based on uniform sampling method. The related codes are provided in \url{https://github.com/huangl3/CCS-ICURC}.  
All experiments are implemented on Matlab R2020a and executed on a Linux workstation equipped with Intel i9-9940X CPU (3.3GHz @ 14 cores) and 128GB DDR4 RAM. We emphasize again that our approach is computationally and memory efficient. All our experiments can be reproduced with a basic laptop.

\subsection{Synthetic Examples} \label{subsec:synthetic}

In matrix completion, an important question is how many measurements are needed for an algorithm to ensure a reliable reconstruction of a low-rank matrix. Thus, we investigate the recoverability of the ICURC algorithm based on the CCS model in the framework of phase transitions: 
  \begin{enumerate}
      \item   Phase transition that studies the  required  sampling rate over the whole matrix  based on  different sizes of the selected row and column submatrices and different uniform sampling rates on the selected submatrices.
      \item   Phase transition that explores  the required measurements over different sizes of the original problem.
  \end{enumerate}
On the synthetic simulation, we only consider square problems and consider a low-rank matrix $\BX\in\mathbb{R}^{n\times n}$ of rank $r$. Thus, we sample the same number of rows and columns in the following simulations, i.e., $|\cI|=|\cJ|$.  
{In comparison, we have also compared our results with that of SVP and ScaledPGD based on a uniform sampling model.}  

\subsubsection{Empirical Phase Transition}
\begin{figure*}[th]
    \centering

    \subfloat{\includegraphics[width=0.333\textwidth]{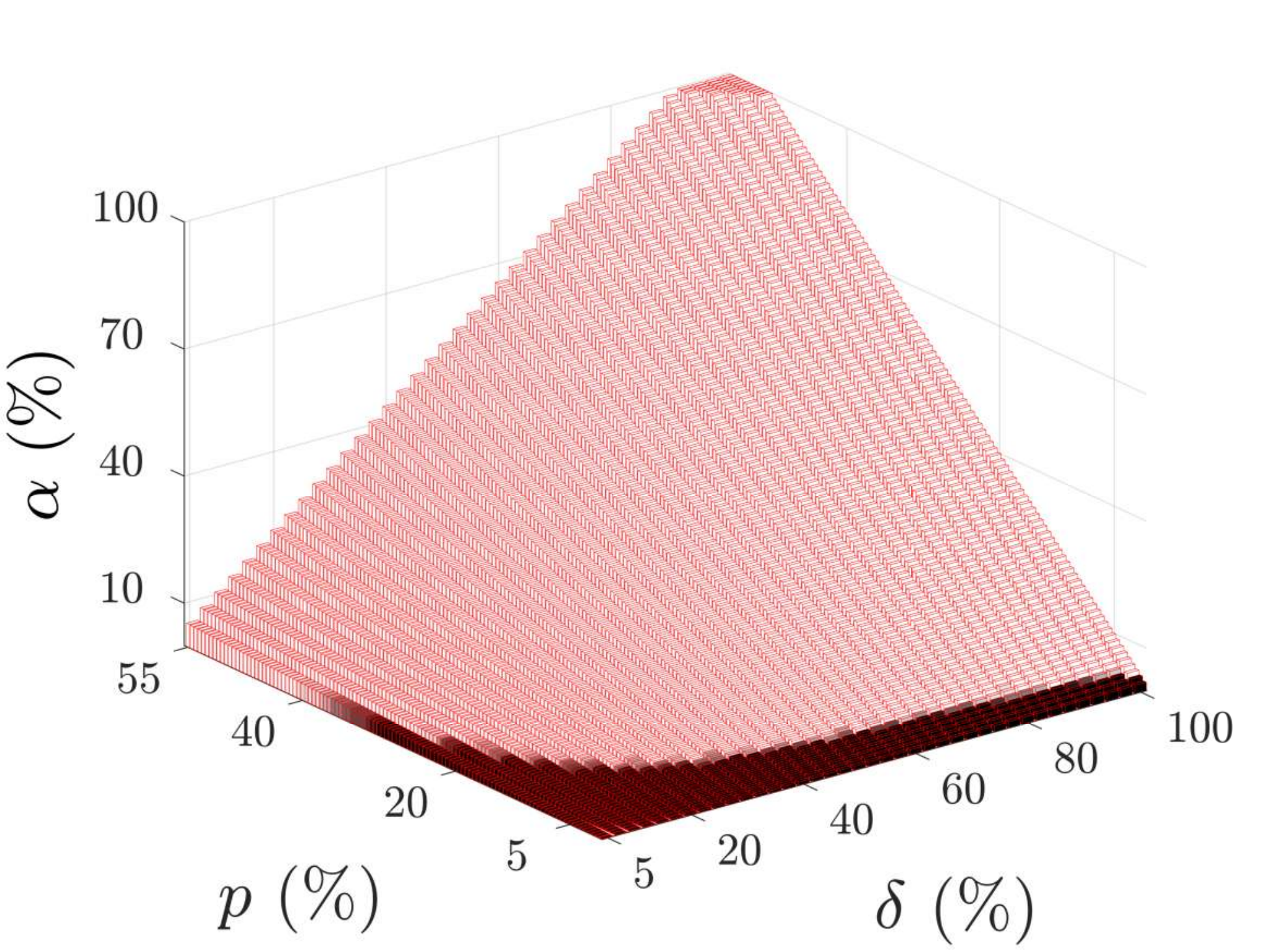}}
    \subfloat{\includegraphics[width=0.333\textwidth]{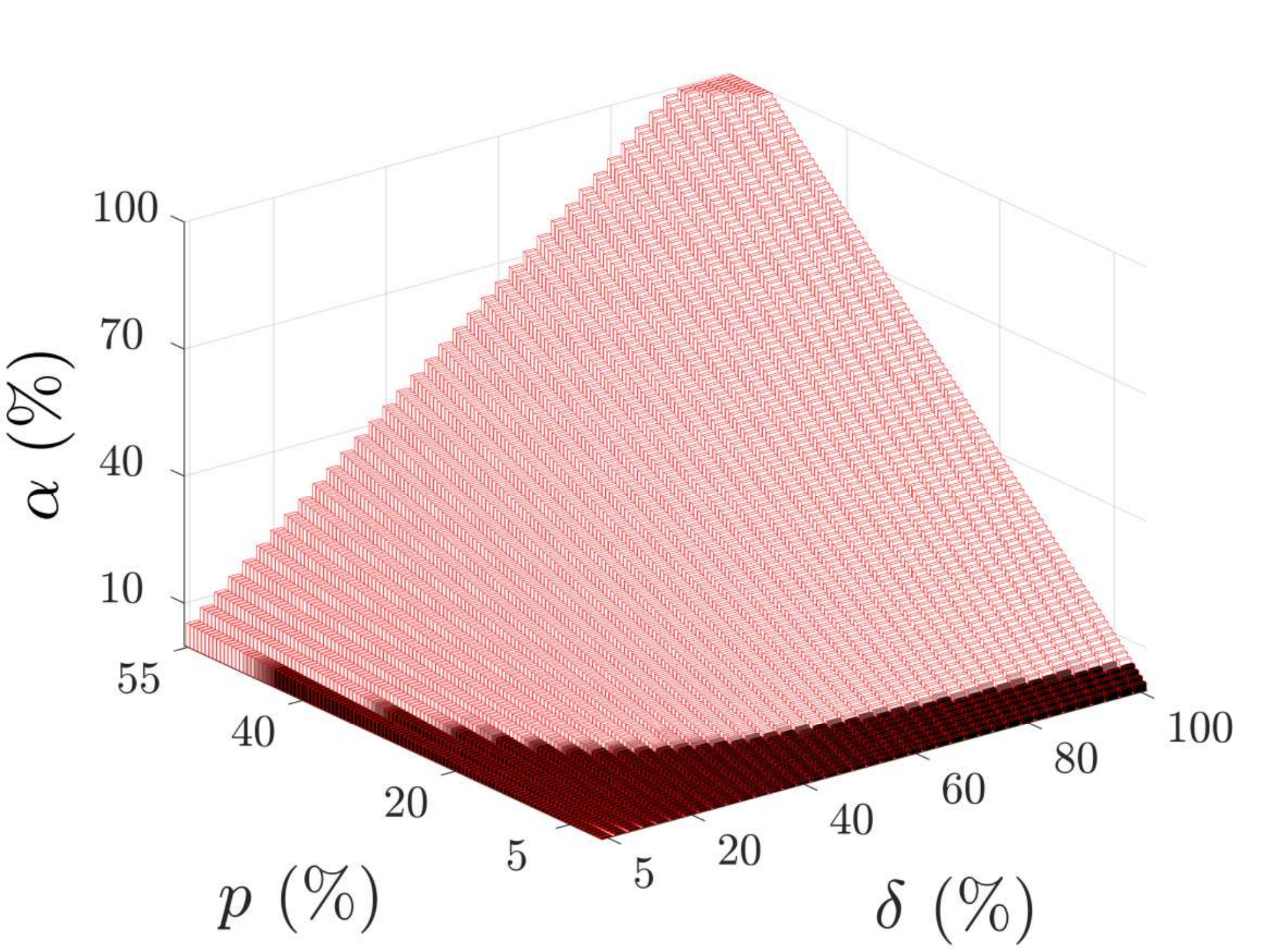}}
    \subfloat{\includegraphics[width=0.333\textwidth]{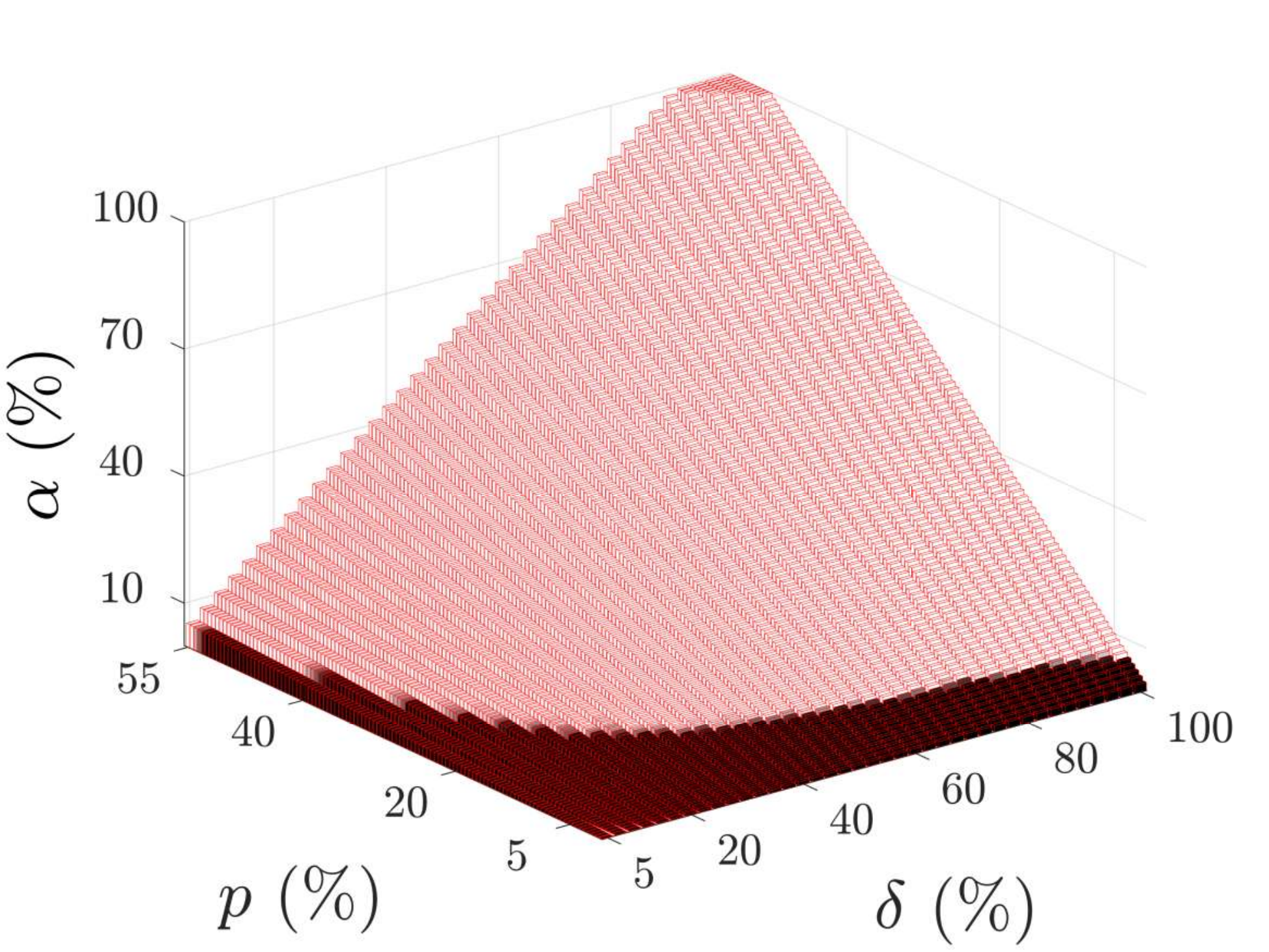}} \\
    \subfloat{\includegraphics[width=0.333\textwidth]{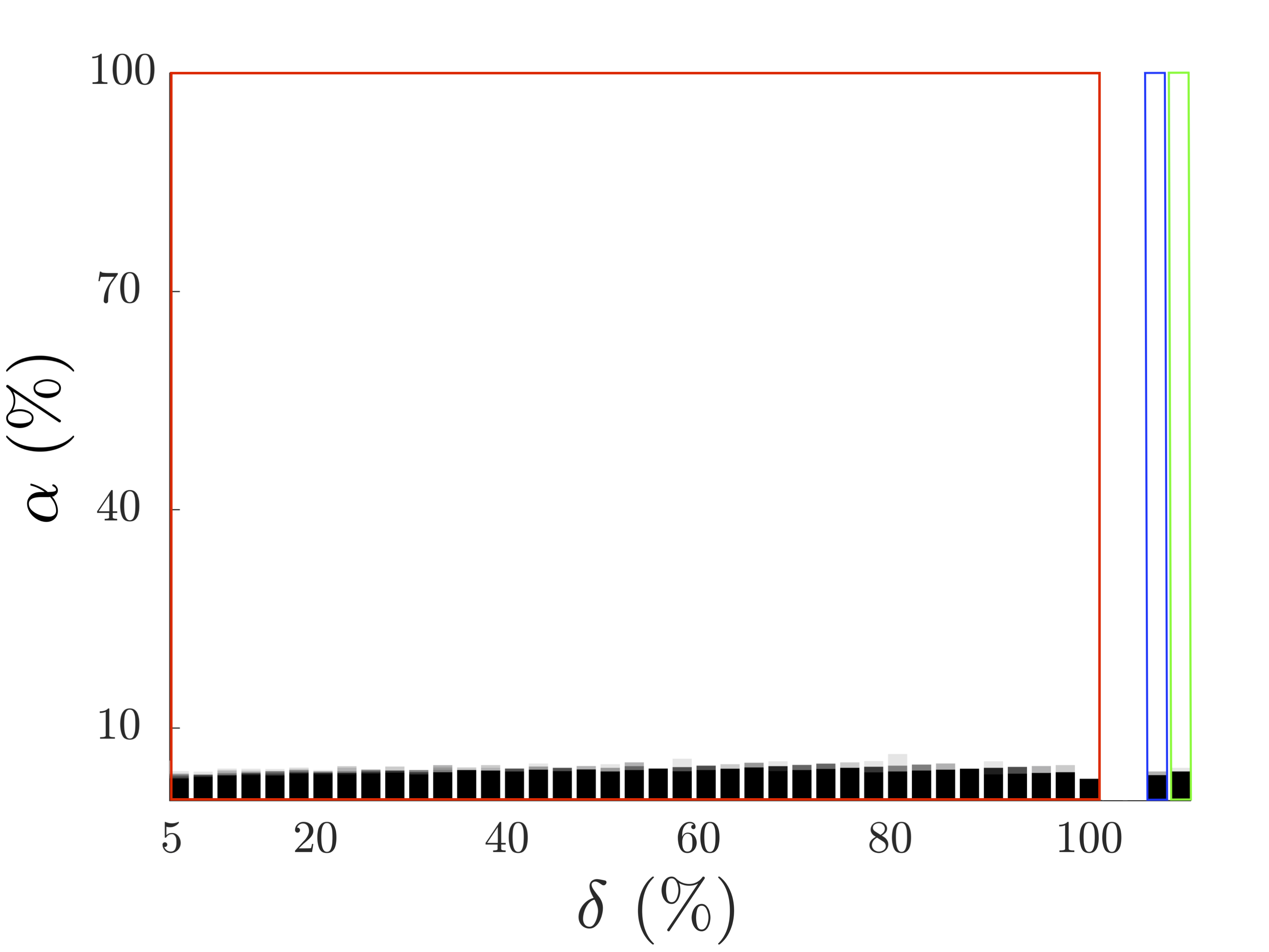}}
    \subfloat{\includegraphics[width=0.333\textwidth]{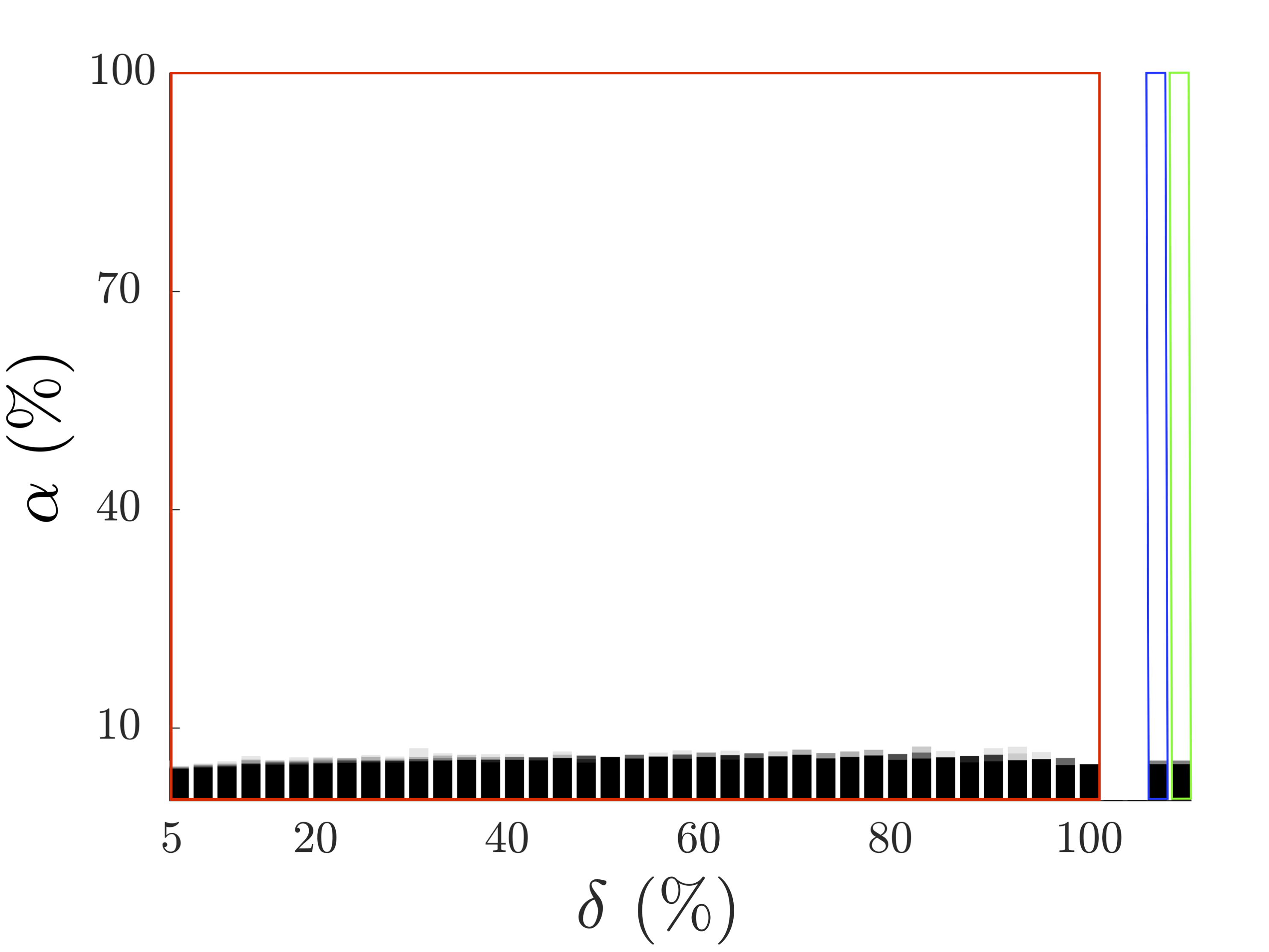}}
    \subfloat{\includegraphics[width=0.333\textwidth]{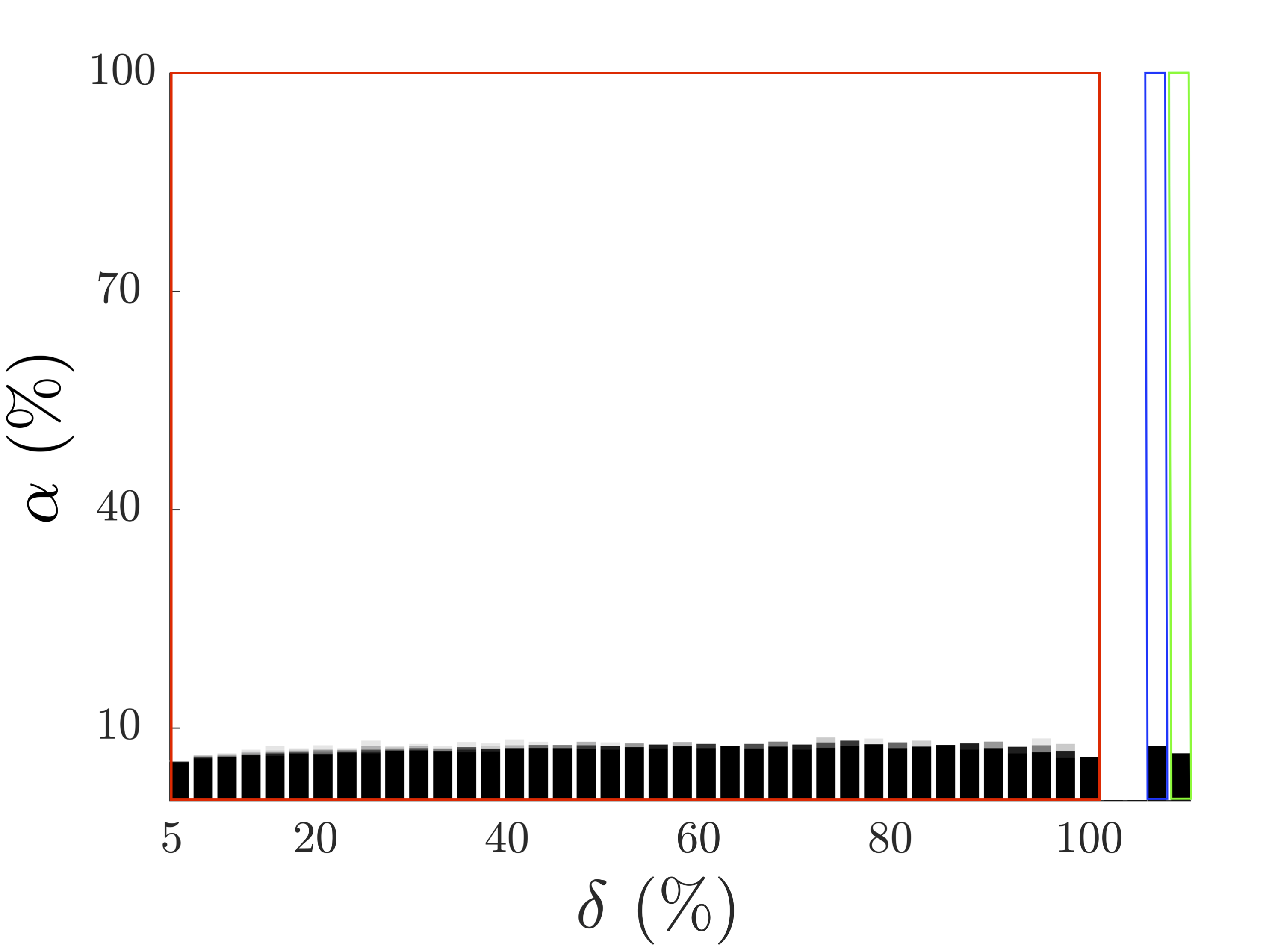}} 
 \\
     \subfloat{\includegraphics[width=0.36\textwidth]{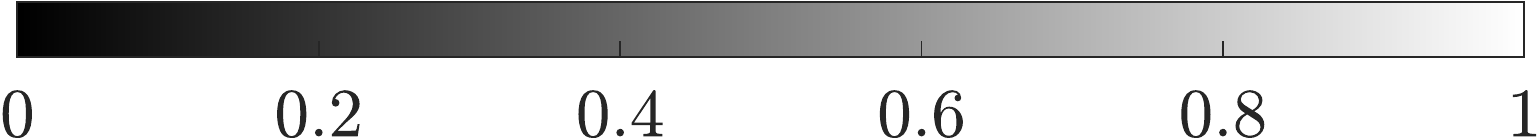}} 
      \caption{Empirical phase transition in the overall sampling rate $\alpha$, the percentage of selected rows and columns $\delta$ , and uniform sampling rates on the selected submatrices $p$. 
     {\textbf{Row 1}: 3D-view of the empirical phase transition of ICURC. 
     \textbf{Row 2}: 2D view of empirical phase transition of ICURC (in the red box), ScaledPGD (in the blue box), and SVP (in the green box).  \textbf{Left}: $r=5$. \textbf{Middle}: $r=10$. \textbf{Right}: $r=15$. One can see that as rank increases, the required overall sampling rate increases correspondingly. Additionally, the CCS model provides flexibility in obtaining a sufficient amount of data to ensure completing the missing data  successfully  and the  performance of the ICURC algorithm from the CCS-based samples is comparable to that of the state-of-the-art algorithms (SVP and ScaledPGD) from the   uniform-sampling-based samples. 
      }
      }
    \label{fig:both_phtr}
\end{figure*}

\begin{figure*}[th]
    \centering
    \includegraphics[width=0.28\textwidth]{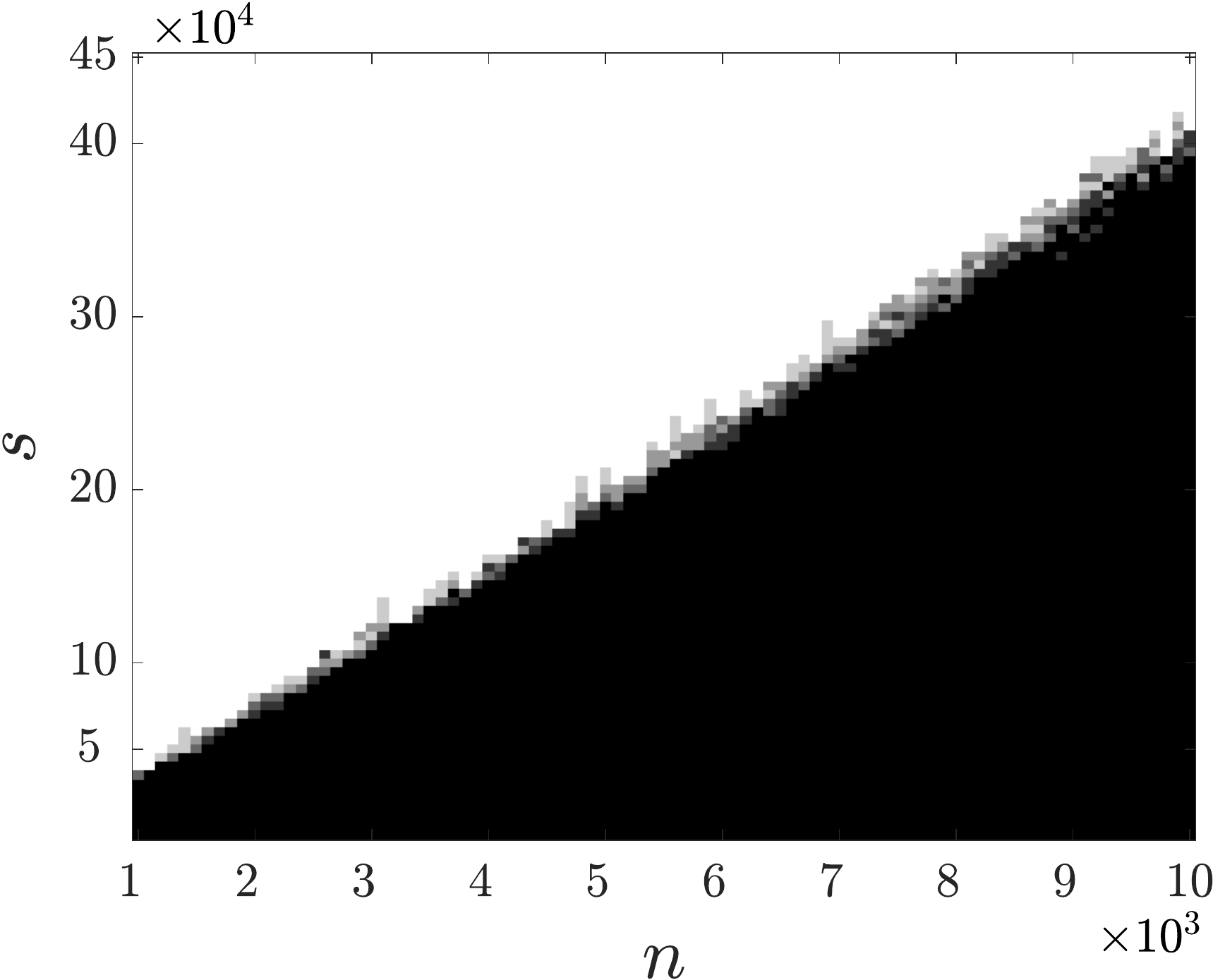}\qquad\quad
    \includegraphics[width=0.28\textwidth]{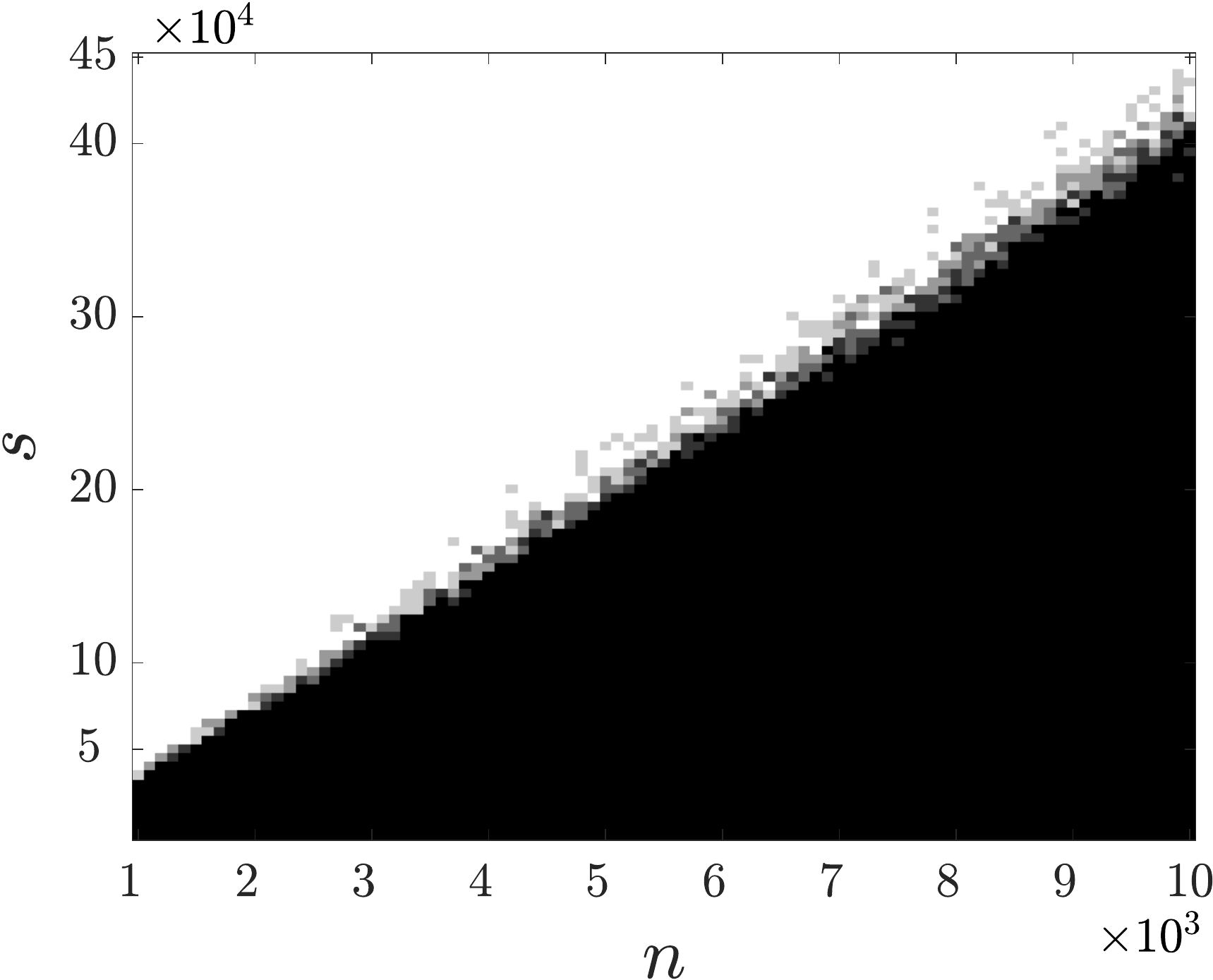} \qquad\quad
    \includegraphics[width=0.28\textwidth]{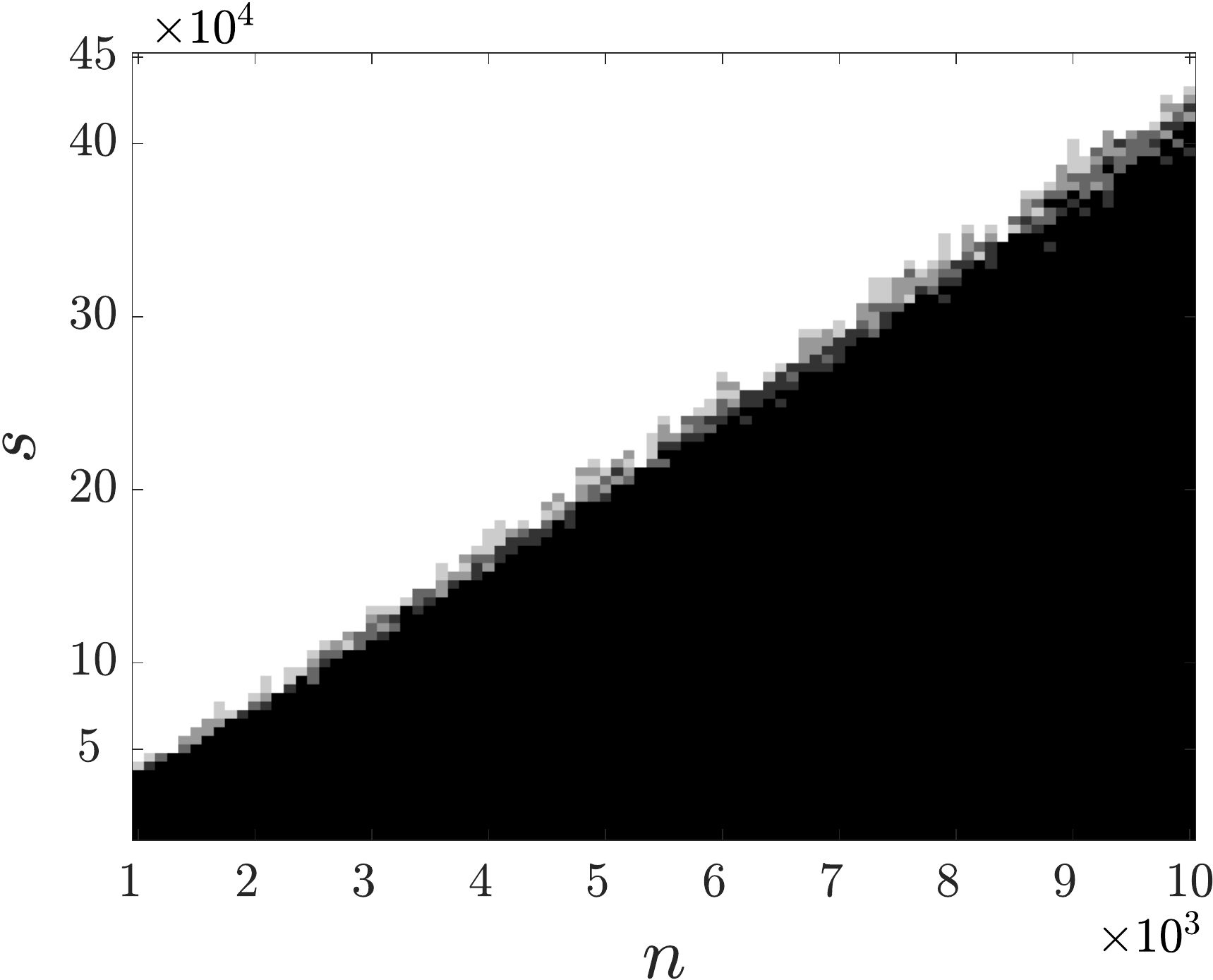}     
    \\
    
    \vspace{3mm}
    
    \includegraphics[width=0.28\textwidth]{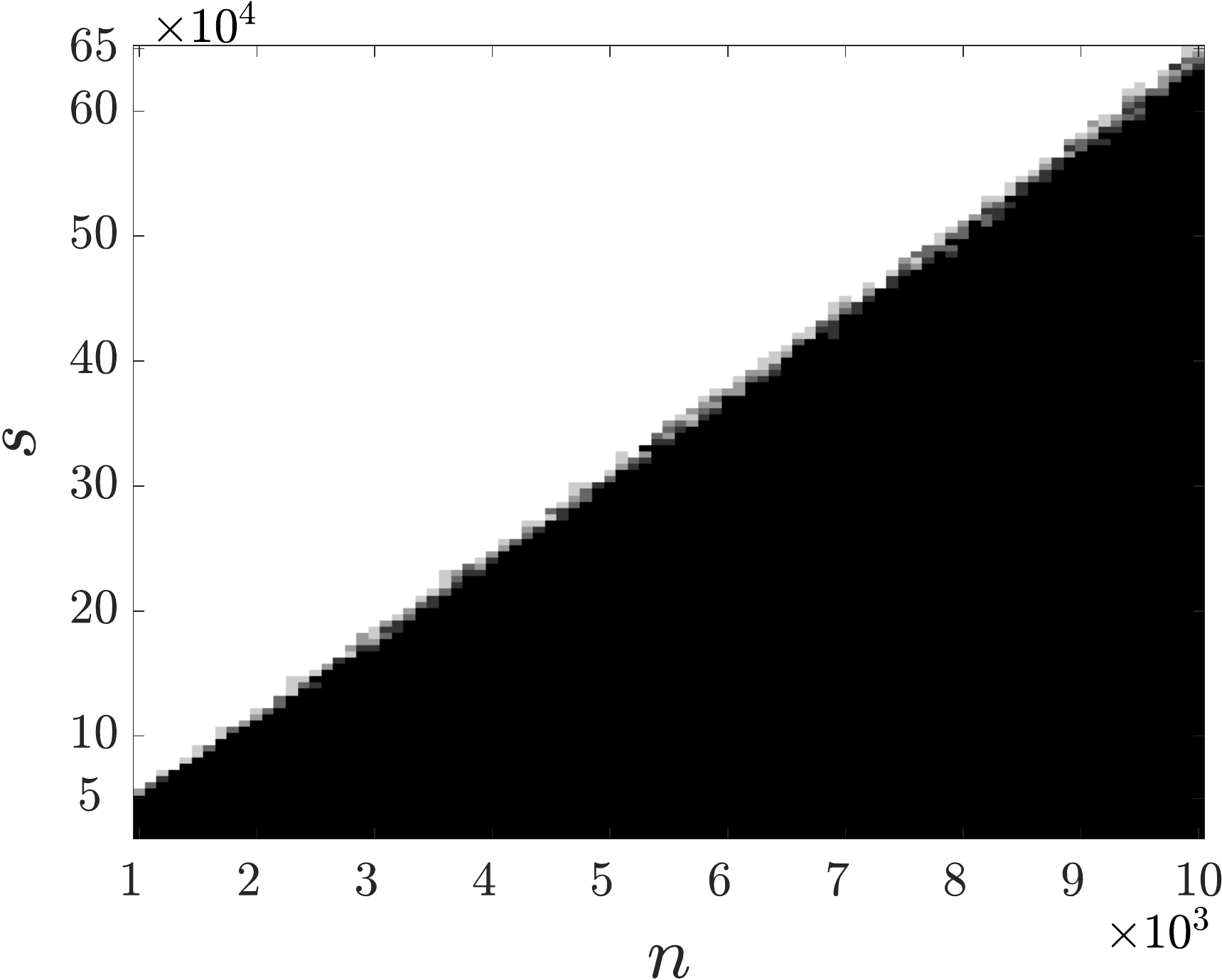}\qquad\quad
    \includegraphics[width=0.28\textwidth]{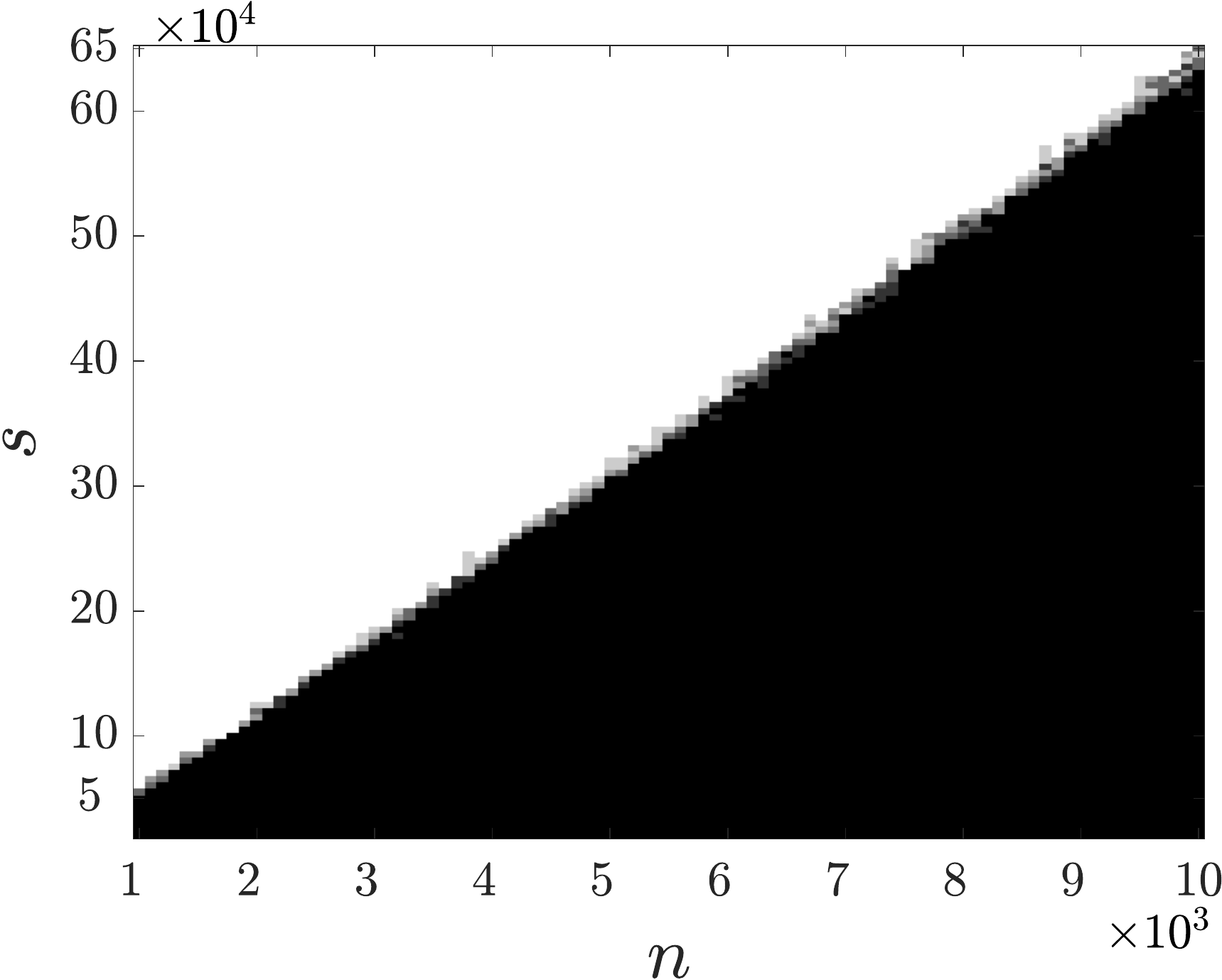}\qquad\quad
    \includegraphics[width=0.28\textwidth]{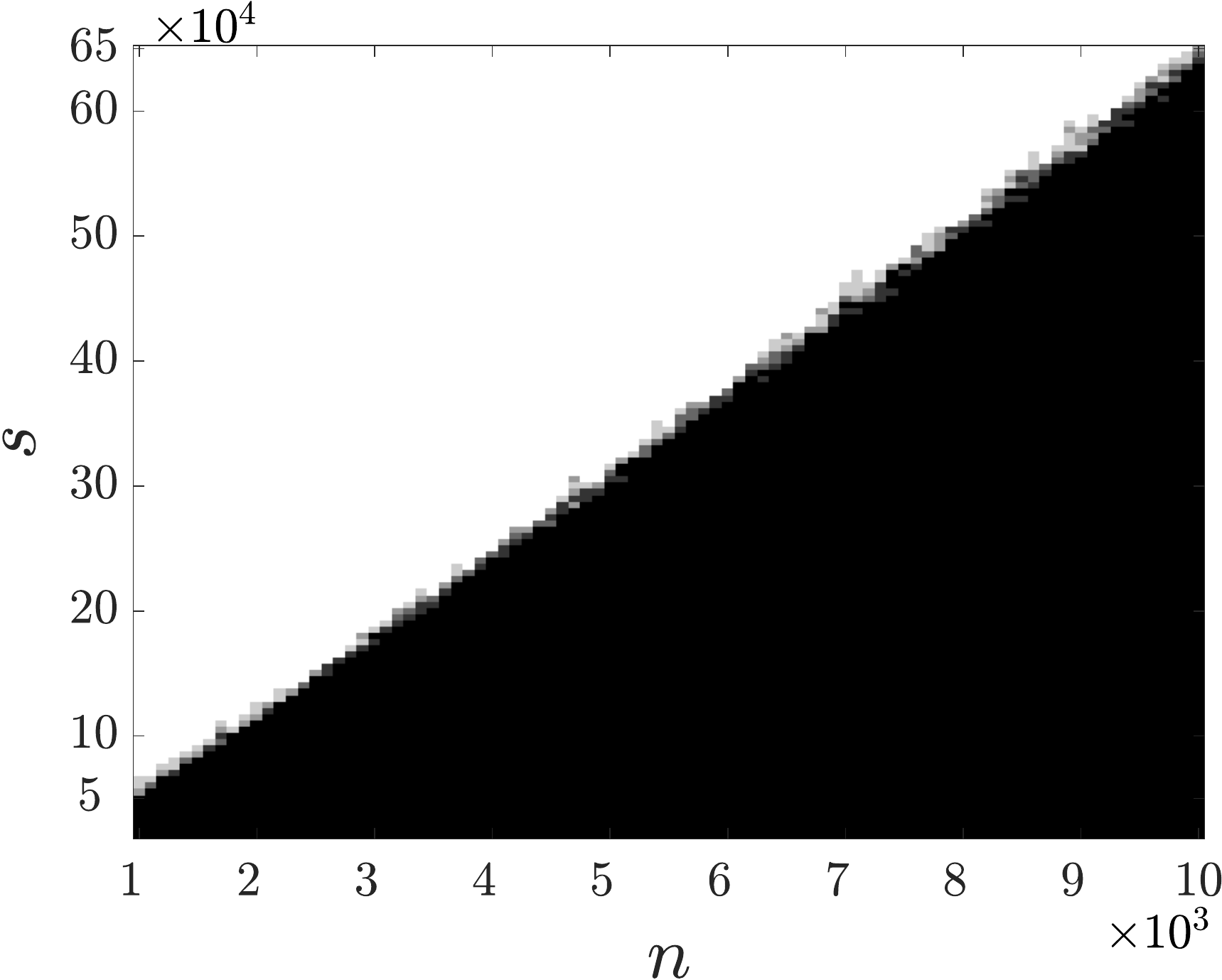}
    
    \vspace{3mm}
    
    \includegraphics[width=0.28\textwidth]{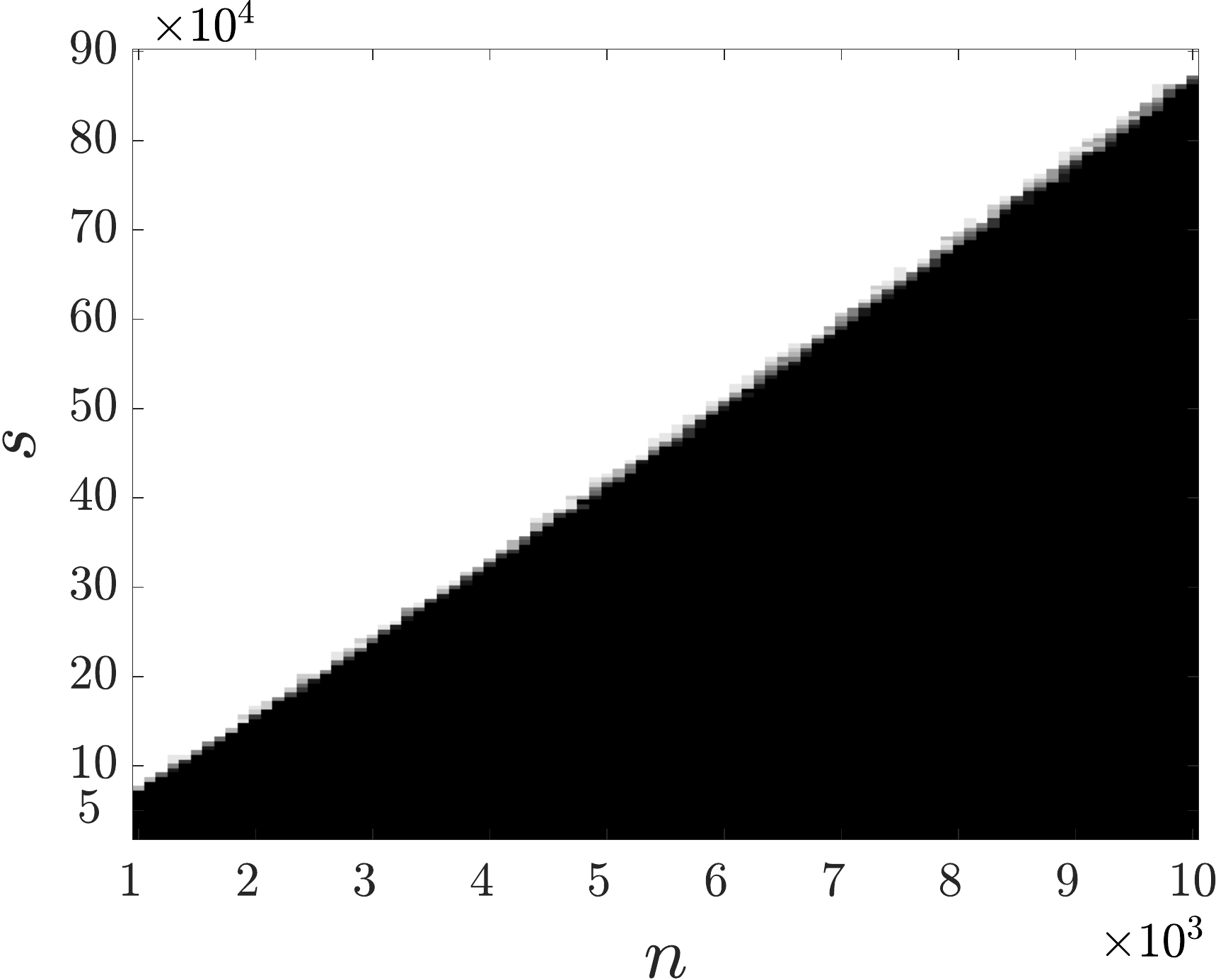}\qquad\quad
    \includegraphics[width=0.28\textwidth]{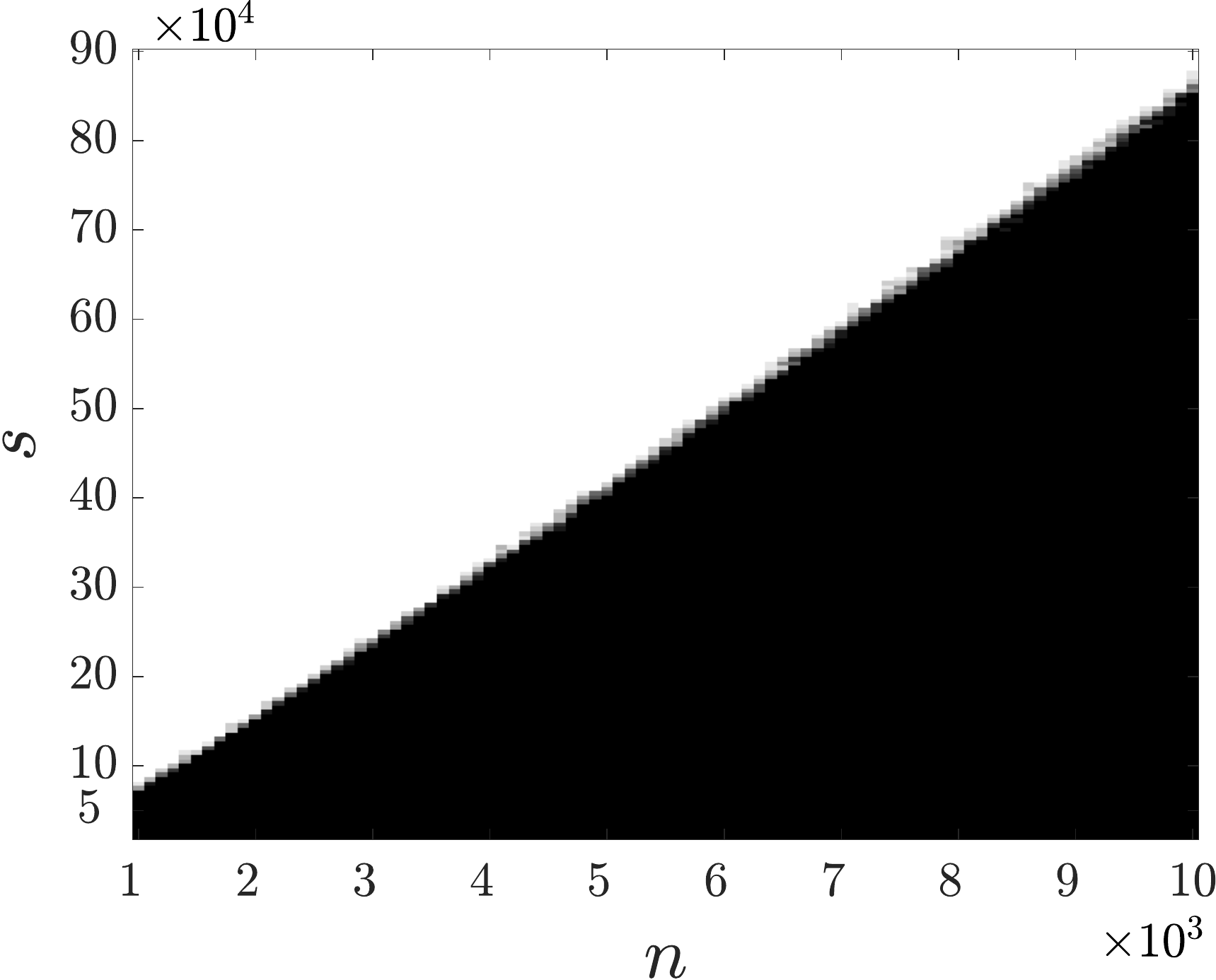}\qquad\quad
    \includegraphics[width=0.28\textwidth]{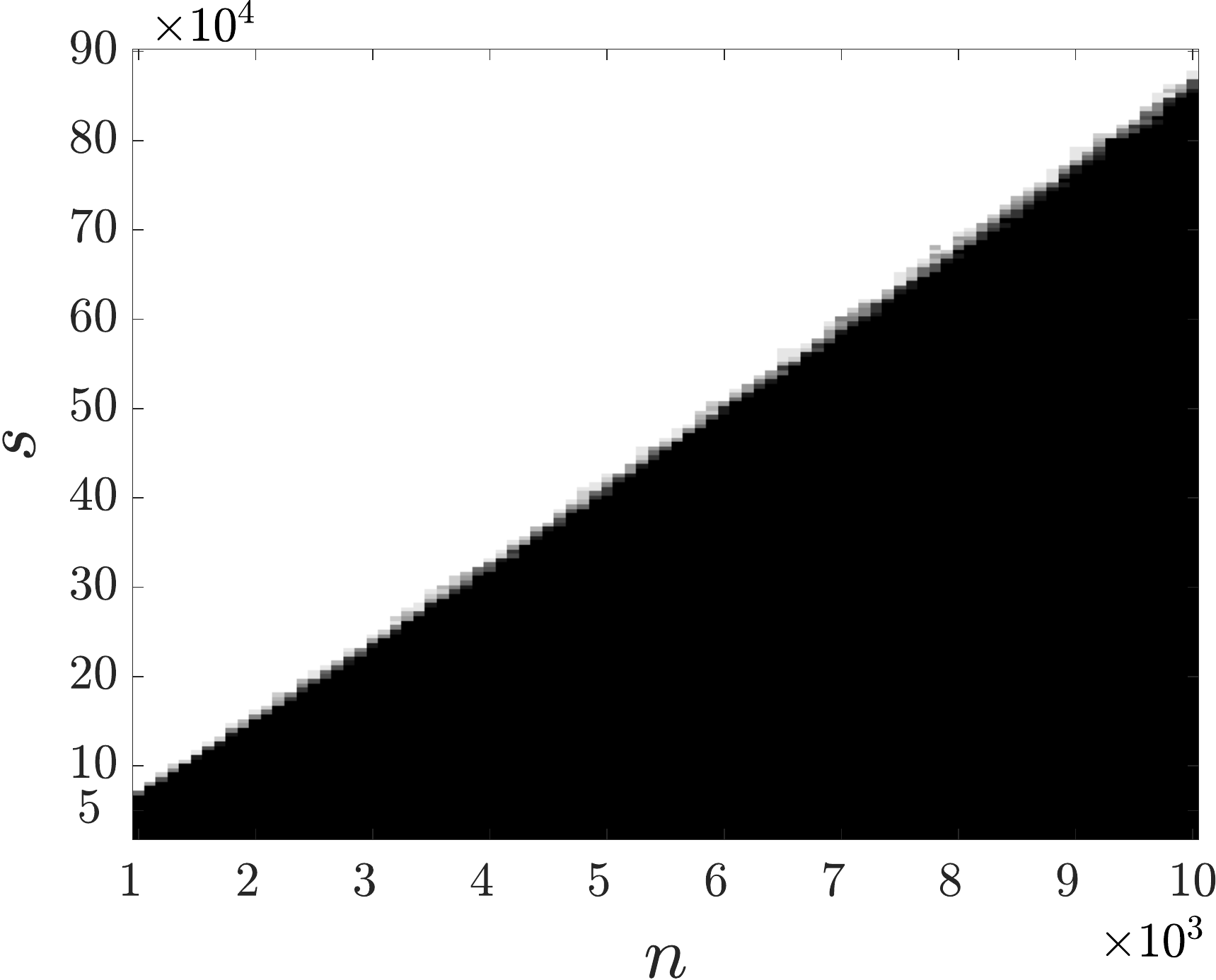}

      \caption{
      Empirical phase transitions of ICURC in overall observation size $s$ and problem size $n$. The column (resp. row) number of the concentrated column (resp. row) submatrix equals to $cr\log^2(n)$. \textbf{Row 1}: $r = 5$. \textbf{Row 2}: $r = 10$. \textbf{Row 3}: $r = 15$. \textbf{Left}: $c = 0.25$. \textbf{Middle}: $c = 0.5$. \textbf{Right}: $c = 1$. The required samples for guaranteed matrix completion are independent of the size of the concentrated submatrices.} 
    \label{fig:both_phtr-2}
\end{figure*}

First, we explore the recovery ability of ICURC for CCS with different combinations of the sampling number of rows and columns with $|\cI|=|\cJ|=\delta n$ and the uniform sampling rate $p$ on the selected rows and columns. This experiment runs on the matrix of  size $10^{3}\times 10^{3}$ and under different rank settings with rank $r \in \{5,10,15\}$. For each  rank,  we generate $20$ test examples for each given  pair of $(\delta,p)$ and an example is considered to be successfully  solved if the relative error
\begin{equation}\label{eqn:varepsilonk}
\varepsilon_k:= \frac{\|\BX-\BC_k\BU_k^\dagger \BR_k\|_\fro}{\|\BX\|_\fro}\leq 10^{-2}.  
\end{equation}

These simulation results are summarized in Figure~\ref{fig:both_phtr}, where the first row presents the 3D view of the phase transition results and the second row shows the corresponding 2D view results by adding the uniform sampling results in the last two columns.  In the 3D result, the $z$-axis stands for the overall sampling rate over the whole matrix. In Figure~\ref{fig:both_phtr}, a white pixel represents the successful completion of these $20$ tests and a black pixel means all the $20$ tests are failed. From these figures, one can see that as the rank $r$ increases, the required overall sampling rate becomes larger to guarantee successful completion since a larger rank $r$ leads to a more difficult problem. Moreover,  regardless of the combinations of the sizes of the concentrated row and column submatrices and the sampling rates on the selected submatrices, we guarantee matrix completion  
as long as the combinations result in a sufficiently large total sampling rate (see the second row of Figure~\ref{fig:both_phtr}). This observation shows that CCS provides the flexibility to sample low-rank matrix and still ensures the successful completion of the underlying low-rank matrix from its samples. From the second row of Figure~\ref{fig:both_phtr}, one can see that the required sampling rates for ICURC on the CCS model are comparable with that for the ScaledPGD \cite{tong2021accelerating} and the SVP \cite{jain2010guaranteed} algorithms on the uniform sampling.

 We also investigate the recovery ability of our ICURC for CCS in the framework of phase transition by studying the relation between the required measurements of the underlying low-rank matrices and their size $n$. The results are reported in Figure~\ref{fig:both_phtr-2}.
 Therein, we first uniformly sample a row submatrix $\BR\in\mathbb{R}^{cr\log^2(n)\times n}$ and a column submatrix $\BC\in\mathbb{R}^{n\times cr\log^2(n)}$, where $r \in \{5,10, 15\}$ and $c \in \{0.25,0.5,1\}$. Then, we uniformly sample $s/2$ entries on each submatrix, i.e., $s$ samples in total and the intersection submatrix $\BU$ has a denser sampling rate than $\BR$ and $\BC$. Similar to Figure~\ref{fig:both_phtr}, we generate $20$ problems for each given pair of $(n,s)$ under each setting of $(r,c)$,  and a problem is considered to be successfully solved if $\varepsilon_k\leq 10^{-2}$ (recall $\varepsilon_k$ is defined in \eqref{eqn:varepsilonk}). From Figure~\ref{fig:both_phtr-2}, one can see that the overall required samples to guarantee the recovery of the missing data is independent of the size of the concentrated submatrices. These observations further illustrate the flexibility of our CCS model.

\subsection{Image Recovery} \label{subsec:three_image}
\begin{table*}[h]
\centering
\caption{Image inpainting results on Building and Window datasets.  Under various setups of CCS, ICURC can achieve higher SNR with shorter runtime compared with other methods. 
}
\begin{tabular}{cc|ccc|ccc}
\toprule
\multicolumn{2}{c|}{\textsc{Dataset}}                                     & \multicolumn{3}{c|}{{Building}}                                                                                                                                 & \multicolumn{3}{c}{Window}                                                                      \\ \midrule
\multicolumn{2}{c|}{\textsc{Overall Observation Rate} ($\alpha$)}                             & \multicolumn{1}{c}{10 \%}            & \multicolumn{1}{c}{12 \%}            & 14 \%                     & \multicolumn{1}{c}{10 \%}            & \multicolumn{1}{c}{12 \%}            & 14 \%            \\ \midrule
\multicolumn{1}{c|}{\multirow{5}{*}{$\mathrm{SNR}$}}      & ICURC-8  & \multicolumn{1}{c}{\textbf{23.762}} & \multicolumn{1}{c}{\textbf{24.750}} & \textbf{25.195} &  \multicolumn{1}{c}{31.792}          & \multicolumn{1}{c}{32.343}          & 32.5533          \\ %\cline{2-8} 
\multicolumn{1}{c|}{}                          & ICURC-9  & \multicolumn{1}{c}{23.688}          & \multicolumn{1}{c}{24.557}          & 25.108          &  \multicolumn{1}{c}{\textbf{31.831}} & \multicolumn{1}{c}{\textbf{32.513}} & \textbf{32.984} \\ %\cline{2-8} 
\multicolumn{1}{c|}{}                          & ICURC-10 & \multicolumn{1}{c}{23.629}          & \multicolumn{1}{c}{24.229}          & 24.823          & \multicolumn{1}{c}{31.546}          & \multicolumn{1}{c}{32.364}          & 32.911          \\ %\cline{2-8} 
\multicolumn{1}{c|}{}                          & ScaledPGD & \multicolumn{1}{c}{20.593}          & \multicolumn{1}{c}{21.722}          & 21.734                 & \multicolumn{1}{c}{31.338}          & \multicolumn{1}{c}{31.918}          & 32.693          \\ %\cline{2-8} 
\multicolumn{1}{c|}{}                          & SVP      & \multicolumn{1}{c}{18.065}          & \multicolumn{1}{c}{18.940}          & 19.607                 & \multicolumn{1}{c}{27.451}          & \multicolumn{1}{c}{29.4900}          & 30.8541          \\ \midrule
\multicolumn{1}{c|}{\multirow{5}{*}{\textsc{Runtime} (sec)}} & ICURC-8  & \multicolumn{1}{c}{\textbf{0.400}}  & \multicolumn{1}{c}{\textbf{0.366}}  & \textbf{0.306}   & \multicolumn{1}{c}{\textbf{0.339}}  & \multicolumn{1}{c}{\textbf{0.244}}  & \textbf{0.313}  \\ %\cline{2-8} 
\multicolumn{1}{c|}{}                          & ICURC-9  & \multicolumn{1}{c}{0.482}           & \multicolumn{1}{c}{0.416}           & 0.395                     & \multicolumn{1}{c}{0.343}           & \multicolumn{1}{c}{0.313}           & 0.364           \\ %\cline{2-8} 
\multicolumn{1}{c|}{}                          & ICURC-10 & \multicolumn{1}{c}{0.616}           & \multicolumn{1}{c}{0.518}           & 0.425                    & \multicolumn{1}{c}{0.627}           & \multicolumn{1}{c}{0.462}           & 0.396           \\ %\cline{2-8} 
\multicolumn{1}{c|}{}                          & ScaledPGD & \multicolumn{1}{c}{3.518}           & \multicolumn{1}{c}{3.405}           & 3.230                  & \multicolumn{1}{c}{2.722}           & \multicolumn{1}{c}{2.314}           & 2.141           \\ %\cline{2-8} 
\multicolumn{1}{c|}{}                          & SVP      & \multicolumn{1}{c}{9.925}           & \multicolumn{1}{c}{14.675}          & 14.661                     & \multicolumn{1}{c}{10.626}          & \multicolumn{1}{c}{10.102}          & 9.873           \\ 
\bottomrule
\end{tabular}
\label{tab:three_image_results}
 \end{table*}

 \begin{figure*}[th!]
 \centering
 \begin{minipage}{.24\linewidth} \centering \small  Original \end{minipage}
 \begin{minipage}{.24\linewidth} \centering \small  ScaledPGD \end{minipage}
\begin{minipage}{.24\linewidth} \centering \small  SVP \end{minipage}
\begin{minipage}{.24\linewidth} \centering \small  ICURC-10 \end{minipage}\\
\vspace{0.7mm}
\includegraphics[width=0.24\textwidth]{main_image/b3_original.jpg}
\includegraphics[width=0.24\textwidth]{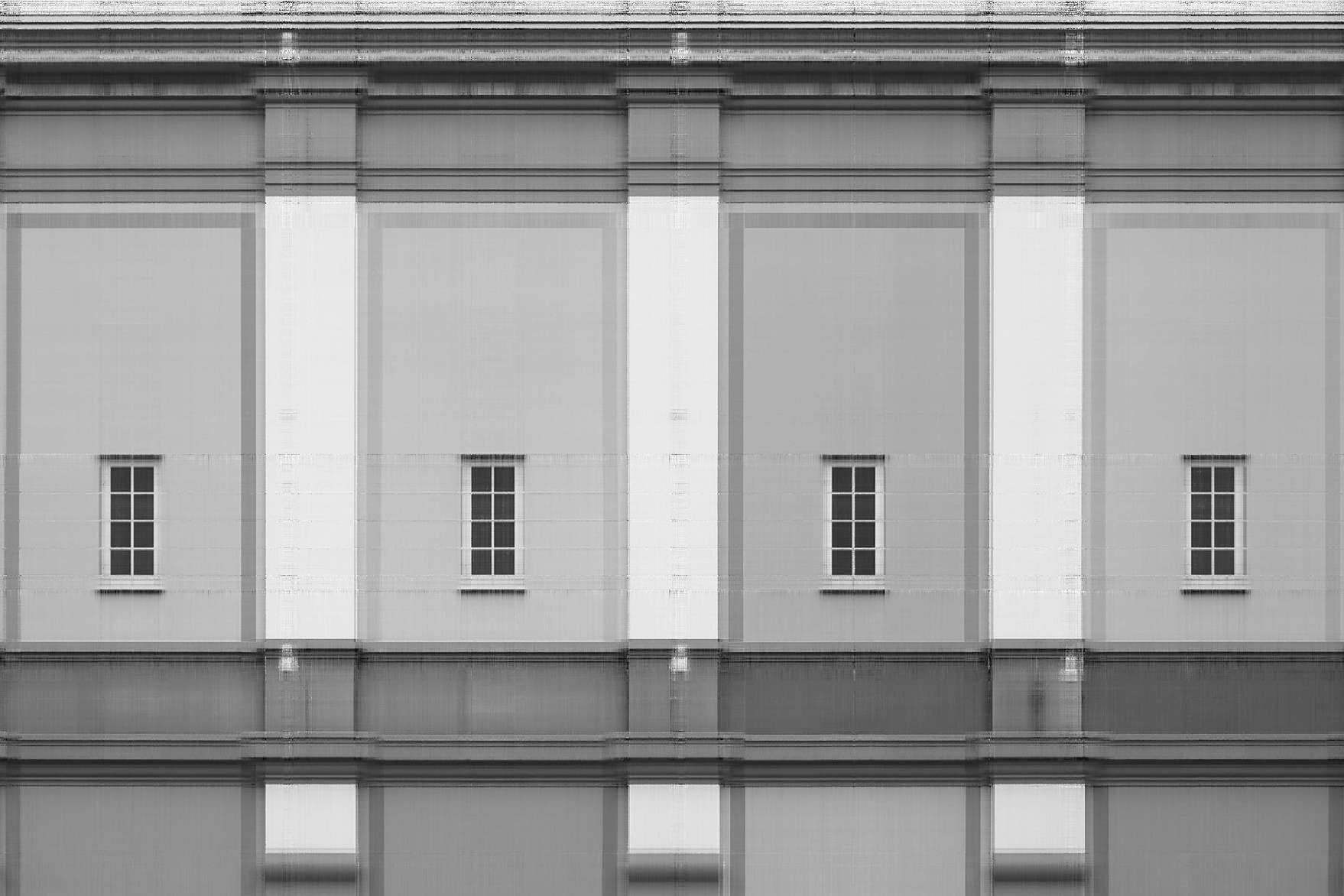}
\includegraphics[width=0.24\textwidth]{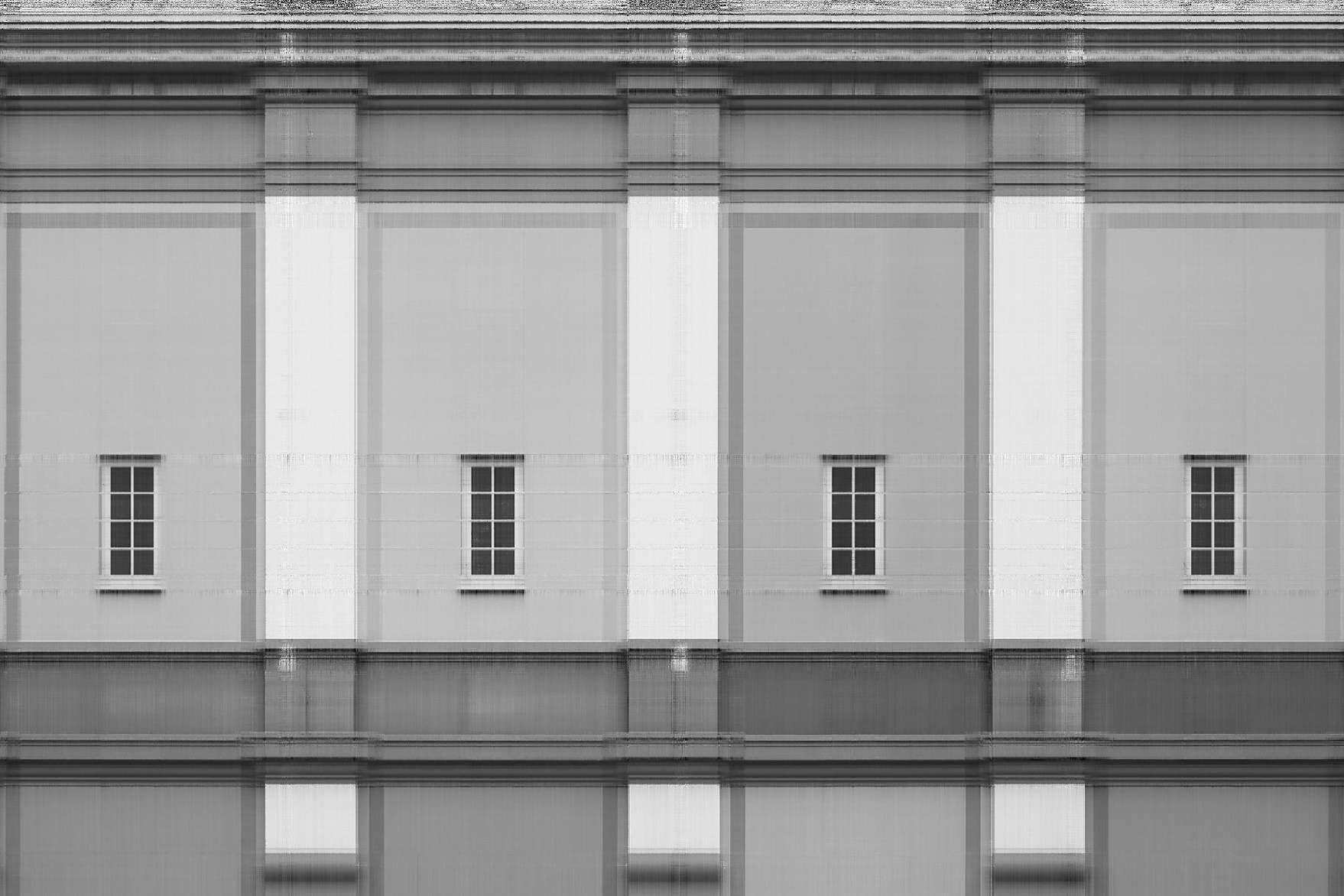}
\includegraphics[width=0.24\textwidth]{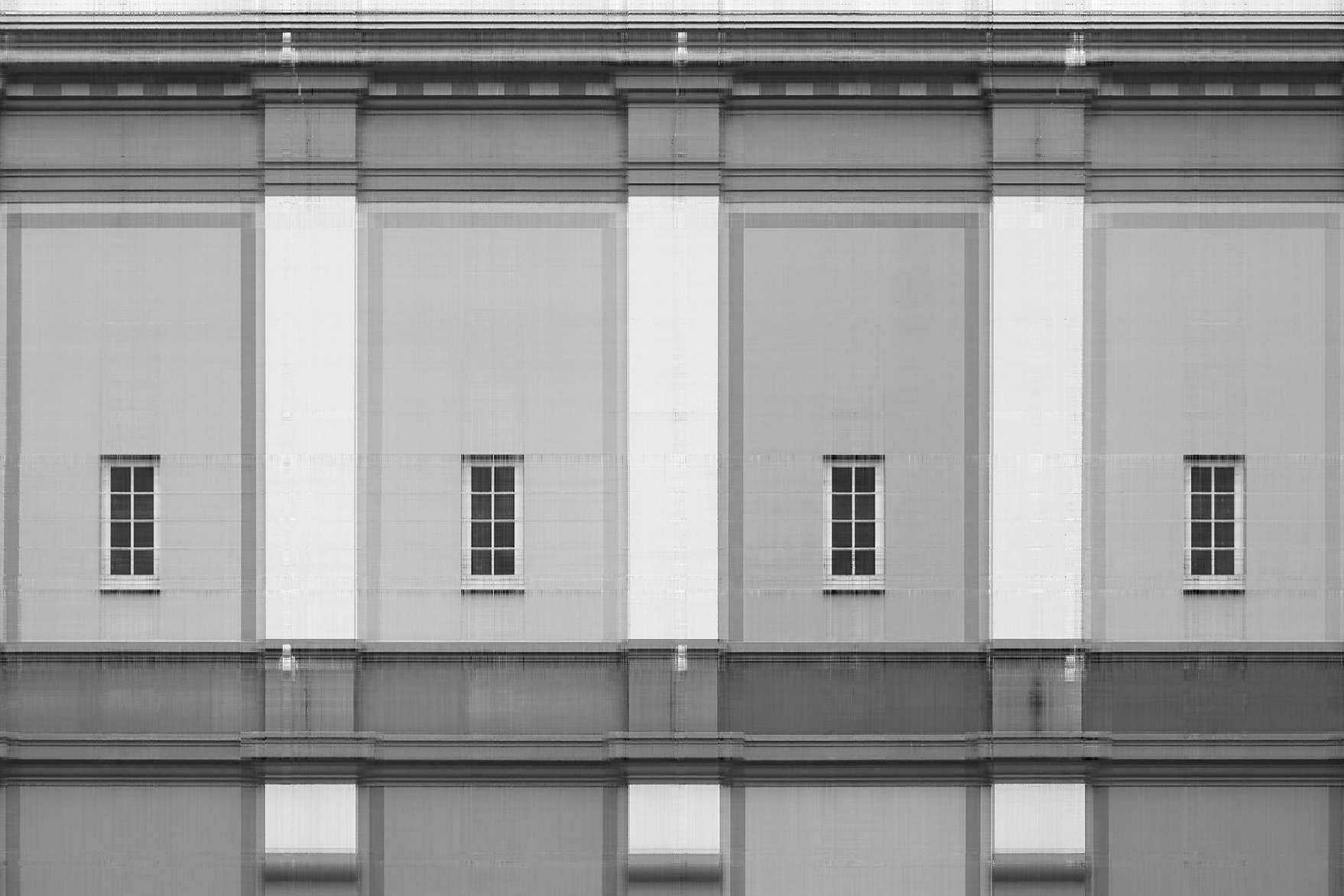}\\
\vspace{.7mm}
  \includegraphics[width=0.24\textwidth]{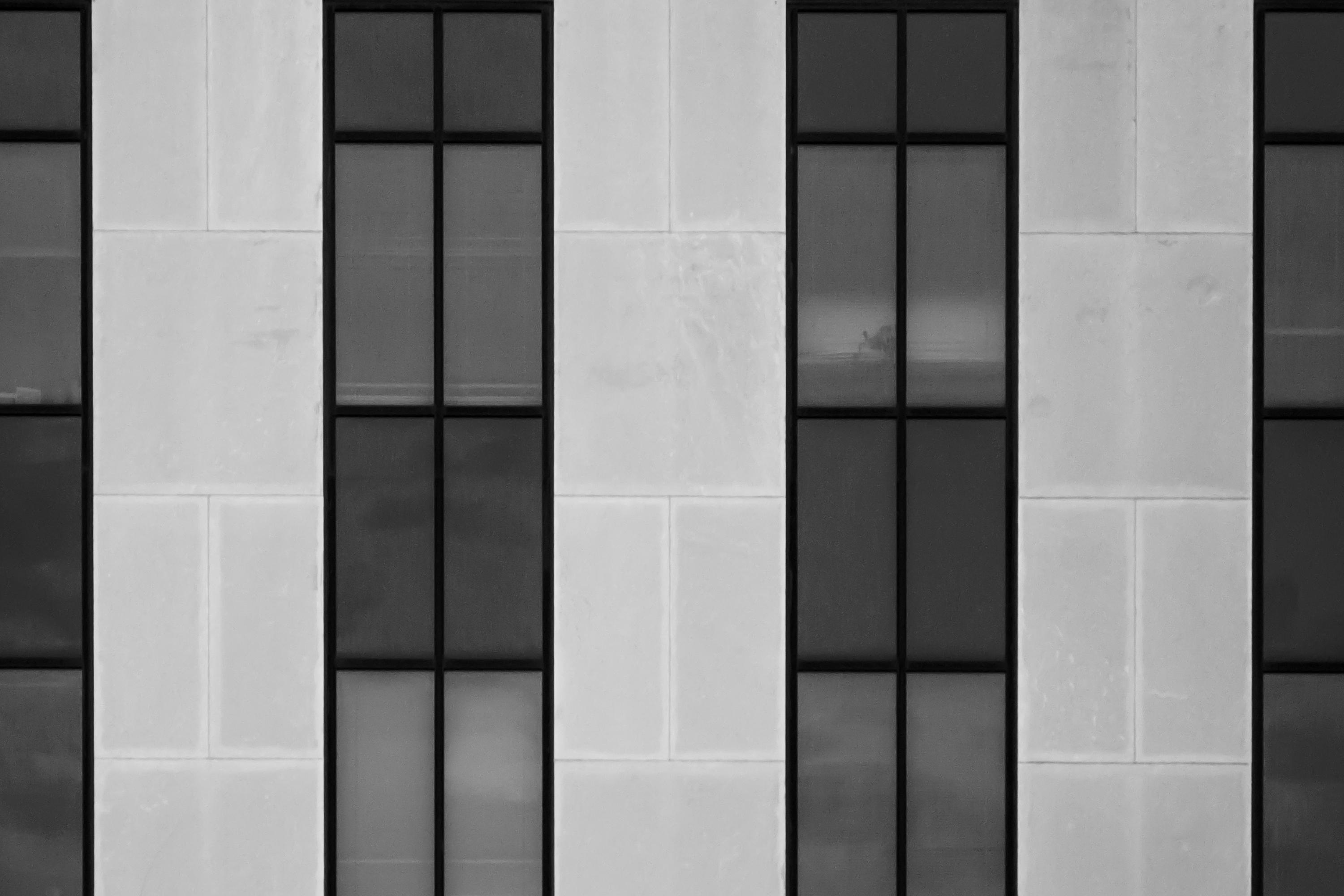}
 \includegraphics[width=0.24\textwidth]{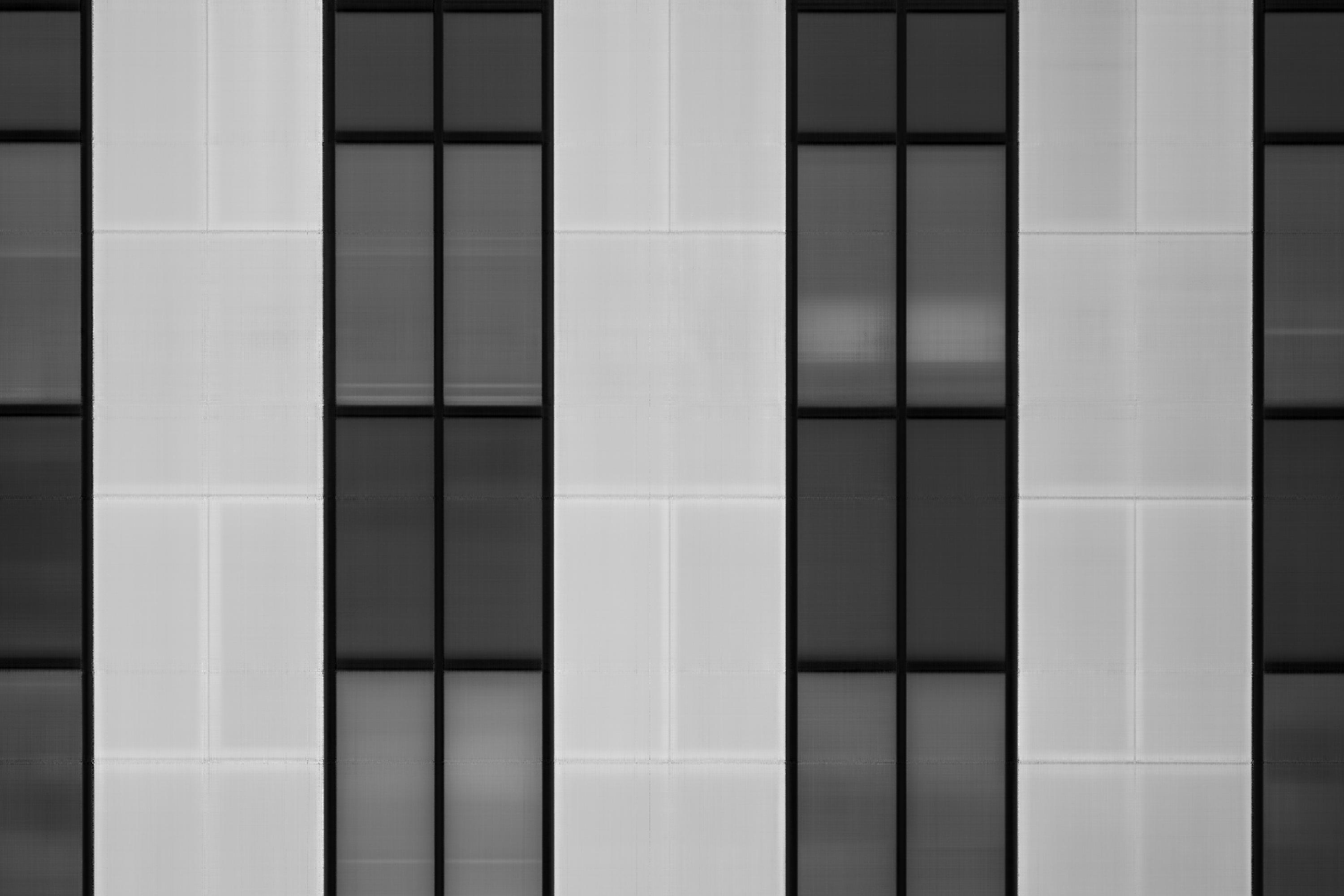}
 \includegraphics[width=0.24\textwidth]{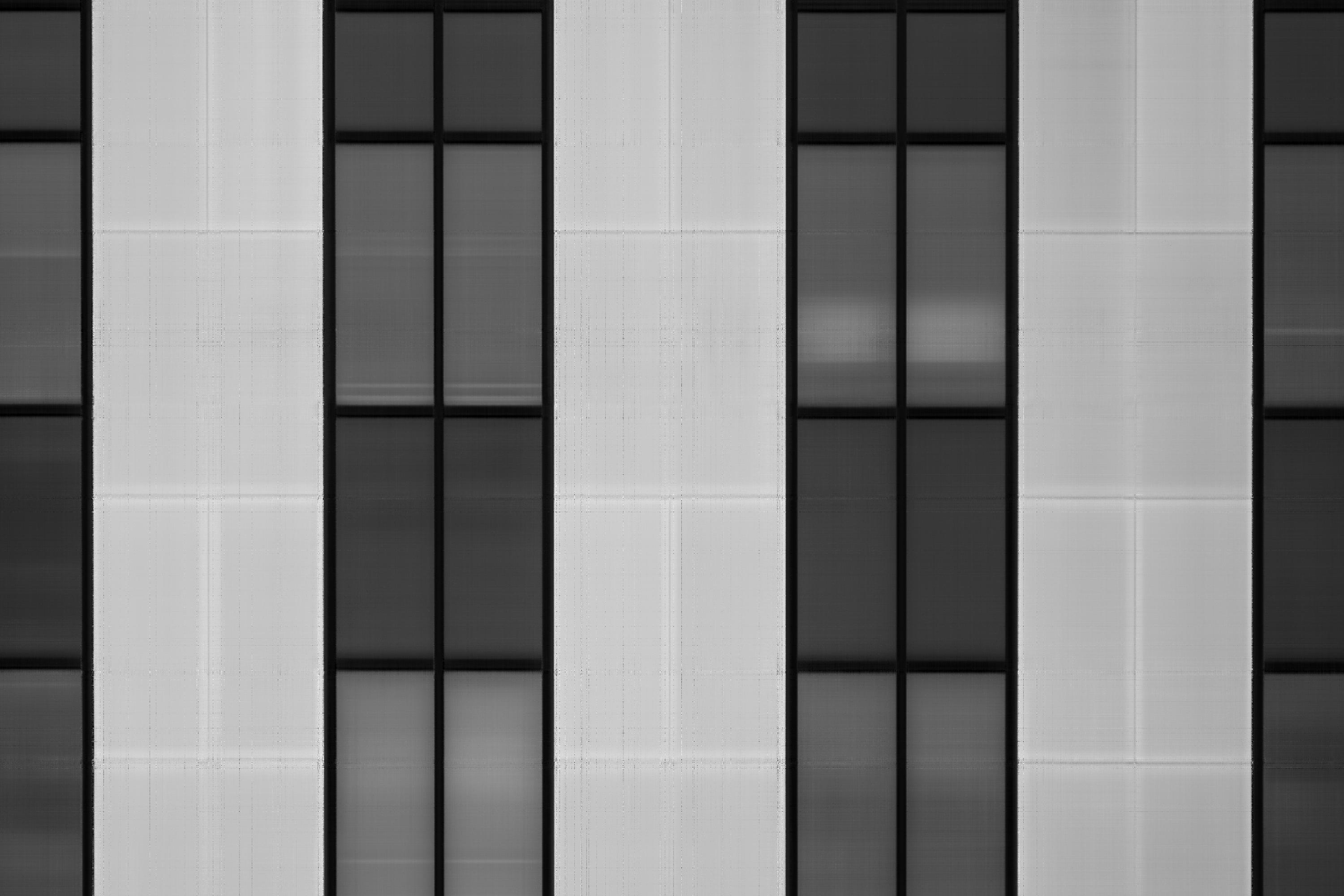}
  \includegraphics[width=0.24\textwidth]{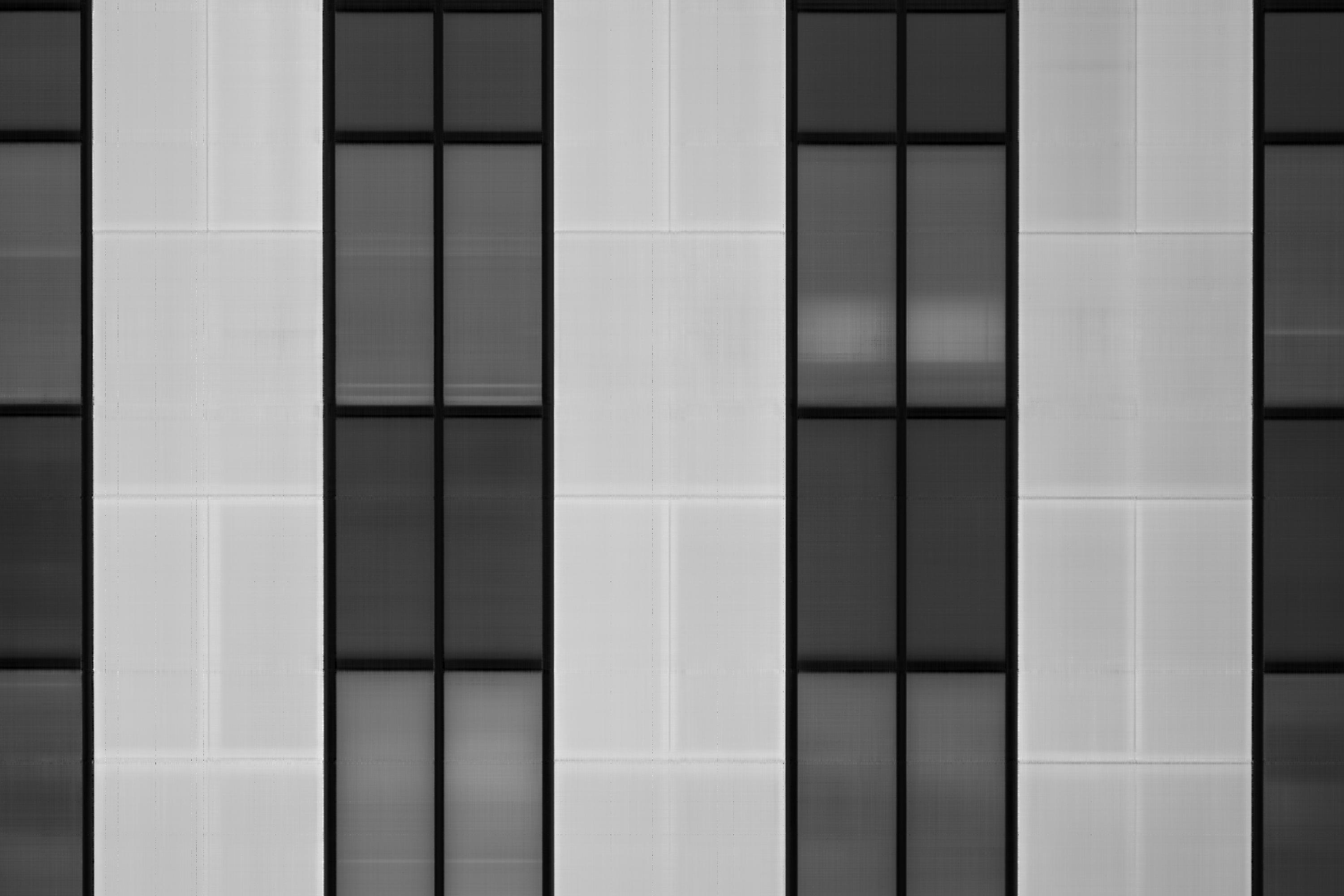}
  \caption{ 
  Visual results for image inpainting  by setting rank $r = 20$ and  the percentage of selected rows and columns $\delta = 10 \%$. ScaledPGD and SVP are based on the uniform sampling model with the same observed number of entries as the one based on CCS. All algorithms achieve visually reliable results.
  }
    \label{fig:example1}
\end{figure*}

In this section, we compare the matrix completion  performances solved by ICURC using CCS  and by ScaledPGD \cite{tong2021accelerating} and SVP \cite{jain2010guaranteed} using uniform sampling for image recovery. The simulations are tested on  two  grey-scaled images, namely ``Building''\footnote{\url{https://pxhere.com/en/photo/57707}.} %, ``Apartment''\footnote{\url{https://pxhere.com/en/photo/1393160}.}, 
and ``Window''\footnote{\url{https://pxhere.com/en/photo/1421981}.} of size  $2000 \times 3000$ by 
recording reconstruction quality and runtime. The reconstruction  quality   is measured by the signal-to-noise  ratio (SNR), which is defined as
\[\mathrm{SNR}_{\mathrm{dB}}(\widetilde{\BX})=20\log_{10}\left(\frac{\|\BX\|_\fro}{\|\widetilde{\BX}-\BX\|_\fro}\right),
\]
where $\BX$ is the original image and $\widetilde{\BX}$ represents the reconstructed image.

In this simulation, we aim  to find a rank-$20$ approximation $\widetilde{\BX}$ for the given image $\BX$. First we generate the observations according to  the CCS model. We randomly select the concentrated row and column submatrices $\BR$ and $\BC$ with  row indices $\cI$ of size $\delta m$  and column indices $\cJ$ of  size $\delta n$ columns, i.e., $\BR = [\BX]_{\cI,:}$ and $\BC = [\BX]_{:,\cJ}$. Then, we randomly select $\frac{\alpha m n}{2}$ entries on each submatrix and denote the corresponding indices of the observed entries by $\Omega_{\BR}$ and $\Omega_{\BC}$,  which result in two partially observed submatrices $\BR_{\mathrm{obs}} = [\cP_{\Omega_{\BR}}(\BX)]_{\cI,:}$ and $\BC_{\mathrm{obs}} = [\cP_{\Omega_{\BC}}(\BX)]_{:,\cJ}$. 
As a result, we obtain a partially observed image whose observed entries are concentrated on $\BR$ and $\BC$. {Then we fill in the missing pixels by applying our ICUR algorithm.} 
{In comparison, we also generate $\alpha m n$ observations based on the uniform sampling model over the original matrix X. } After that, we fill in the missing data via ScaledPGD or SVP.  The above processes are repeated for $10$ times. 
 
The averaged test results on different $\alpha$ (i.e., overall observation rates) are summarized in Table~\ref{tab:three_image_results}. Meanwhile, we provide some visual results in Figure~\ref{fig:example1}. It shows that all the algorithms achieve visually reliable results. Additionally, in comparison with the visual results in Figure~\ref{fig:GMC_on_CCS},  our ICURC algorithm has a much better performance than SVP and ScaledPGD algorithms in solving the image inpainting problems when the observed pixels are selected based on our CCS model.  From Table~\ref{tab:three_image_results}, one can observe that regardless of different combinations of the sizes of the concentrated row and column submatrices and the sampling rates,  the qualities of the results from ICURC are similar as long as the overall sampling rates are the same. This observation further illustrates the flexibility of the CCS model. Additionally, one can also find that   ICURC on CCS achieves comparable (even better) quality with the algorithms on the uniform sampling model. From the runtime perspective, our ICURC on the CCS model is substantially faster than ScaledPGD and SVP on uniform sampling. 

\subsection{Recommendation System} \label{subsec:RS}
Recommendation systems aim to predict the users' preferences from partial information of personalized item recommendations. Each dataset in a recommendation system can be represented as a matrix by arranging each item's ratings as a row and each user's ratings as a column. 
If we view the unobserved ratings as missing entries of data matrices, predicting the missing ratings can be considered a matrix completion problem, as the underlying matrix is expected to be  low-rank since only a few factors contribute to an individual's preferences \cite{plan2011compressed}. 
In this section, we evaluate the performance of our CCS model solving by ICURC algorithm on three datasets namely the Movie-1M, the Movie-10M datasets from the Movielens research project\footnote{Movie-100K and the Movie-10M datasets can be downloaded from \url{ https://grouplens.org/datasets/movielens}\label{ml_data}.} \cite{10.1145/2827872} and the FilmTrust dataset\footnote{FilmTrust dataset can be found at \url{https://guoguibing.github.io/librec/datasets.html}.} \cite{guo2013novel}. For a given dataset, we first generate an item-user matrix of size $m\times n$. Due to the large size and low observation rate of the Movie-10M dataset, we follow the instructions in \cite{kalofolias2014matrix} to extract a $2000\times 3000$ submatrix  based on the observation rates on rows and columns. 
The characteristics of all the data used in our simulations are summarized in Table~\ref{tab:dataset_info}.

To evaluate the performance, we employ the Cross-Validation method. For each run, the observed data is randomly split into training and testing sets denoted by $\Omega_{\mathrm{train}}$ and $\Omega_{\mathrm{test}}$ \cite{10.1145/1454008.1454031}. More specifically,  we randomly select $\delta m$  rows and $\delta n$ columns and then randomly choose $\alpha  m n$ entries from the observed entries on the selected rows and columns to form   $\Omega_\mathrm{train}^{(1)}$  for  CCS model. For comparison, we also randomly choose $\alpha mn$ entries from the observed data over the whole matrix to form a new training dataset $\Omega_\mathrm{train}^{(2)}$ based on the uniform sampling model. The information on the training and testing sets for each data matrix is summarized in Table~\ref{tab:dataset_info}. After the training and testing datasets are generated, we run the ICURC algorithm on $\Omega_\mathrm{train}^{(1)}$, and ScaledPGD and SVP algorithms on $\Omega_\mathrm{train}^{(2)}$. 

Following  \cite{10.1145/1454008.1454031}, we adopt two different methods to measure the recommendation quality on the testing datasets. The first one is the hit-rate (HR) which is defined as  the ratio of the number of hits to the size of the testing dataset: 
\begin{equation} \label{eq:HR}
    \mathrm{HR} = \frac{\mathrm{\#hits}}{|\Omega_\mathrm{test}|},
\end{equation}
where a predicted rating $P_i$ is considered as a hit if its rounded value is equal to the actual rating $A_i$ in the test set. To penalize each missed prediction and to emphasize the errors, we also computed the Normalized Mean Absolute Error (NMAE) \cite{ma2011fixed, goldberg2001eigentaste} defined as follows: 
\begin{equation} \label{eq:NMAE}
    \mathrm{NMAE} = \frac{1}{|\Omega_{\mathrm{test}}| (S_{\max} - S_{\min})}\sum_{i \in \Omega_{\mathrm{test}}} \abs{P_i - A_i},
\end{equation}
where $S_{\max}$ and $S_{\min}$ denote the maximum and minimum rating, respectively.    

Figure~\ref{fig:barplot_summary_RC} summaries the averaged numerical results over $10$ independent trials. One can see that the results for the ICURC algorithm based on the CCS model have better performance compared with the ones for ScaledPGD and SVP algorithms on the uniform sampling model.  
Specifically, ICURC can reach higher HR and lower NMAE in a much shorter runtime compared with other methods. This further illustrates that ICURC is computationally efficient. Fixing the overall sampling rate for the CCS model, one can observe that the performances for different combinations of the concentrated row and column submatrices are comparable. This observation further illustrates the flexibility of our CCS model.
 
\begin{table}[h]
\caption{Datasets information for collaborative filtering. Here, $\alpha$ is the overall observation rate.}
\label{tab:dataset_info}
\centering
\begin{tabular}{c|c c c c}
\toprule
\textsc{Dataset}      & \textsc{\texttt{\#}users} & \textsc{\texttt{\#}items} & $\alpha$\,(\%) & \textsc{Rating\,Range} \\ \midrule
ML-100K     & 943    & 1682    & 4.190 & 1--5   \\ 
ML-10M     & 2000    & 3000    & 32.92 & 1--10  \\ 
FilmTrust & 1508    & 2071    & 2.660    & 1--8   \\ 
\bottomrule
\end{tabular}
\end{table}

\begin{figure*}[th]
    \raggedright

    \includegraphics[width=0.29\textwidth]{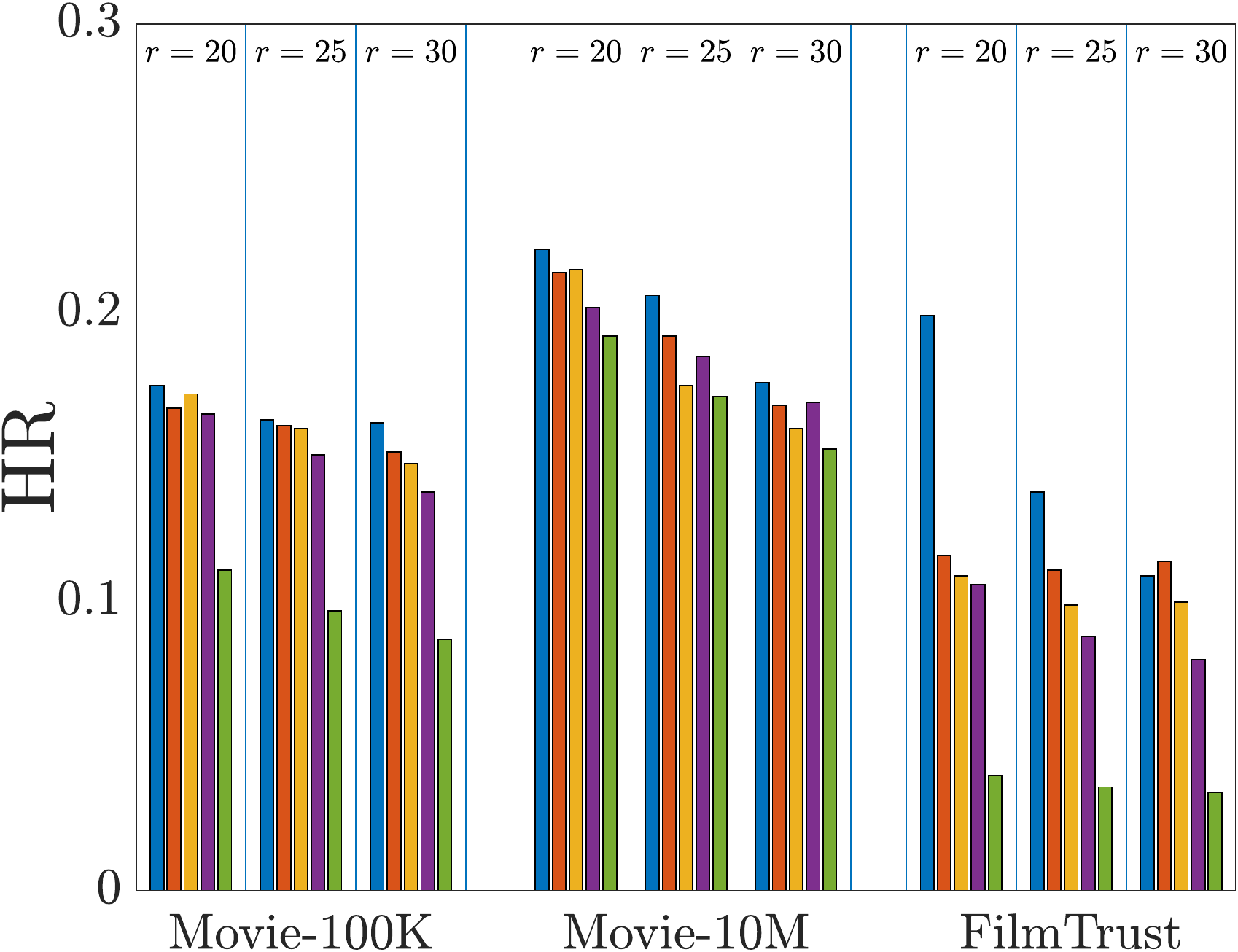}\hspace{1.5mm}
    \includegraphics[width=0.29\textwidth]{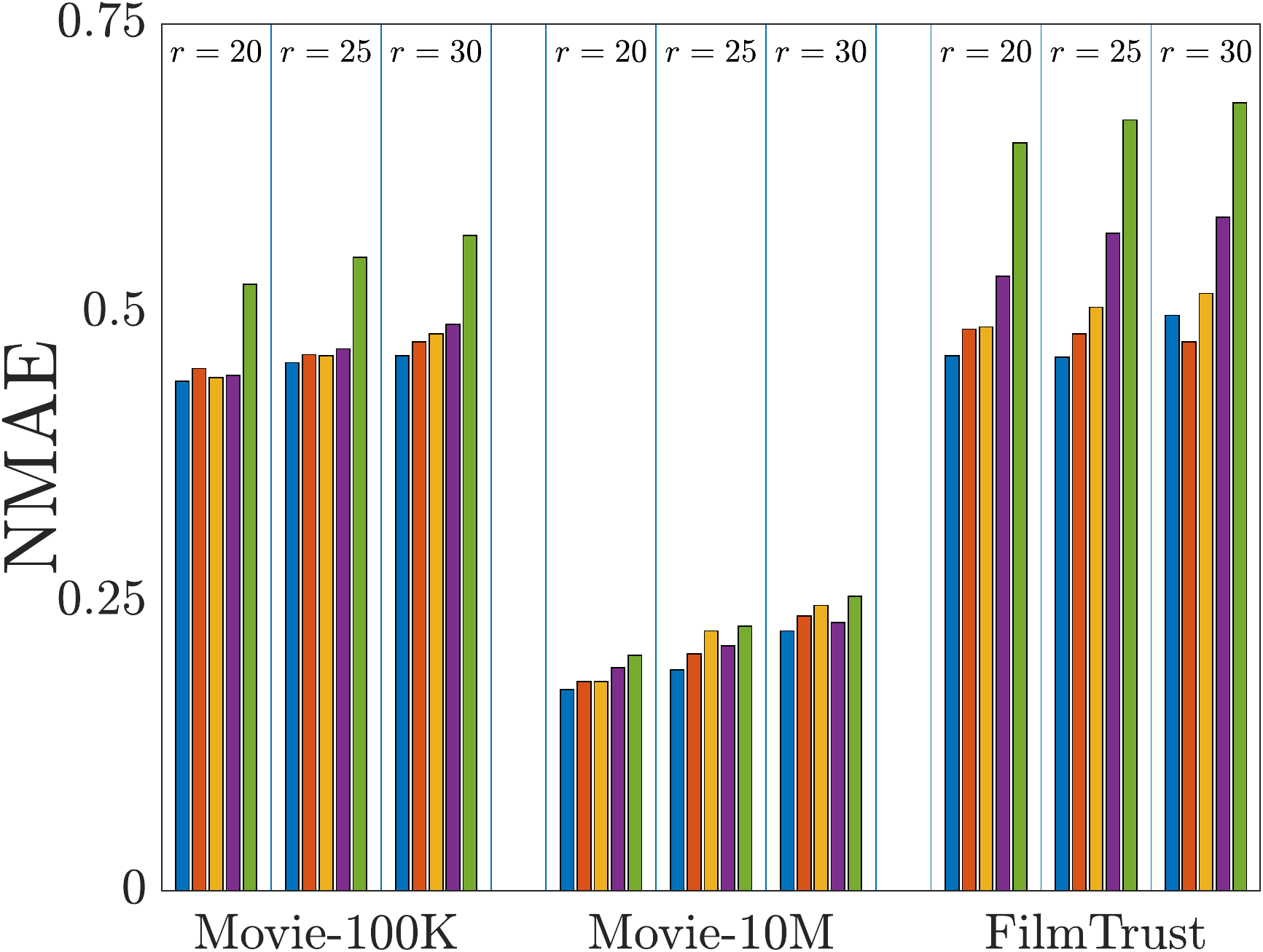}\hspace{1.5mm}
     \includegraphics[width=0.29\textwidth]{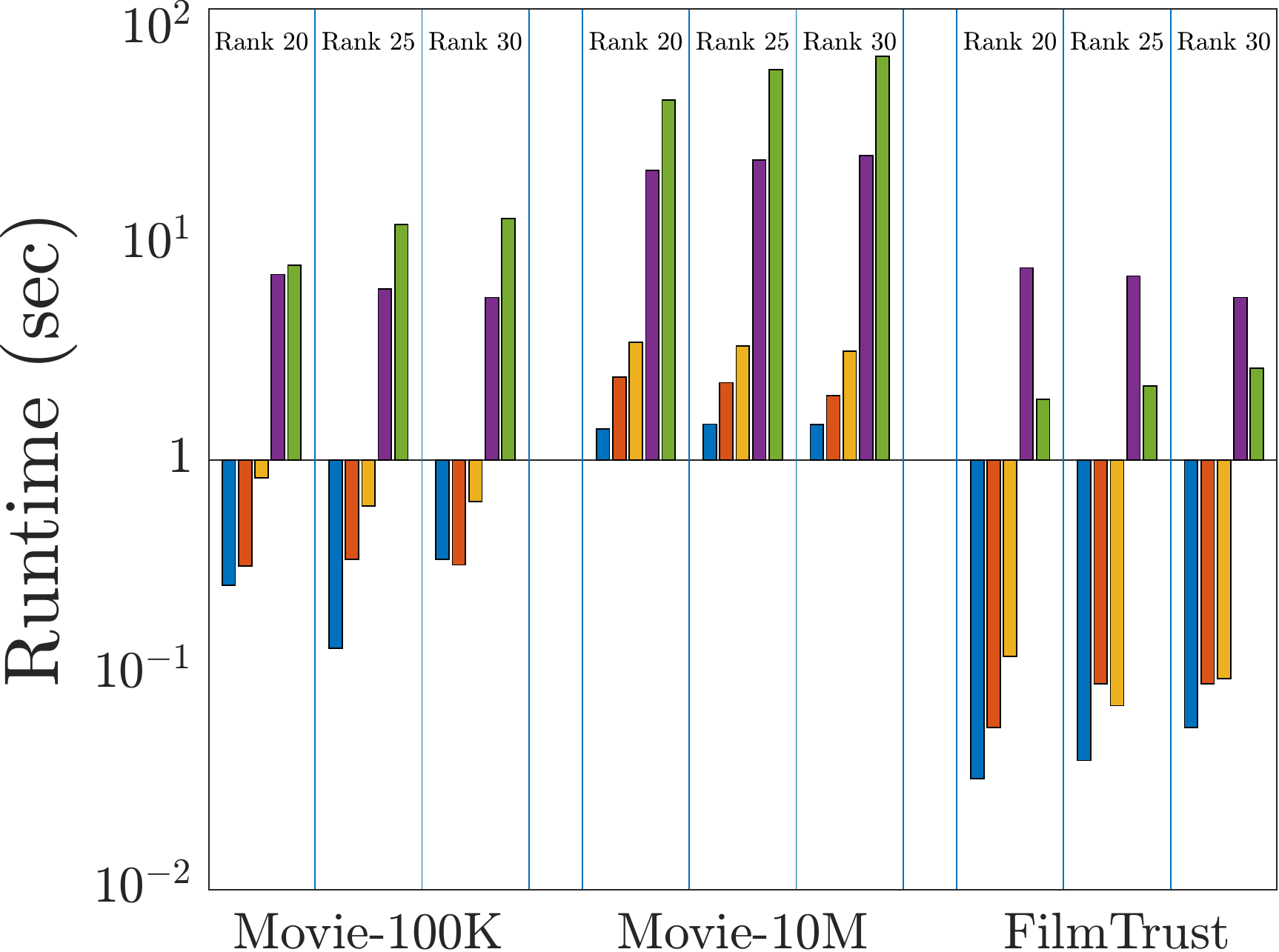}
    \raisebox{0.13\height}{ \includegraphics[width=0.09\textwidth]{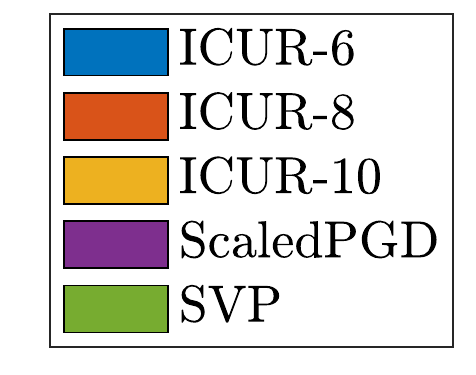}}
    \\
    
    \includegraphics[width=0.29\textwidth]{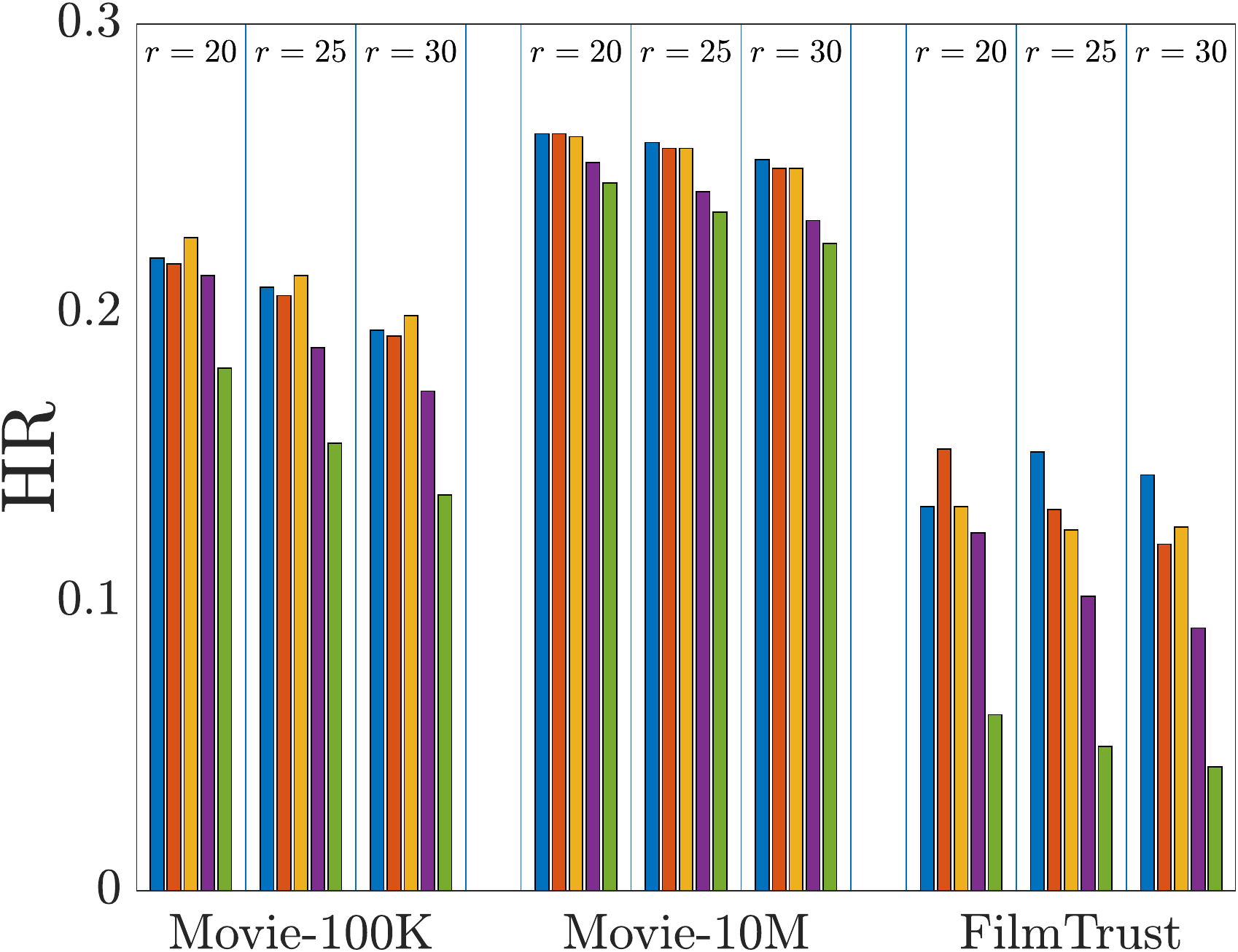}\hspace{1.5mm}
    \includegraphics[width=0.29\textwidth]{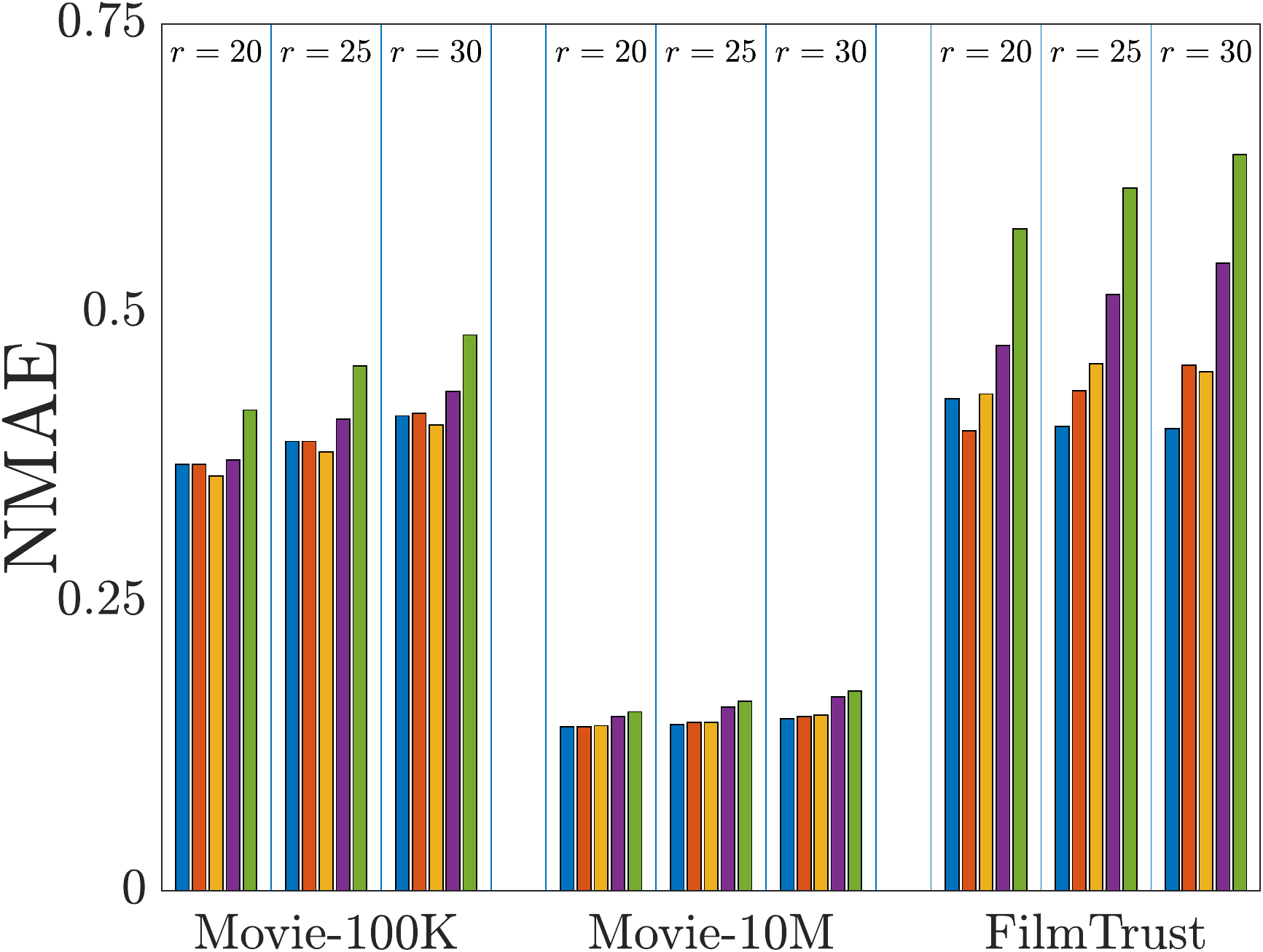}\hspace{1.5mm}
     \includegraphics[width=0.29\textwidth]{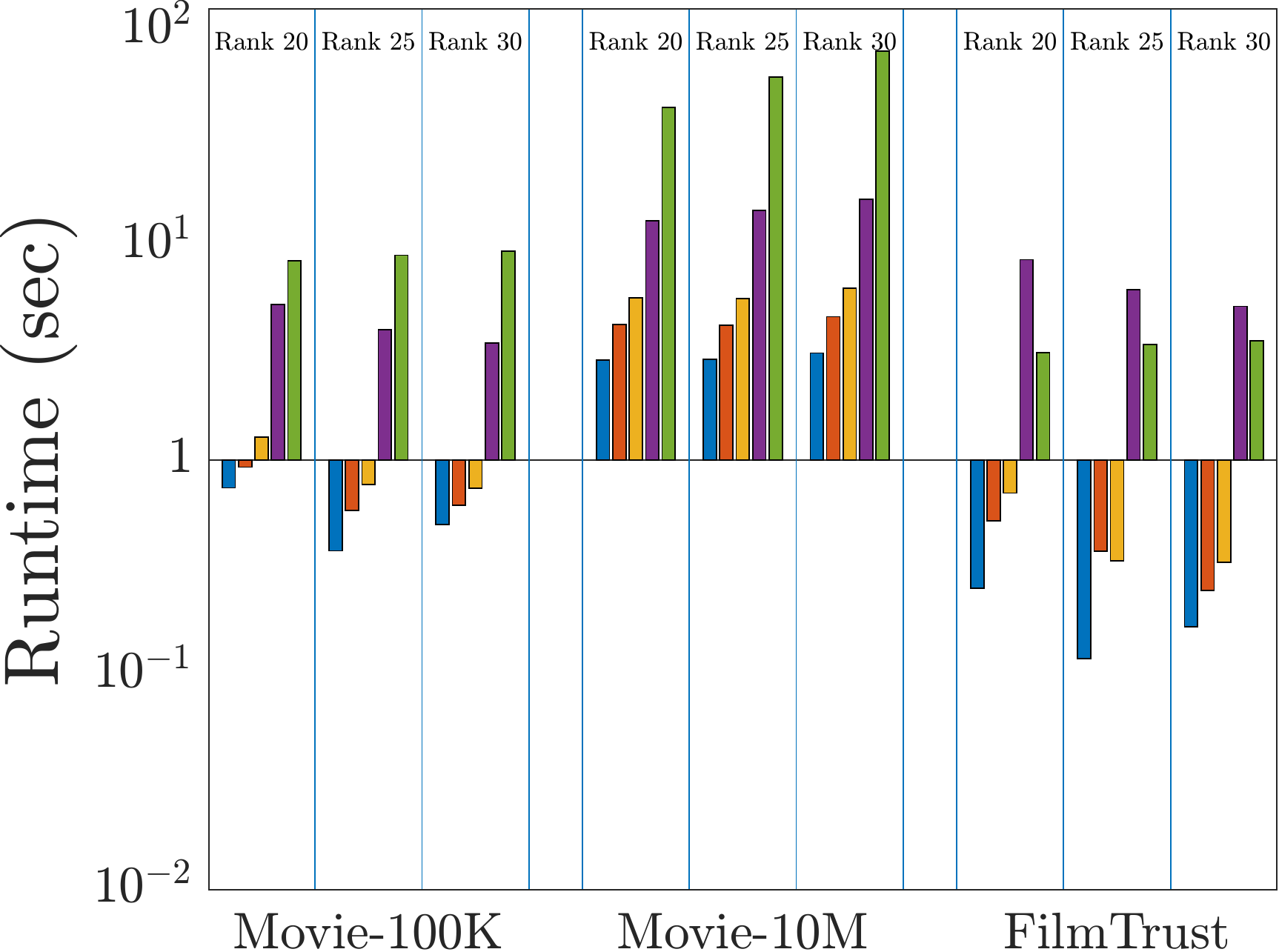}
     \raisebox{0.13\height}{\includegraphics[width=0.09\textwidth]{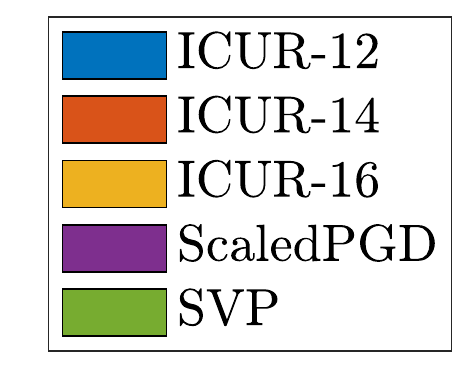}  } 
    \\

    \includegraphics[width=0.29\textwidth]{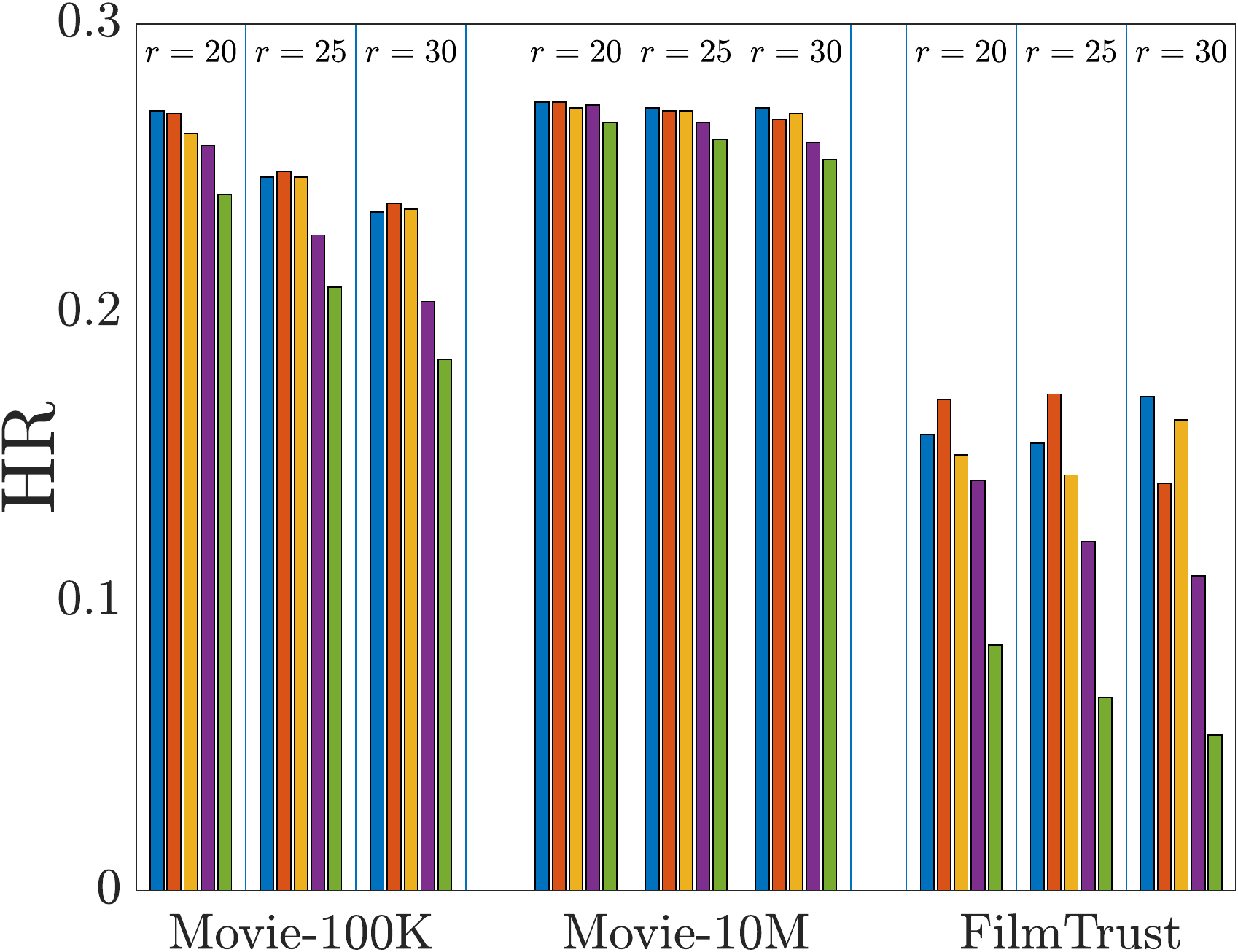}\hspace{1.5mm}
    \includegraphics[width=0.29\textwidth]{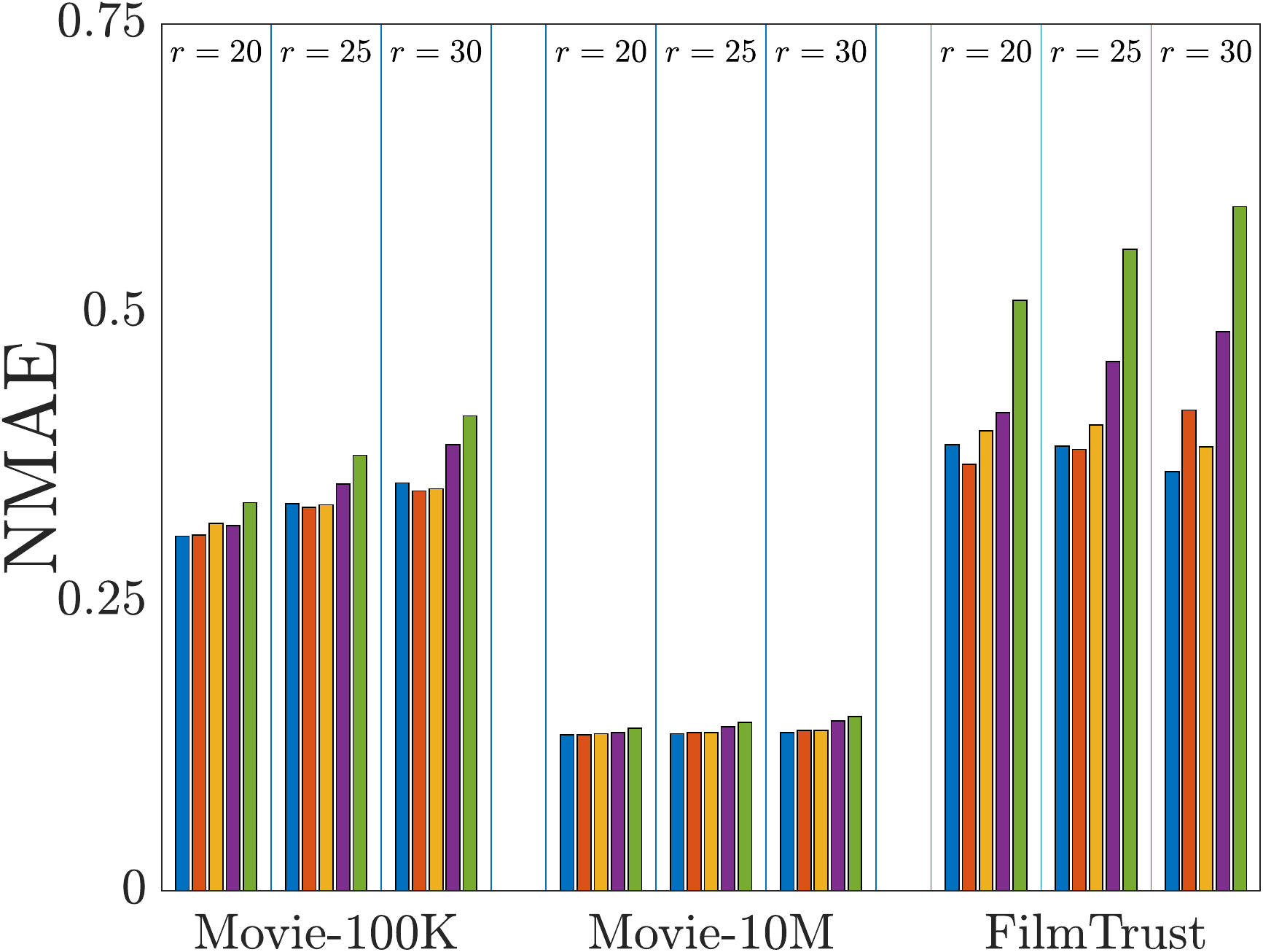}\hspace{1.5mm}
    \includegraphics[width=0.29\textwidth]{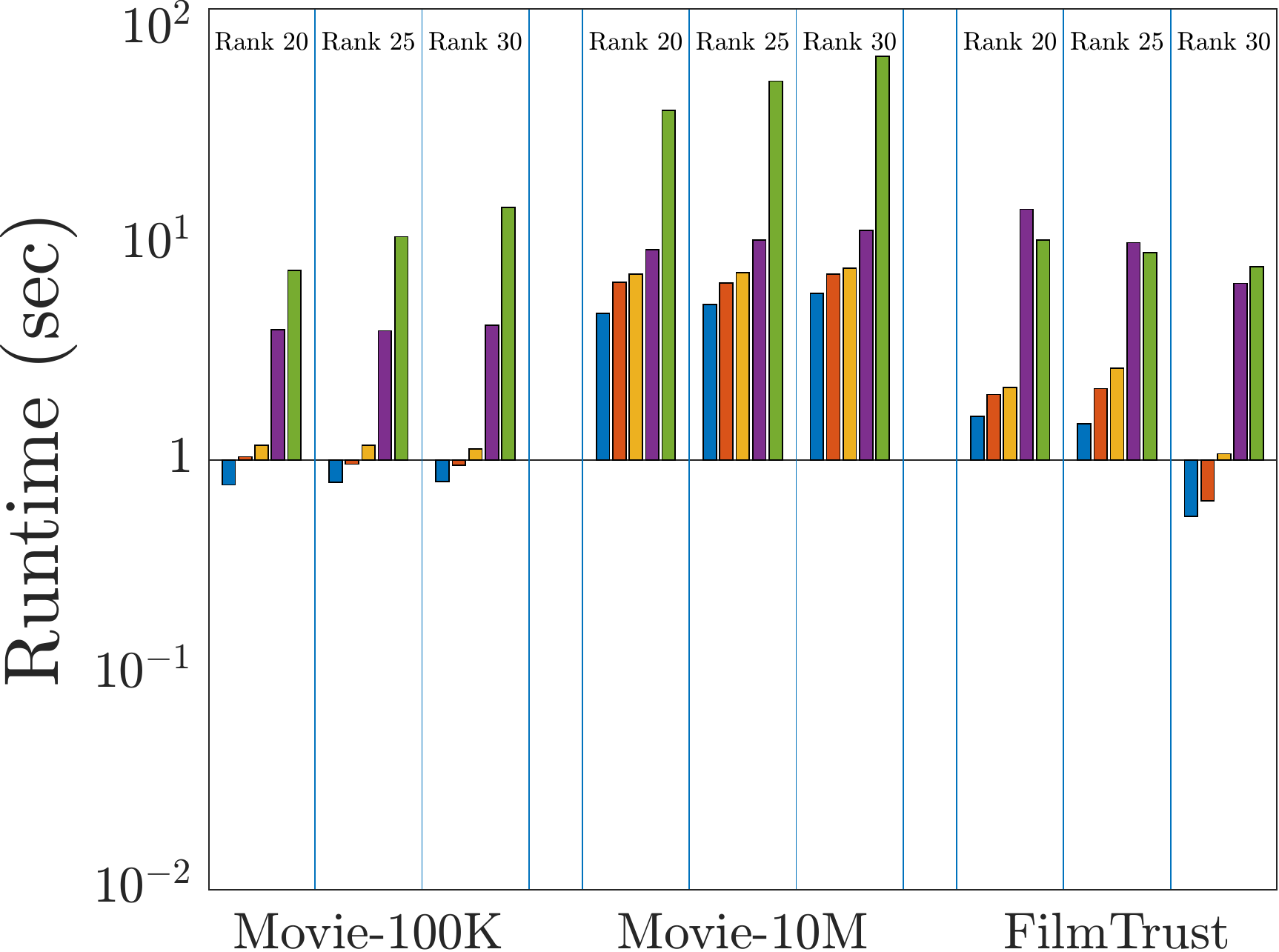}
     \raisebox{0.13\height}{\includegraphics[width=0.09\textwidth]{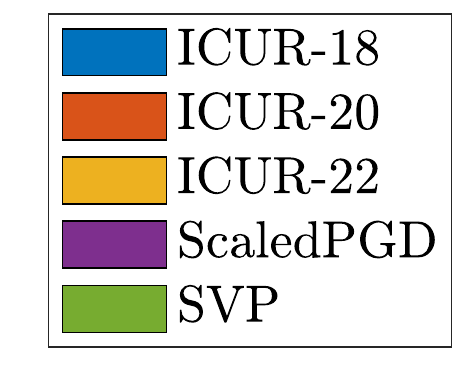}}
           
    \caption{{Bar-plot results on the recommendation system data. The performances are measured in HR, NMAE, and runtime. \textbf{Top}: overall observation rate $\alpha = 10 \%$. \textbf{Middle}: $\alpha = 20 \%$. \textbf{Bottom}: $\alpha = 30 \%$. When overall observation rate $\alpha$ is fixed, the ICURC performs better under various CCS conditions compared with other methods in uniform sampling.}} \label{fig:barplot_summary_RC}
\end{figure*}

\subsection{Link Prediction}
In link prediction problems, we are given a graph $G = (V, E)$, that has vertices $V$ and edges $E$, represented in an adjacency matrix $A$. If there exists an observed link between vertices $i$ and $j$,  then $A_{i,j}=1$; otherwise  $A_{i,j}=0$. 
Link prediction problem aims to learn the distribution of existing links and, thus, to predict the potential links in the graph \cite{pech2017link}. 
In this section, we evaluate the performance of our CCS model solved by the ICURC algorithm on three  
link prediction datasets namely Blogs\footnote{Blogs dataset can be found in \url{http://konect.cc/networks/moreno_blogs}.}  containing the hyperlinks between blogs in the context of $2004$ US election, Opsahl\footnote{ Opsahl dataset can be found in \url{http://konect.cc/networks/opsahl-openflights}.}  indicating flights between airports around the world, and Figeys\footnote{Figeys dataset can be found in \url{http://konect.cc/networks/maayan-figeys}.}   describing interactions between proteins. The three datasets come from the Koblenz Network Collection (KONECT \cite{konect}). The characteristics of 
these datasets are summarized in Table~\ref{tab:dataset_info_LP}. 

\begin{table}[h]
\centering
\caption{Information for link prediction datasets. Here, $\alpha$ is the overall observation rate.}
\label{tab:dataset_info_LP}
\begin{tabular}{c|c c}
\toprule
\textsc{Dataset} & \textsc{Size\,(\texttt{\#}nodes)} & $\alpha$\,(\%) \\ 
\midrule
Blogs   & 1224               & 1.269                      \\ 
Opsahl  & 2939               & 0.353                      \\ 
Figeys  & 2239               & 0.357                      \\ 
\bottomrule
\end{tabular}
\end{table}
To evaluate the performance of our methods, we follow the works of \cite{pech2017link, gao2015link} by randomly   dividing the existing links into training and testing samples. From the perspective of matrix completion, this is the same as randomly splitting the observed entries of the adjacency matrix $A$ into corresponding $\Omega_\mathrm{train}$ and $\Omega_\mathrm{test}$. Similar to the setup of the recommendation system problem in Section~\ref{subsec:RS}, we generate different training datasets based on the CCS model and non-CCS model and denote them as  $\Omega_\mathrm{train}^{(1)}$ and $\Omega_\mathrm{train}^{(2)}$ respectively. We apply ICURC algorithm on $\Omega_\mathrm{train}^{(1)}$ and ScaledPGD and SVP algorithms on $\Omega_\mathrm{train}^{(2)}$. We also adopt two popular metrics, $\mathrm{Precision}$   \cite{ZHAO2022116033}, which focuses on the top predicted links, and Area Under the receiver operating characteristic Curve (AUC)
h\cite{clauset2008hierarchical}, which evaluates the entire set of predicted links. Precision is defined as the ratio of the actual number of connected edges to the predicted number of connected edges. Links predicted by algorithms can be interpreted as the likelihood of unobserved (new) links; the higher likelihood indicates a greater possibility of an unobserved link \cite{pech2017link}. We sort the entries of predicted links in descending order and select the top $L$ links. $L$ is chosen to be the cardinality of the testing dataset \cite{pech2017link}. Let $L_\mathrm{m}$ be the number of links in the top $L$ predicted links that appear in the testing dataset, $ \mathrm{Precision}$ can be calculated by
\begin{equation}
    \mathrm{Precision} = \frac{L_\mathrm{m}}{L},
    \label{equ}
\end{equation}
where the higher $\mathrm{Precision}$ is, the more accurate of the prediction is \cite{ZHAO2022116033}.  

AUC measures the area under the receiver operating characteristic curve, which can be interpreted as the probability that a randomly chosen missing link from the set of  predicted $\Omega_{\mathrm{test}}$ is given a higher likelihood than a randomly chosen potentially non-existing link (which is the set of all unobserved links from $G$) \cite{clauset2008hierarchical, hanley1982meaning}. The AUC is calculated as:
\begin{equation}
    \mathrm{AUC} = \frac{y_{\mathrm{m}} + 0.5y_{\mathrm{n}}}{y},
\end{equation}
where $y$ is the number of independent comparisons between each randomly picked pair of a missing link and a non-existing link. {$y_{\mathrm{m}}$ is the times that the missing links have a higher predicted likelihood than non-existing links while $y_{\mathrm{n}}$ counts the number of times if their likelihoods are equal.} We use $y = 5000$. The degree to which the AUC exceeds $0.5$ illustrates how much better the predictions are compared with a random guess \cite{gao2015link}.

Figure~\ref{fig:barplot_summary_LP} summarises the averaged numerical results over $10$ independent trials for each fixed  $r$, $\alpha$, and $\delta$. One can see that based on the CCS model, ICURC performs better compared with the ones based on the uniform sampling model solved by other methods. Particularly, when the overall sampling rate is fixed, the ICURC based on different concentrated rows and columns can reach higher Precision and AUC under shorter intervals. This, again, confirms the computational efficiency of ICURC and the flexibility of our CCS model.

\begin{figure*}[th]
    \raggedright
    \includegraphics[width=0.29\textwidth]{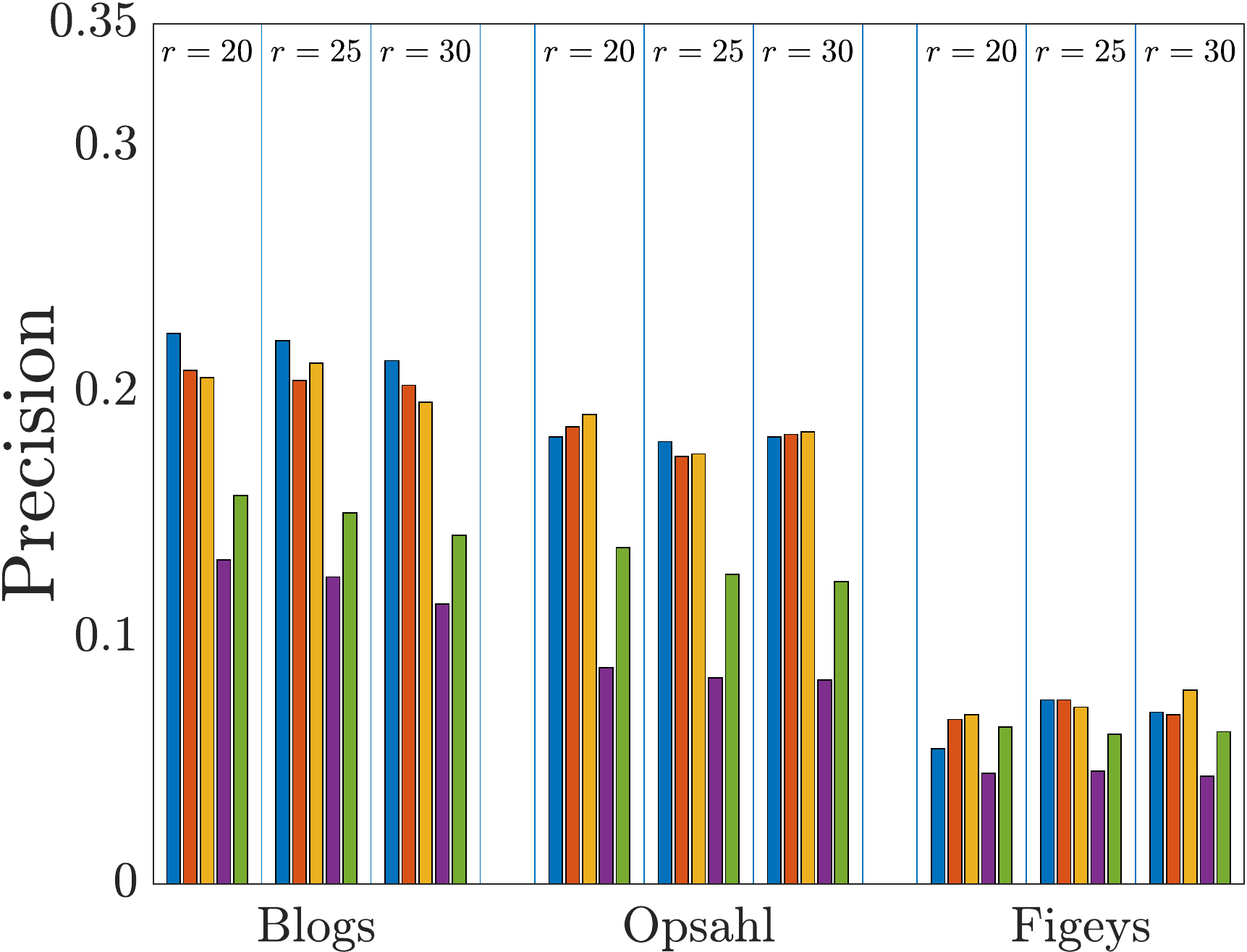} \hspace{1.5mm}
    \includegraphics[width=0.29\textwidth]{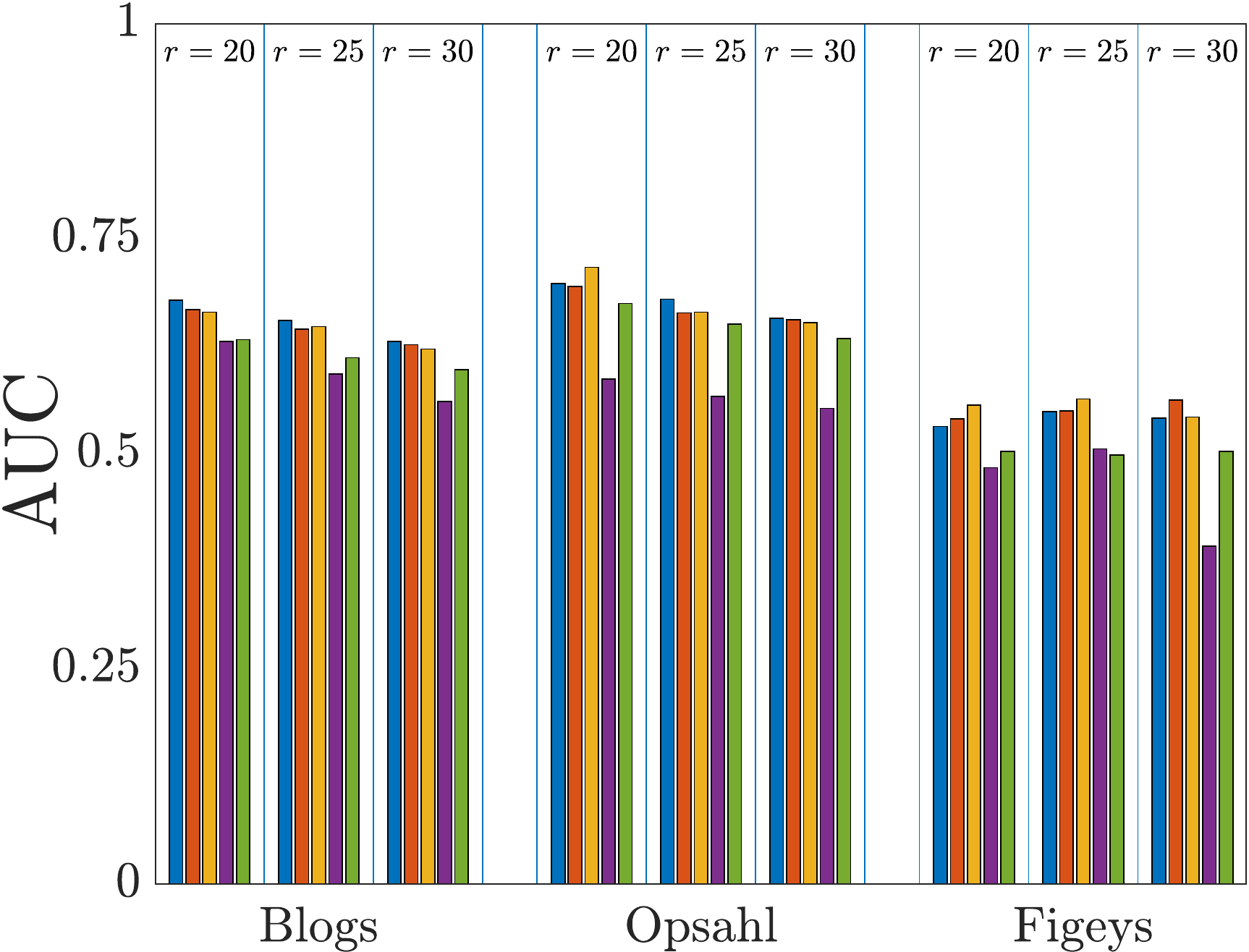} \hspace{1.5mm}
     \includegraphics[width=0.29\textwidth]{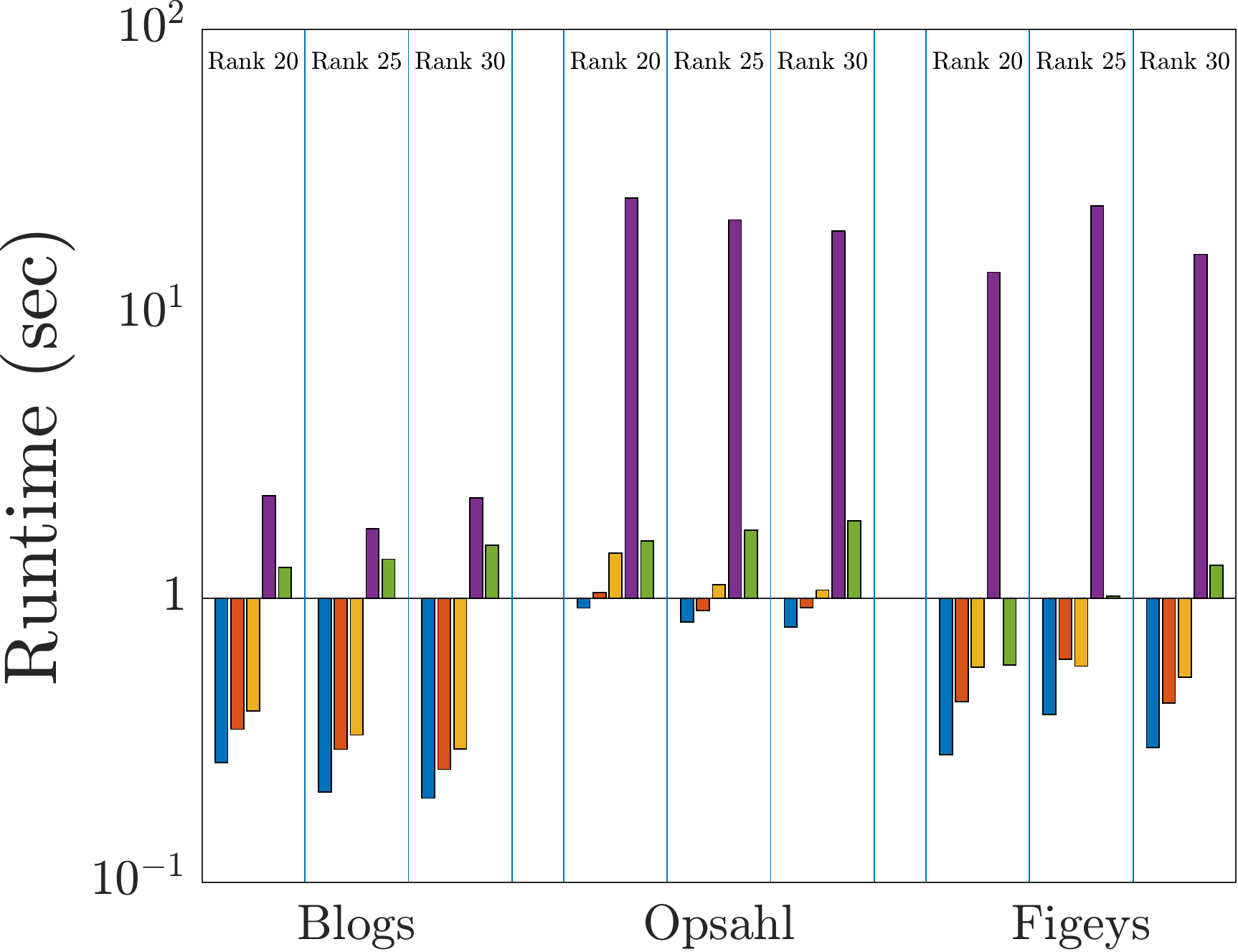}
    \\
    
    \includegraphics[width=0.29\textwidth]{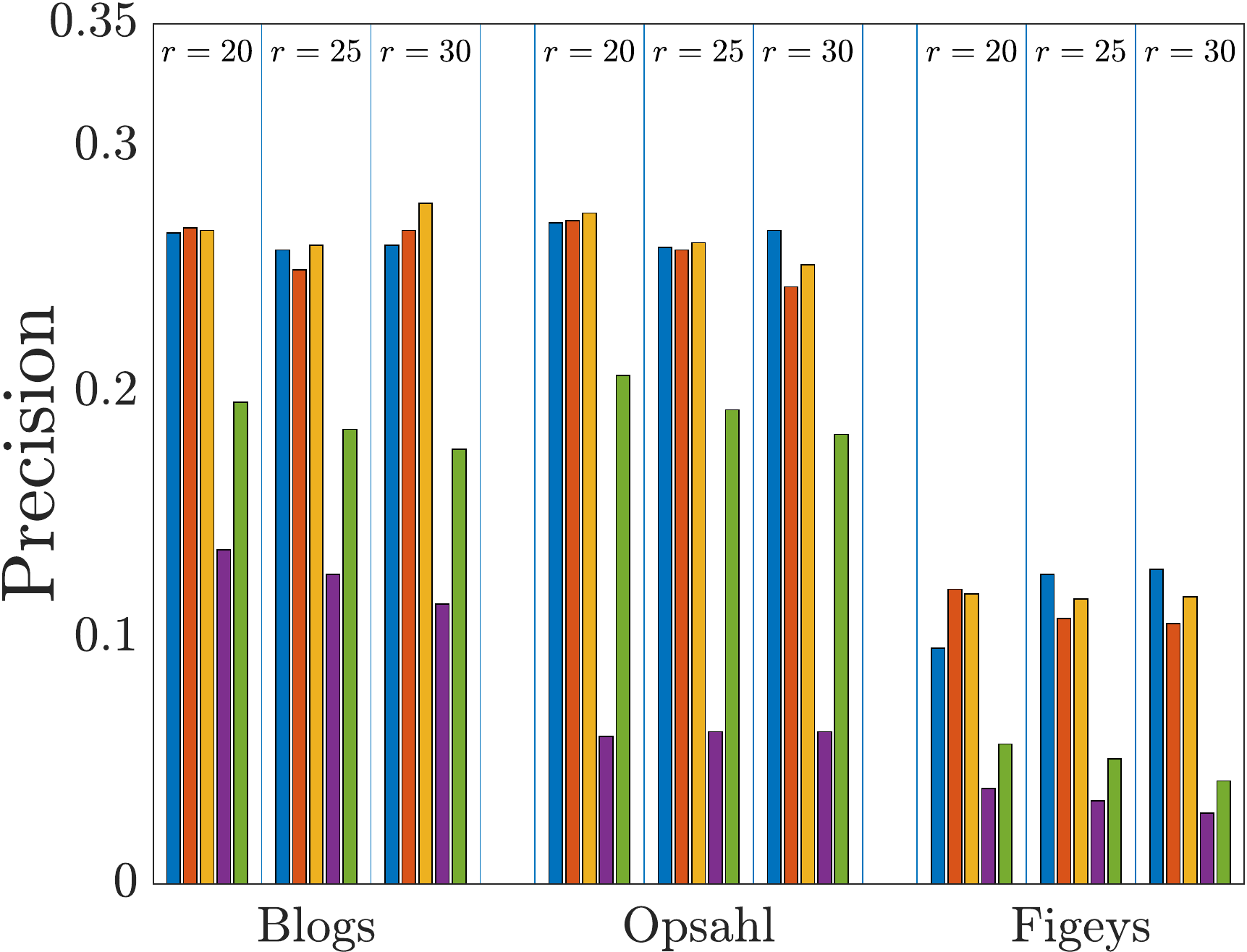} \hspace{1.5mm}
    \includegraphics[width=0.29\textwidth]{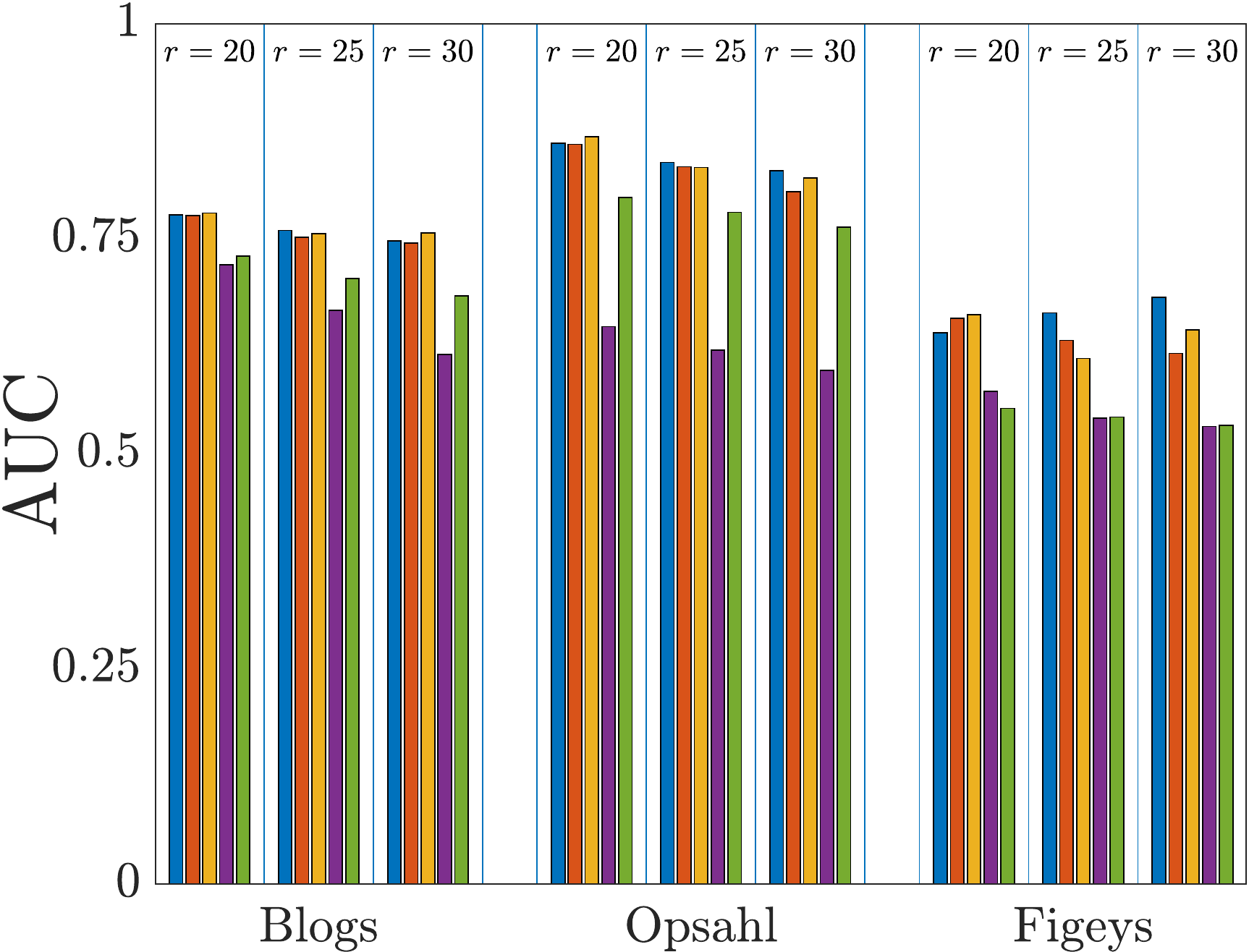} \hspace{1.5mm}
     \includegraphics[width=0.29\textwidth]{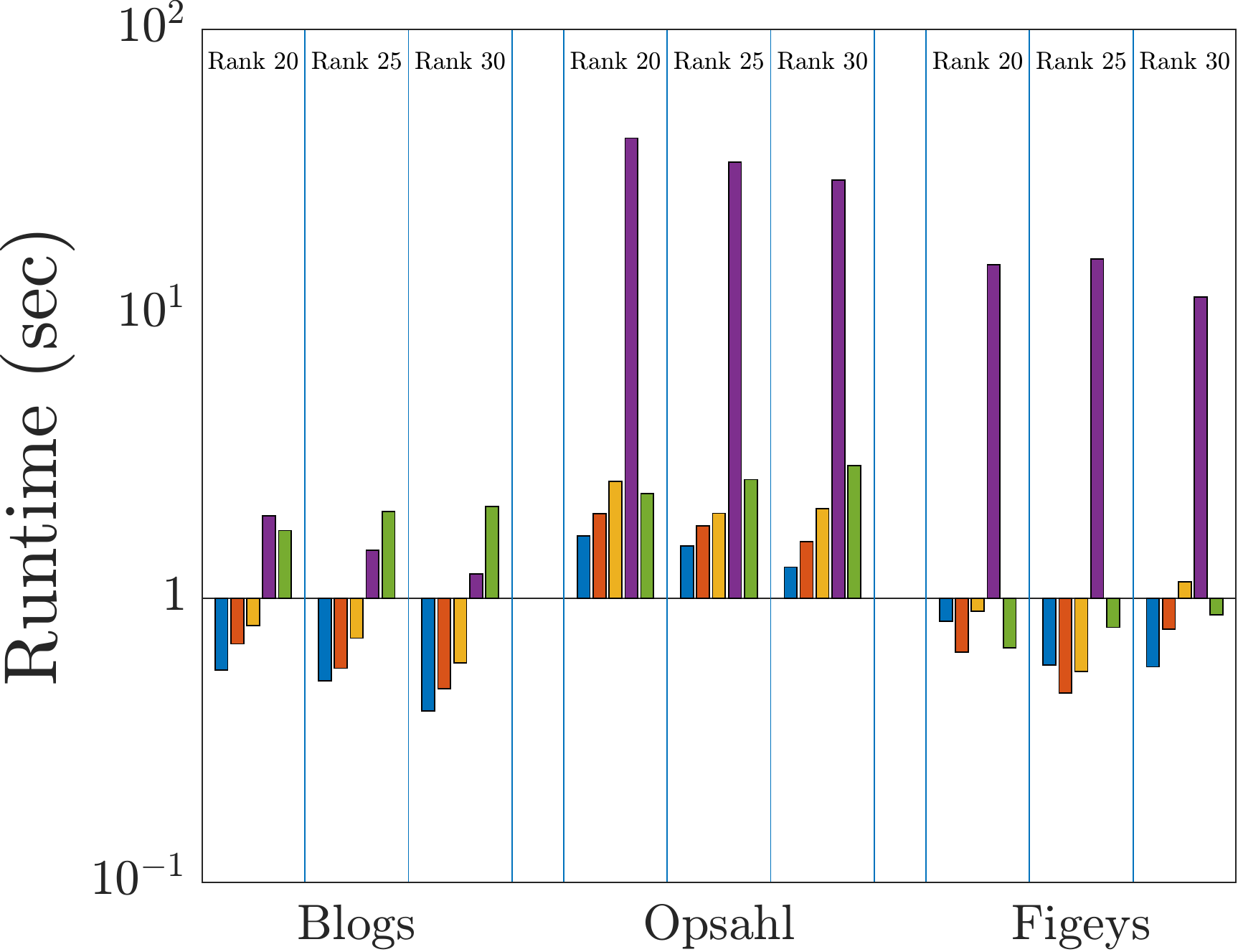}
    \\
    
    \includegraphics[width=0.29\textwidth]{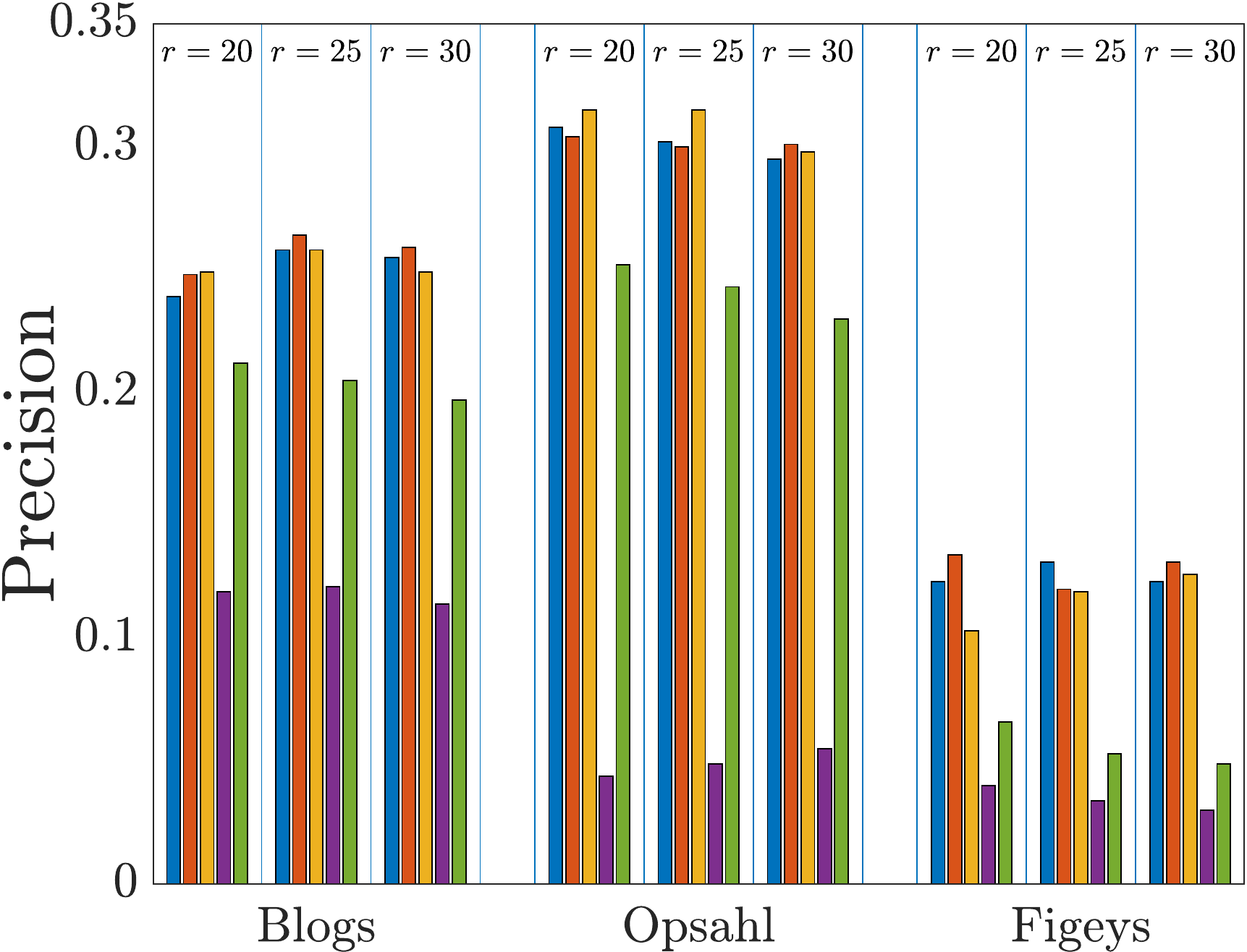} \hspace{1.5mm}
    \includegraphics[width=0.29\textwidth]{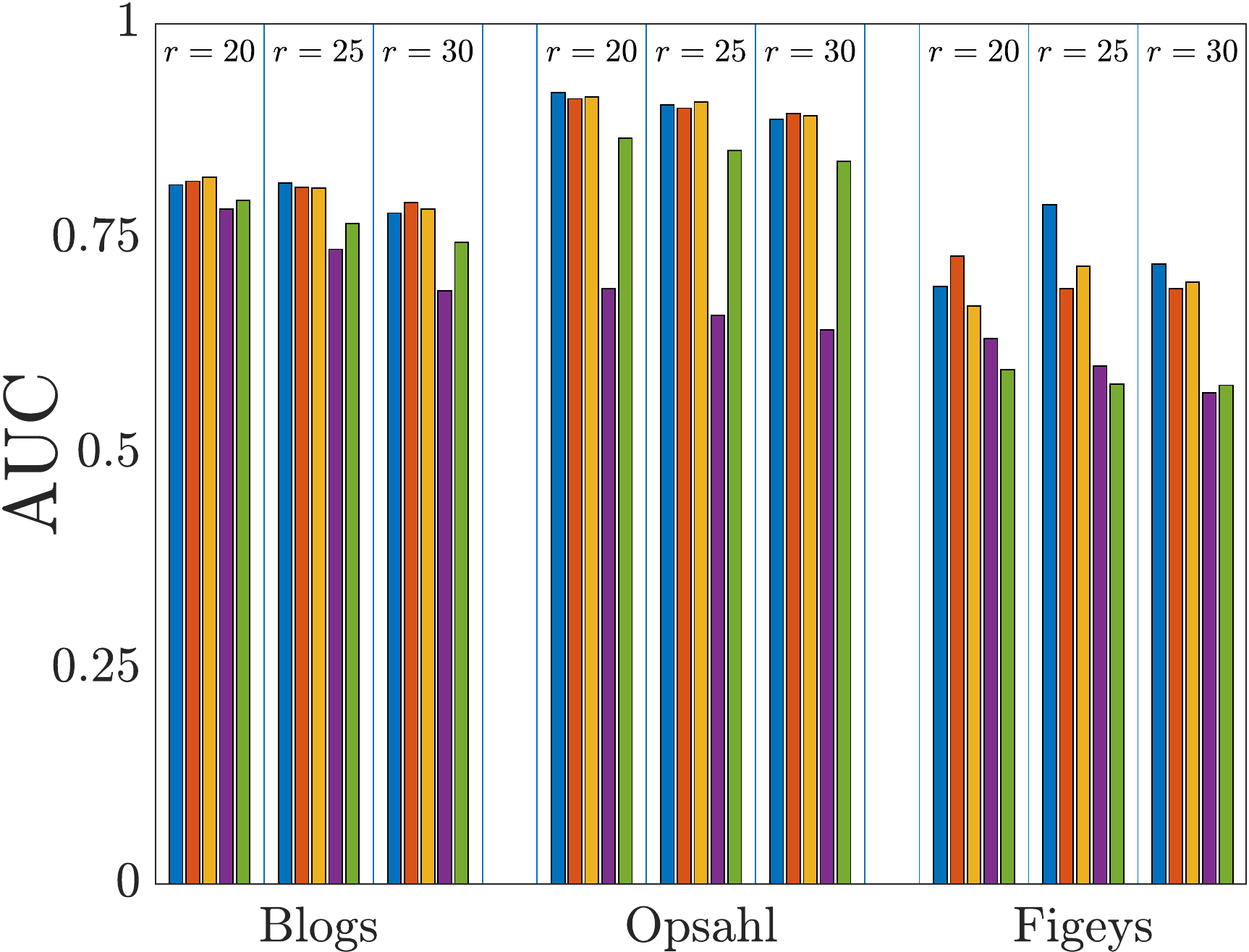} \hspace{1.5mm}
     \includegraphics[width=0.29\textwidth]{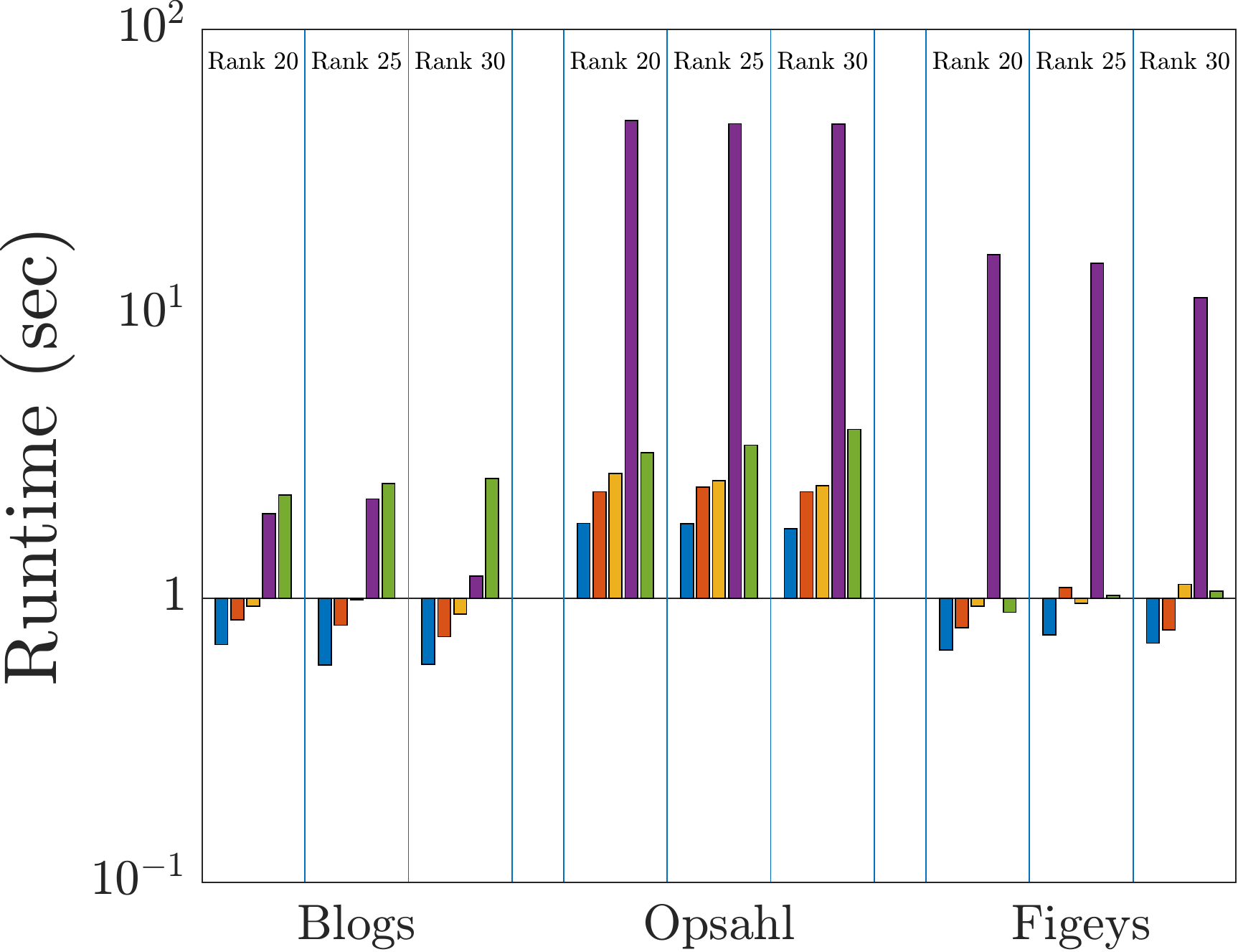}
      \raisebox{0.14\height}{\includegraphics[width=0.09\textwidth]{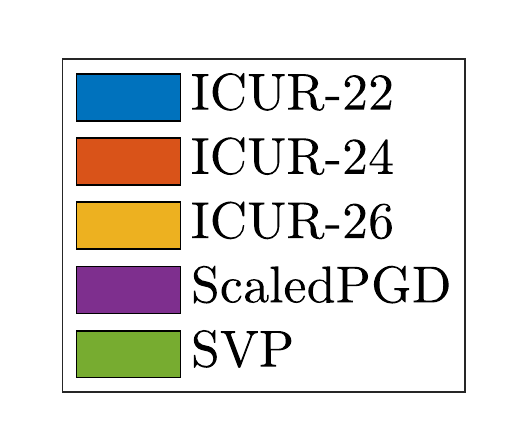}}
    
      \caption{  
        {Bar-plot results on   link prediction datasets.        The performances are measured in Precision, AUC, and runtime.  \textbf{Top}: overall observation rate $\alpha = 10 \%$. \textbf{Middle}: $\alpha = 20 \%$. \textbf{Bottom}: $\alpha = 30 \%$. When overall observation rate $\alpha$ is fixed, the ICURC performs better under various CCS conditions compared with other methods in uniform sampling.}
      }
    \label{fig:barplot_summary_LP}
\end{figure*}

\section{Proofs}\label{sec:proofs}
In this section, we provide the  proofs for the theoretical results presented in Section~\ref{sec:theoretical_results}, i.e., 
Lemma~\ref{COR:UniformIncoherence} and Theorem~\ref{thm:sufficient_condition}.

\subsection{Proof of Lemma~\ref{COR:UniformIncoherence}}

\begin{proof}[Proof of Lemma~\ref{COR:UniformIncoherence}]
We invoke \cite[Theorem 3.5]{cai2021robust}.
By setting $\delta=0.75$ and $\gamma=r\log^2(n)$ in that theorem, the following inequalities  hold  
\begin{equation*}
\begin{aligned}
\sqrt{\frac{|\cI|}{n}}\left\|[\BW]_{\cI,:}^\dagger\right\|_2&\leq 2,\cr
\mu_{1\BR}&\leq 4\kappa^2\mu_1,\cr
\mu_{2\BR}&\leq \mu_2,\cr
\kappa_{\BC}&\leq 2\sqrt{\mu_1 r}\kappa,
\end{aligned}
\end{equation*}
with probability at least
\[1-\frac{r}{\exp((0.75+0.25\log(0.25))r\log^2(n))}\geq 1-\frac{r}{n^{0.4r\log(n)}}.
\]
This completes the proof.
\end{proof}

\subsection{Proof of Theorem~\ref{thm:sufficient_condition}}\label{subsec:pf_main_thm}
The proof of our main theorem (i.e., Theorem~\ref{thm:sufficient_condition}) is based on our Two-Step Completion (TSC) algorithm. For the ease of readers, we state the  TSC algorithm first.
\begin{algorithm}[h]
 \caption{Two-Step Completion (TSC) for CCS}\label{ALG:TS_MC}
\begin{algorithmic}[1]
\State\textbf{Input}: 
$[\BX]_{\Omega_{\BR}\cup\Omega_{\BC}}$: observed data; 
$\Omega_\BR, \Omega_\BC$: observation locations; 
$\cI,\cJ$: row and column indices that define $\BR$ and $\BC$ respectively;  
$r$: target rank;
  $\mathrm{MC}$: the chosen matrix completion solver. 
\State $\widetilde{\BR} = \mathrm{MC}([\BX]_{\Omega_{\BR}},r) $ 
\State $\widetilde{\BC} = \mathrm{MC}([\BX]_{\Omega_{\BC}},r) $ 
\State $\widetilde{\BU} = \widetilde{\BC}(\cI,:) $
\State $\widetilde{\BX} = \widetilde{\BC}{\widetilde{\BU}}^\dagger \widetilde{\BR}$
\State\textbf{Output: }$\widetilde{\BX}$: approximation of $\BX$.
\end{algorithmic}
\end{algorithm}

\begin{proof}[{Proof of Theorem~\ref{thm:sufficient_condition}}]
Since $\cI$ and $\cJ$ are chosen uniformly from $[n]$, according to Lemma~\ref{COR:UniformIncoherence} we have that 
\begin{align*}
\mu_{2\BC}\leq 4\kappa^2\mu_2,~&~
\mu_{1\BR}\leq 4\kappa^2\mu_1,\cr
\kappa_{\BC}\leq 2\sqrt{\mu_2 r}\kappa,~&~
\kappa_{\BR}\leq 2\sqrt{\mu_2 r}\kappa,
\end{align*}
with probability at least $1-\frac{2r}{n^{0.4r\log(n)}}$. Thus, 
\[
\|\BW_{\BC}\BV_{\BC}^\top\|_{\infty}\leq 2\kappa\sqrt{r\mu_1\mu_2}\sqrt{\frac{r}{n|J|}}
\] 
and 
\[
\|\BW_{\BR}\BV_{\BR}^\top\|_{\infty}\leq 2\kappa\sqrt{r\mu_1\mu_2}\sqrt{\frac{r}{n|I|}}
\]
hold with  probability  at least $1-\frac{2r}{n^{0.4r\log(n)}}$.

By \cite[Theorem~2]{recht2011simpler}, the following two statements hold: 
\begin{enumerate}
    \item $|\Omega_{\BC}|\geq 128\kappa^2r^2\mu_1\mu_2(n+|\cJ|)\beta\log^2(n)$ for some $\beta>1$ ensures that $\BC$ is the minimizer   to the problem
\begin{equation*}
\begin{split}
\minimize_{\widetilde{\BC}}&\quad \|\widetilde{\BC}\|_{*} \quad\cr
\subject&\quad \cP_{\Omega_\BC}(\widetilde{\BC})=\cP_{\Omega_\BC}(\BC)
\end{split}
\end{equation*} 
  with probability at least $$1-\frac{6\log(n)}{(n+\mu_2 r^2\log^2(n))^{2\beta-2}}-\frac{1}{n^{2\beta^{0.5}-2}}.$$ 
\item $|\Omega_{\BR}|\geq 128\kappa^2r^2\mu_1\mu_2(n+|\cI|)\beta\log^2(n) $ for some $\beta>1$ ensures that $\BR$ is the minimizer to the problem 
\begin{equation*}
\begin{split}
\minimize_{\widetilde{\BR}}&\quad \|\widetilde{\BR}\|_{*} \quad \cr
\subject&\quad \cP_{\Omega_\BR}(\widetilde{\BR})=\cP_{\Omega_\BR}(\BR)
\end{split}
\end{equation*} 
 with probability at least \[1-\frac{6\log(n)}{(n+\mu_2 r^2\log^2(n))^{2\beta-2}}-\frac{1}{n^{2\beta^{0.5}-2}}.\] 
\end{enumerate}
  Since $\kappa_{\BC}\leq 2\sqrt{\mu_2 r}\kappa$ and $\kappa_{\BR}\leq 2\sqrt{\mu_2 r}\kappa$, we thus have 
  $$\rank(\BC)=\rank(\BR)=\rank(\BX)=r.$$ 
  According to \cite[Theorem 5.5]{HammHuang}, we thus have $\BX=\BC\BU^\dagger \BR$, in other words, the CUR decomposition ${\BC}\BU^\dagger \BR$ can reproduce the original $\BX$.
  
  Combining all the statements above,  $\BX$ can be exactly recovered from $\Omega_{\BC}\cup\Omega_{\BR}$ with probability  at least
\[1-\frac{2r}{n^{0.4r\log(n)}}-\frac{2}{n^{2\beta^{0.5}-2}}-\sum_{i=1}^{2}\frac{6\log(n)}{(n+\mu_ir^2\log^2(n))^{2\beta-2}}.
\]
This completes the proof.
\end{proof}

\section{Conclusion and Future Directions}
This paper proposes a novel, easy-to-implement, and practically flexible  
sampling model, coined Cross-Concentrated Sampling (CCS), for matrix completion problems that bridges the classical uniform sampling model and the CUR sampling model. For this model,  we provide a sufficient sampling bound to ensure the uniqueness of the solution for this matrix completion problem. Furthermore, we develop an efficient non-convex algorithm to solve the CCS-based MC. The efficiency of the algorithm is illustrated on synthetic and real datasets. The simulations also show that CCS provides flexibility to acquire sufficient data to ensure the successful completion that can potentially save costs in some real applications. 

There are four lines for future work. First, the sufficient bound in this paper is not tight. One can see that if the CUR sampling model is applied to a low-rank matrix of size $n\times n$, $\cO(nr\log(n))$ samples can ensure the successful completion with a high probability. There is room to improve the sampling bound for our CCS model. Second, it would be helpful to analyze the convergence guarantee of ICURC in future work. Third, our empirical simulations have shown the promising performance of ICURC, but there is likely still room for improvement. More efficient approaches will be developed with the theoretical foundation as a guide. Lastly, this novel sampling model is based on matrix CUR decomposition. Recently, tensor CUR decompositions have been proposed (cf.~\cite{cai2021mode,cai2021rtcur,che2022perturbations}). In low-rank tensor approximation,  the original tensor can be well-approximated by making good use of merely one smaller-sized subtensor and a small collection of fibers from each mode. Therefore, there is no need to access the full tensor, which makes tensor CUR decompositions memory and computationally efficient. We plan to generalize the proposed sampling model into this tensor setting \cite{THCL2023}.  

\bibliographystyle{IEEEtran}
\bibliography{IEEEabrv,ref}

% Generated by IEEEtran.bst, version: 1.14 (2015/08/26)
\begin{thebibliography}{10}
\providecommand{\url}[1]{#1}
\csname url@samestyle\endcsname
\providecommand{\newblock}{\relax}
\providecommand{\bibinfo}[2]{#2}
\providecommand{\BIBentrySTDinterwordspacing}{\spaceskip=0pt\relax}
\providecommand{\BIBentryALTinterwordstretchfactor}{4}
\providecommand{\BIBentryALTinterwordspacing}{\spaceskip=\fontdimen2\font plus
\BIBentryALTinterwordstretchfactor\fontdimen3\font minus
  \fontdimen4\font\relax}
\providecommand{\BIBforeignlanguage}[2]{{%
\expandafter\ifx\csname l@#1\endcsname\relax
\typeout{** WARNING: IEEEtran.bst: No hyphenation pattern has been}%
\typeout{** loaded for the language `#1'. Using the pattern for}%
\typeout{** the default language instead.}%
\else
\language=\csname l@#1\endcsname
\fi
#2}}
\providecommand{\BIBdecl}{\relax}
\BIBdecl

\bibitem{candes2009exact}
E.~J. Cand{\`e}s and B.~Recht, ``Exact matrix completion via convex
  optimization,'' \emph{Foundations of Computational Mathematics}, vol.~9,
  no.~6, pp. 717--772, 2009.

\bibitem{Netflix}
J.~Bennett, C.~Elkan, B.~Liu, P.~Smyth, and D.~Tikk, ``{KDD} cup and workshop
  2007,'' \emph{SIGKDD Explor. Newsl.}, vol.~9, no.~2, p. 51–52, 2007.

\bibitem{goldberg1992using}
D.~Goldberg, D.~Nichols, B.~M. Oki, and D.~Terry, ``Using collaborative
  filtering to weave an information tapestry,'' \emph{Communications of the
  ACM}, vol.~35, no.~12, pp. 61--70, 1992.

\bibitem{chen2004recovering}
P.~Chen and D.~Suter, ``Recovering the missing components in a large noisy
  low-rank matrix: Application to sfm,'' \emph{IEEE Transactions on Pattern
  Analysis and Machine Intelligence}, vol.~26, no.~8, pp. 1051--1063, 2004.

\bibitem{hu2012fast}
Y.~Hu, D.~Zhang, J.~Ye, X.~Li, and X.~He, ``Fast and accurate matrix completion
  via truncated nuclear norm regularization,'' \emph{IEEE Transactions on
  Pattern Analysis and Machine Intelligence}, vol.~35, no.~9, pp. 2117--2130,
  2012.

\bibitem{cai2019fast}
J.-F. Cai, T.~Wang, and K.~Wei, ``Fast and provable algorithms for spectrally
  sparse signal reconstruction via low-rank {H}ankel matrix completion,''
  \emph{Applied and Computational Harmonic Analysis}, vol.~46, no.~1, pp.
  94--121, 2019.

\bibitem{cai2022structured}
H.~Cai, J.-F. Cai, and J.~You, ``Structured gradient descent for fast robust
  low-rank {H}ankel matrix completion,'' \emph{arXiv preprint
  arXiv:2204.03316}, 2022.

\bibitem{chi2013genotype}
E.~C. Chi, H.~Zhou, G.~K. Chen, D.~O. Del~Vecchyo, and K.~Lange, ``Genotype
  imputation via matrix completion,'' \emph{Genome research}, vol.~23, no.~3,
  pp. 509--518, 2013.

\bibitem{cai2016structured}
T.~Cai, T.~T. Cai, and A.~Zhang, ``Structured matrix completion with
  applications to genomic data integration,'' \emph{Journal of the American
  Statistical Association}, vol. 111, no. 514, pp. 621--633, 2016.

\bibitem{argyriou2008convex}
A.~Argyriou, T.~Evgeniou, and M.~Pontil, ``Convex multi-task feature
  learning,'' \emph{Machine learning}, vol.~73, no.~3, pp. 243--272, 2008.

\bibitem{liu2010interior}
Z.~Liu and L.~Vandenberghe, ``Interior-point method for nuclear norm
  approximation with application to system identification,'' \emph{SIAM Journal
  on Matrix Analysis and Applications}, vol.~31, no.~3, pp. 1235--1256, 2010.

\bibitem{singer2010uniqueness}
A.~Singer and M.~Cucuringu, ``Uniqueness of low-rank matrix completion by
  rigidity theory,'' \emph{SIAM Journal on Matrix Analysis and Applications},
  vol.~31, no.~4, pp. 1621--1641, 2010.

\bibitem{tasissa2018exact}
A.~Tasissa and R.~Lai, ``Exact reconstruction of euclidean distance geometry
  problem using low-rank matrix completion,'' \emph{IEEE Transactions on
  Information Theory}, vol.~65, no.~5, pp. 3124--3144, 2018.

\bibitem{recht2011simpler}
B.~Recht, ``A simpler approach to matrix completion.'' \emph{Journal of Machine
  Learning Research}, vol.~12, no.~12, 2011.

\bibitem{klopp2014noisy}
O.~Klopp, ``Noisy low-rank matrix completion with general sampling
  distribution,'' \emph{Bernoulli}, vol.~20, no.~1, pp. 282--303, 2014.

\bibitem{cai2010singular}
J.-F. Cai, E.~J. Cand{\`e}s, and Z.~Shen, ``A singular value thresholding
  algorithm for matrix completion,'' \emph{SIAM Journal on optimization},
  vol.~20, no.~4, pp. 1956--1982, 2010.

\bibitem{candes2010power}
E.~J. Cand{\`e}s and T.~Tao, ``The power of convex relaxation: Near-optimal
  matrix completion,'' \emph{IEEE Transactions on Information Theory}, vol.~56,
  no.~5, pp. 2053--2080, 2010.

\bibitem{keshavan2010matrix}
R.~H. Keshavan, A.~Montanari, and S.~Oh, ``Matrix completion from a few
  entries,'' \emph{IEEE transactions on information theory}, vol.~56, no.~6,
  pp. 2980--2998, 2010.

\bibitem{jain2013low}
P.~Jain, P.~Netrapalli, and S.~Sanghavi, ``Low-rank matrix completion using
  alternating minimization,'' in \emph{Proceedings of the forty-fifth annual
  ACM symposium on Theory of computing}, 2013, pp. 665--674.

\bibitem{sun2016guaranteed}
R.~Sun and Z.-Q. Luo, ``Guaranteed matrix completion via non-convex
  factorization,'' \emph{IEEE Transactions on Information Theory}, vol.~62,
  no.~11, pp. 6535--6579, 2016.

\bibitem{zamarashkin1997pseudo}
N.~Zamarashkin, ``Pseudo-skeleton approximations by matrices of maximal
  volume,'' 1997.

\bibitem{chiu2013sublinear}
J.~Chiu and L.~Demanet, ``Sublinear randomized algorithms for skeleton
  decompositions,'' \emph{SIAM Journal on Matrix Analysis and Applications},
  vol.~34, no.~3, pp. 1361--1383, 2013.

\bibitem{HammHuang}
K.~Hamm and L.~Huang, ``Perspectives on {CUR} decompositions,'' \emph{Applied
  and Computational Harmonic Analysis}, vol.~48, no.~3, pp. 1088--1099, 2020.

\bibitem{hamm2023generalized}
K.~Hamm, ``Generalized pseudoskeleton decompositions,'' \emph{Linear Algebra
  and its Applications}, vol. 664, pp. 236--252, 2023.

\bibitem{cai2020rapid}
H.~Cai, K.~Hamm, L.~Huang, J.~Li, and T.~Wang, ``Rapid robust principal
  component analysis: {CUR} accelerated inexact low rank estimation,''
  \emph{IEEE Signal Processing Letters}, vol.~28, pp. 116--120, 2020.

\bibitem{cai2021robust}
H.~Cai, K.~Hamm, L.~Huang, and D.~Needell, ``Robust {CUR} decomposition: Theory
  and imaging applications,'' \emph{SIAM Journal on Imaging Sciences}, vol.~14,
  no.~4, pp. 1472--1503, 2021.

\bibitem{xu2015cur}
M.~Xu, R.~Jin, and Z.-H. Zhou, ``{CUR} algorithm for partially observed
  matrices,'' in \emph{International Conference on Machine Learning}.\hskip 1em
  plus 0.5em minus 0.4em\relax PMLR, 2015, pp. 1412--1421.

\bibitem{chen2015incoherence}
Y.~Chen, ``Incoherence-optimal matrix completion,'' \emph{IEEE Transactions on
  Information Theory}, vol.~61, no.~5, pp. 2909--2923, 2015.

\bibitem{vandenberghe1996semidefinite}
L.~Vandenberghe and S.~Boyd, ``Semidefinite programming,'' \emph{SIAM review},
  vol.~38, no.~1, pp. 49--95, 1996.

\bibitem{jain2010guaranteed}
P.~Jain, R.~Meka, and I.~Dhillon, ``Guaranteed rank minimization via singular
  value projection,'' \emph{Advances in Neural Information Processing Systems},
  vol.~23, 2010.

\bibitem{jain2015fast}
P.~Jain and P.~Netrapalli, ``Fast exact matrix completion with finite
  samples,'' in \emph{Conference on Learning Theory}.\hskip 1em plus 0.5em
  minus 0.4em\relax PMLR, 2015, pp. 1007--1034.

\bibitem{keshavan2012efficient}
R.~H. Keshavan, \emph{Efficient algorithms for collaborative filtering}.\hskip
  1em plus 0.5em minus 0.4em\relax Stanford University, 2012.

\bibitem{zheng2016convergence}
Q.~Zheng and J.~Lafferty, ``Convergence analysis for rectangular matrix
  completion using burer-monteiro factorization and gradient descent,''
  \emph{arXiv preprint arXiv:1605.07051}, 2016.

\bibitem{tong2021accelerating}
T.~Tong, C.~Ma, and Y.~Chi, ``Accelerating ill-conditioned low-rank matrix
  estimation via scaled gradient descent,'' \emph{The Journal of Machine
  Learning Research}, vol.~22, no. 150, pp. 1--63, 2021.

\bibitem{vandereycken2013low}
B.~Vandereycken, ``Low-rank matrix completion by riemannian optimization,''
  \emph{SIAM Journal on Optimization}, vol.~23, no.~2, pp. 1214--1236, 2013.

\bibitem{wei2020guarantees}
K.~Wei, J.-F. Cai, T.~F. Chan, and S.~Leung, ``Guarantees of {R}iemannian
  optimization for low rank matrix completion,'' \emph{Inverse Problems \&
  Imaging}, vol.~14, no.~2, p. 233, 2020.

\bibitem{AB2013}
H.~Avron and C.~Boutsidis, ``Faster subset selection for matrices and
  applications,'' \emph{SIAM Journal on Matrix Analysis and Applications},
  vol.~34, no.~4, pp. 1464--1499, 2013.

\bibitem{AltschulerGreedyCSSP}
A.~Bhaskara, A.~Rostamizadeh, J.~Altschuler, M.~Zadimoghaddam, T.~Fu, and
  V.~Mirrokni, ``Greedy column subset selection: New bounds and distributed
  algorithms.''\hskip 1em plus 0.5em minus 0.4em\relax ICML, 2016.

\bibitem{LiDeterministicCSSP}
X.~Li and Y.~Pang, ``Deterministic column-based matrix decomposition,''
  \emph{IEEE Transactions on Knowledge and Data Engineering}, vol.~22, no.~1,
  pp. 145--149, 2010.

\bibitem{DemanetWu}
J.~Chiu and L.~Demanet, ``Sublinear randomized algorithms for skeleton
  decompositions,'' \emph{{SIAM} Journal on Matrix Analysis and Applications},
  vol.~34, no.~3, pp. 1361--1383, 2013.

\bibitem{DKMIII}
P.~Drineas, R.~Kannan, and M.~W. Mahoney, ``Fast {M}onte {C}arlo algorithms for
  matrices {III}: Computing a compressed approximate matrix decomposition,''
  \emph{SIAM Journal on Computing}, vol.~36, no.~1, pp. 184--206, 2006.

\bibitem{DMM08}
P.~Drineas, M.~W. Mahoney, and S.~Muthukrishnan, ``Relative-error {CUR} matrix
  decompositions,'' \emph{SIAM Journal on Matrix Analysis and Applications},
  vol.~30, no.~2, pp. 844--881, 2008.

\bibitem{hamm2020stability}
K.~Hamm and L.~Huang, ``Stability of sampling for {CUR} decompositions,''
  \emph{Foundations of Data Science}, vol.~2, no.~2, p.~83, 2020.

\bibitem{DMPNAS}
M.~W. Mahoney and P.~Drineas, ``{CUR} matrix decompositions for improved data
  analysis,'' \emph{Proceedings of the National Academy of Sciences}, vol. 106,
  no.~3, pp. 697--702, 2009.

\bibitem{WZ_2013}
S.Wang and Z.Zhang, ``Improving {CUR} matrix decomposition and the
  {N}ystr\"{o}m approximation via adaptive sampling,'' \emph{The Journal of
  Machine Learning Research}, vol.~14, pp. 2729--2769, 2013.

\bibitem{tropp2009column}
J.~A. Tropp, ``Column subset selection, matrix factorization, and eigenvalue
  optimization,'' in \emph{Proceedings of the Twentieth Annual ACM-SIAM
  Symposium on Discrete Algorithms}.\hskip 1em plus 0.5em minus 0.4em\relax
  Society for Industrial and Applied Mathematics, 2009, pp. 978--986.

\bibitem{HH_PCD2019}
K.~Hamm and L.~Huang, ``Perturbations of {CUR} decompositions,'' \emph{SIAM
  Journal on Matrix Analysis and Applications}, vol.~42, no.~1, pp. 351--375,
  2021.

\bibitem{hamm2022riemannian}
K.~Hamm, M.~Meskini, and H.~Cai, ``{R}iemannian {CUR} decompositions for robust
  principal component analysis,'' in \emph{ICML Workshop on Topology, Algebra,
  and Geometry in Machine Learning}, 2022.

\bibitem{plan2011compressed}
Y.~Plan, \emph{Compressed sensing, sparse approximation, and low-rank matrix
  estimation}.\hskip 1em plus 0.5em minus 0.4em\relax California Institute of
  Technology, 2011.

\bibitem{10.1145/2827872}
F.~M. Harper and J.~A. Konstan, ``The movielens datasets: History and
  context,'' \emph{ACM Trans. Interact. Intell. Syst.}, vol.~5, no.~4, 2015.

\bibitem{guo2013novel}
G.~Guo, J.~Zhang, and N.~Yorke-Smith, ``A novel bayesian similarity measure for
  recommender systems,'' in \emph{Proceedings of the 23rd International Joint
  Conference on Artificial Intelligence (IJCAI)}, 2013, pp. 2619--2625.

\bibitem{kalofolias2014matrix}
V.~Kalofolias, X.~Bresson, M.~Bronstein, and P.~Vandergheynst, ``Matrix
  completion on graphs,'' 2014.

\bibitem{10.1145/1454008.1454031}
H.~Yildirim and M.~S. Krishnamoorthy, ``A random walk method for alleviating
  the sparsity problem in collaborative filtering,'' in \emph{Proceedings of
  the 2008 ACM Conference on Recommender Systems}, ser. RecSys '08, 2008, p.
  131–138.

\bibitem{ma2011fixed}
S.~Ma, D.~Goldfarb, and L.~Chen, ``Fixed point and bregman iterative methods
  for matrix rank minimization,'' \emph{Math. Program.}, vol. 128, no.~1, pp.
  321--353, 2011.

\bibitem{goldberg2001eigentaste}
K.~Goldberg, T.~Roeder, D.~Gupta, and C.~Perkins, ``Eigentaste: A constant time
  collaborative filtering algorithm,'' \emph{Information Retrieval}, vol.~4,
  no.~2, pp. 133--151, 2001.

\bibitem{pech2017link}
R.~Pech, D.~Hao, L.~Pan, H.~Cheng, and T.~Zhou, ``Link prediction via matrix
  completion,'' \emph{EPL (Europhysics Letters)}, vol. 117, no.~3, p. 38002,
  2017.

\bibitem{konect}
J.~Kunegis, ``{KONECT} -- {The} {Koblenz} {Network} {Collection},'' in
  \emph{Proc. Int. Conf. on World Wide Web Companion}, 2013, pp. 1343--1350.

\bibitem{gao2015link}
F.~Gao, K.~Musial, C.~Cooper, and S.~Tsoka, ``Link prediction methods and their
  accuracy for different social networks and network metrics,''
  \emph{Scientific programming}, vol. 2015, 2015.

\bibitem{ZHAO2022116033}
Z.~Zhao, Z.~Gou, Y.~Du, J.~Ma, T.~Li, and R.~Zhang, ``A novel link prediction
  algorithm based on inductive matrix completion,'' \emph{Expert Systems with
  Applications}, vol. 188, p. 116033, 2022.

\bibitem{clauset2008hierarchical}
A.~Clauset, C.~Moore, and M.~E. Newman, ``Hierarchical structure and the
  prediction of missing links in networks,'' \emph{Nature}, vol. 453, no. 7191,
  pp. 98--101, 2008.

\bibitem{hanley1982meaning}
J.~A. Hanley and B.~J. McNeil, ``The meaning and use of the area under a
  receiver operating characteristic (roc) curve.'' \emph{Radiology}, vol. 143,
  no.~1, pp. 29--36, 1982.

\bibitem{cai2021mode}
H.~Cai, K.~Hamm, L.~Huang, and D.~Needell, ``Mode-wise tensor decompositions:
  Multi-dimensional generalizations of {CUR} decompositions,'' \emph{Journal of
  Machine Learning Research}, vol.~22, no. 185, pp. 1--36, 2021.

\bibitem{cai2021rtcur}
H.~Cai, Z.~Chao, L.~Huang, and D.~Needell, ``Fast robust tensor principal
  component analysis via fiber {CUR} decomposition,'' in \emph{Proceedings of
  the IEEE/CVF International Conference on Computer Vision (ICCV) Workshops},
  2021, pp. 189--197.

\bibitem{che2022perturbations}
M.~Che, J.~Chen, and Y.~Wei, ``Perturbations of the {TCUR} decomposition for
  tensor valued data in the tucker format,'' \emph{Journal of Optimization
  Theory and Applications}, vol. 194, no.~3, pp. 852--877, 2022.

\bibitem{THCL2023}
Z.~Tan, L.~Huang, H.~Cai, and Y.~Lou, ``Non-convex approaches for low-rank
  tensor completion under tubal sampling,'' in \emph{International Conference
  on Acoustics, Speech, and Signal Processing (ICASSP)}.\hskip 1em plus 0.5em
  minus 0.4em\relax IEEE, 2023.

\end{thebibliography}

\appendix

\section{Empirical Evidence for Remark~\ref{rmk:linear cong}}

 {In this appendix, we provide empirical evidence to support the claims of Remark~\ref{rmk:linear cong}. That is, our main algorithm ICURC converges linearly to the ground truth, given cross-concentrated samples. The following experiments are implemented on Matlab R2020a and executed on a Linux workstation equipped with Intel i9-9940X CPU (3.3GHz @ 14 cores) and 128GB DDR4 RAM, which is the same environment we have used for the other numerical tests in the main paper.} 

In this experiment, we form  the underlying low rank matrix $\bm{X} = \bm{A} \bm{B}^{\top} \in \mathbb{R}^{n \times n}$ of rank $r$ by generating two  random Gaussian matrices $\bm{A}$, $\bm{B} \in \mathbb{R}^{n \times r}$. To study the convergence rate of ICURC,  we have considered different settings. Specifically, we  fix the dimension  of the low-rank matrix with  $n=4000$ and produce partially observed matrices according to CCS model by varying the rank $r$, the sizes of  concentrated submatrices  $\bm{R} \in \mathbb{R}^{cr\log^2(n) \times n}$, $\bm{C} \in \mathbb{R}^{n \times cr\log^2(n)}$, and the overall observation rate   $\alpha$. 
We generate $10$ different completion problems for each given triple set  $(r,\alpha,c)$. ICURC is applied to solve the generated problems  with stop criteria $\varepsilon_{k} \le 10^{-5}$. 
The relative error $\varepsilon_{k}$  at $k$-th iteration is recorded.  The  results and detailed settings are reported in  Figures~\ref{fig:CR_4}--\ref{fig:CR_55} with vertical bars marking the error standard deviations over trials. One can see that ICURC converges almost linearly.

\vspace{2cm}

\begin{figure}[h!]
\centering
    \subfloat[$r = 5$, $\alpha = 0.05$]{\includegraphics[width=0.45\linewidth]{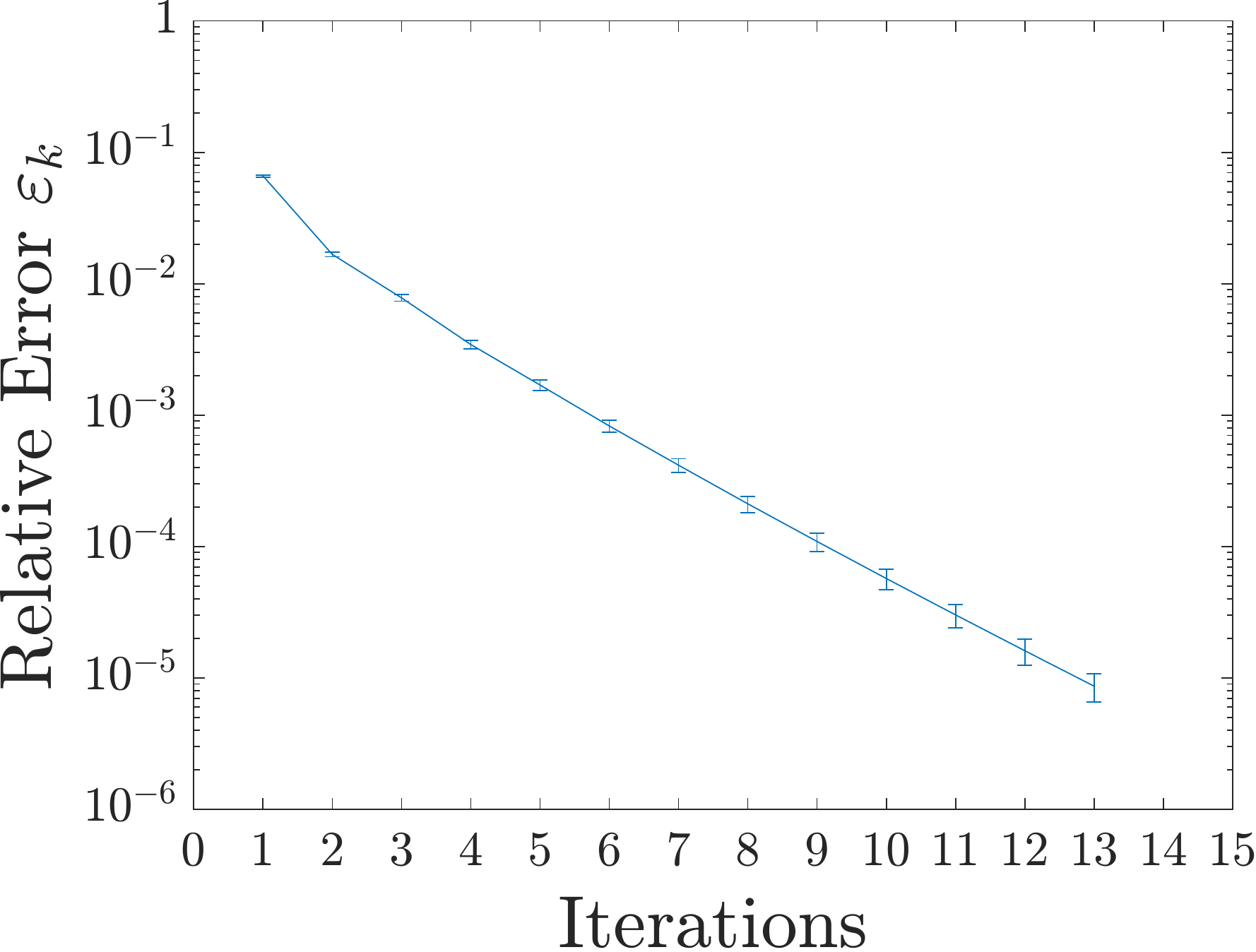}} \quad
   \subfloat[$r = 10$, $\alpha = 0.1$]{\includegraphics[width=0.45\linewidth]{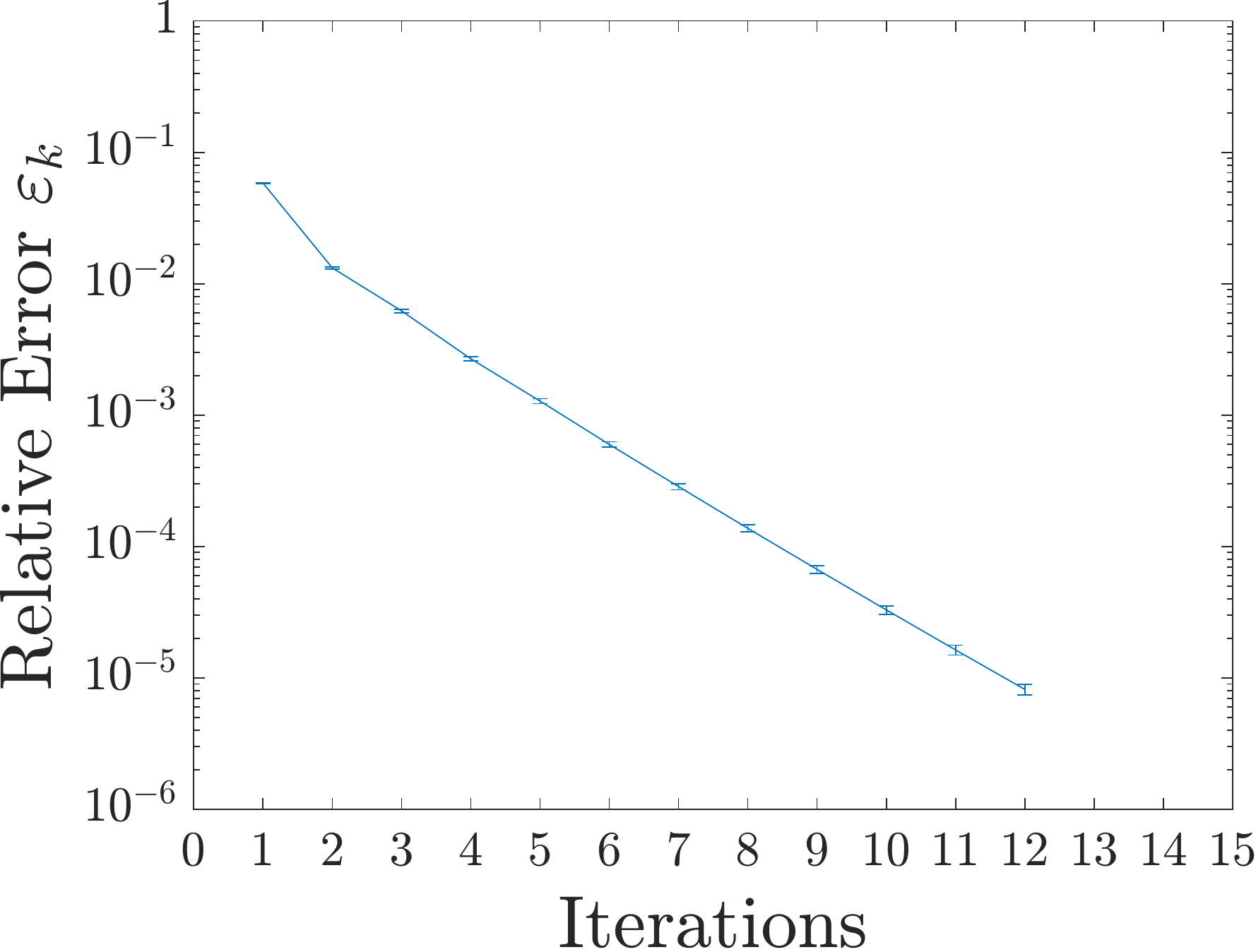}}
    \caption{The averaged relative error of ICURC with respect to iterations over $10$ independent trials with $c = 0.4$} 
    \label{fig:CR_4}
\end{figure}

\begin{figure}[h!]
\centering
    \subfloat[$r = 5$, $\alpha = 0.05$]{\includegraphics[width=0.45\linewidth]{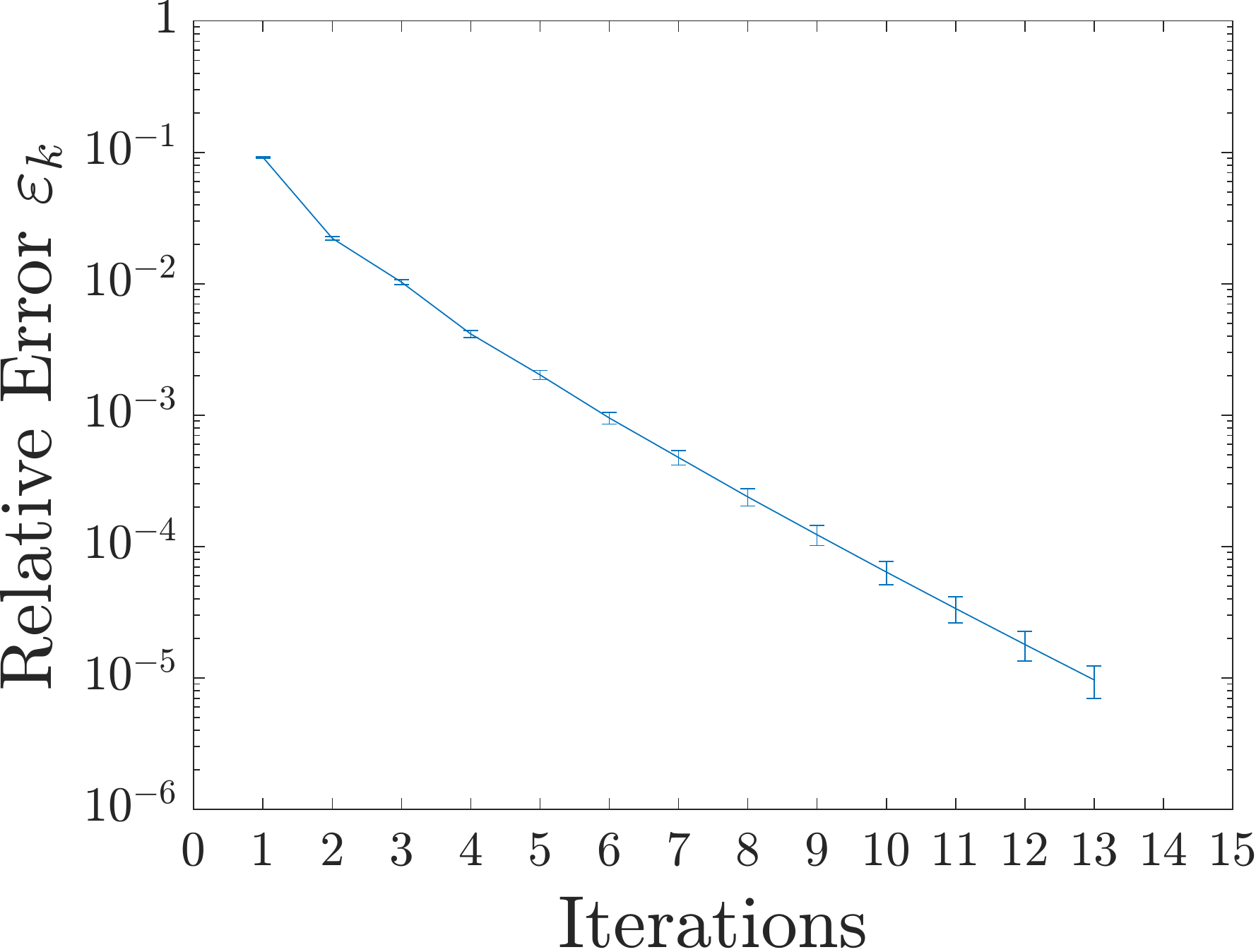}} \quad
   \subfloat[$r = 10$, $\alpha = 0.1$]{\includegraphics[width=0.45\linewidth]{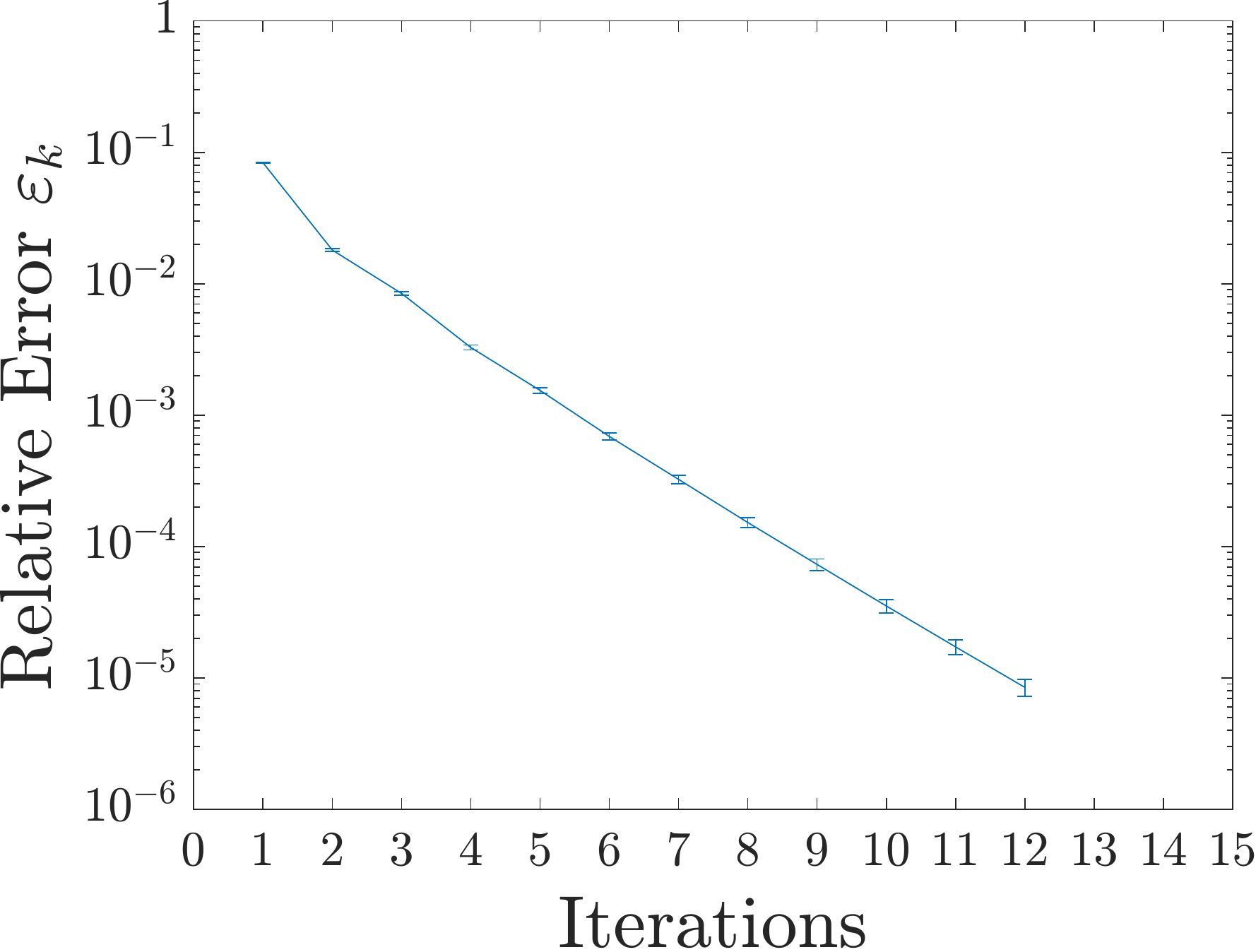}}
    \caption{The averaged relative error of ICURC with respect to iterations over $10$ independent trials with  $c = 0.45$} 
    \label{fig:CR_45}
\end{figure}

\begin{figure}[h!]
\centering
    \subfloat[$r = 5$, $\alpha = 0.05$]{\includegraphics[width=0.45\linewidth]{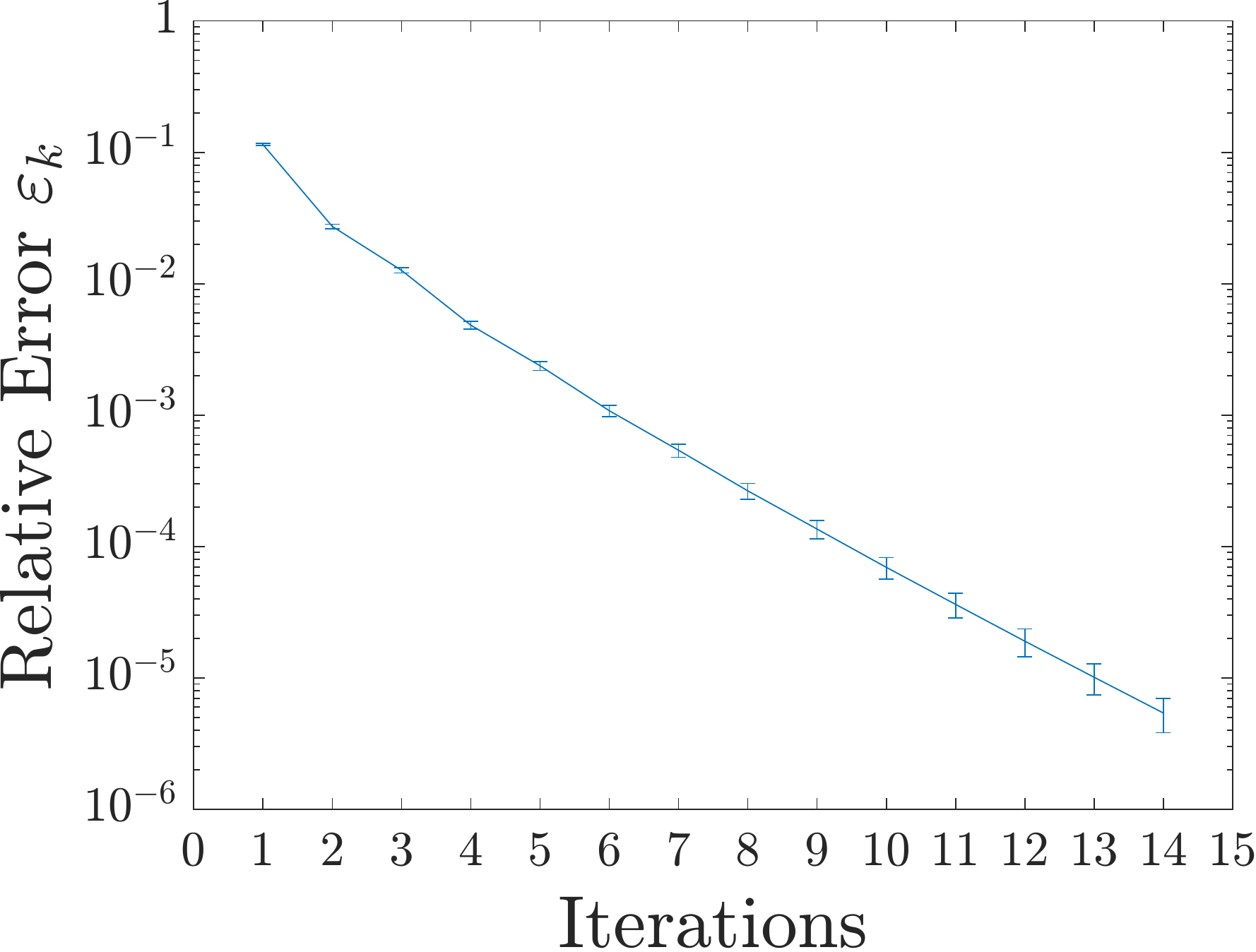}} \quad
   \subfloat[$r = 10$, $\alpha = 0.1$]{\includegraphics[width=0.45\linewidth]{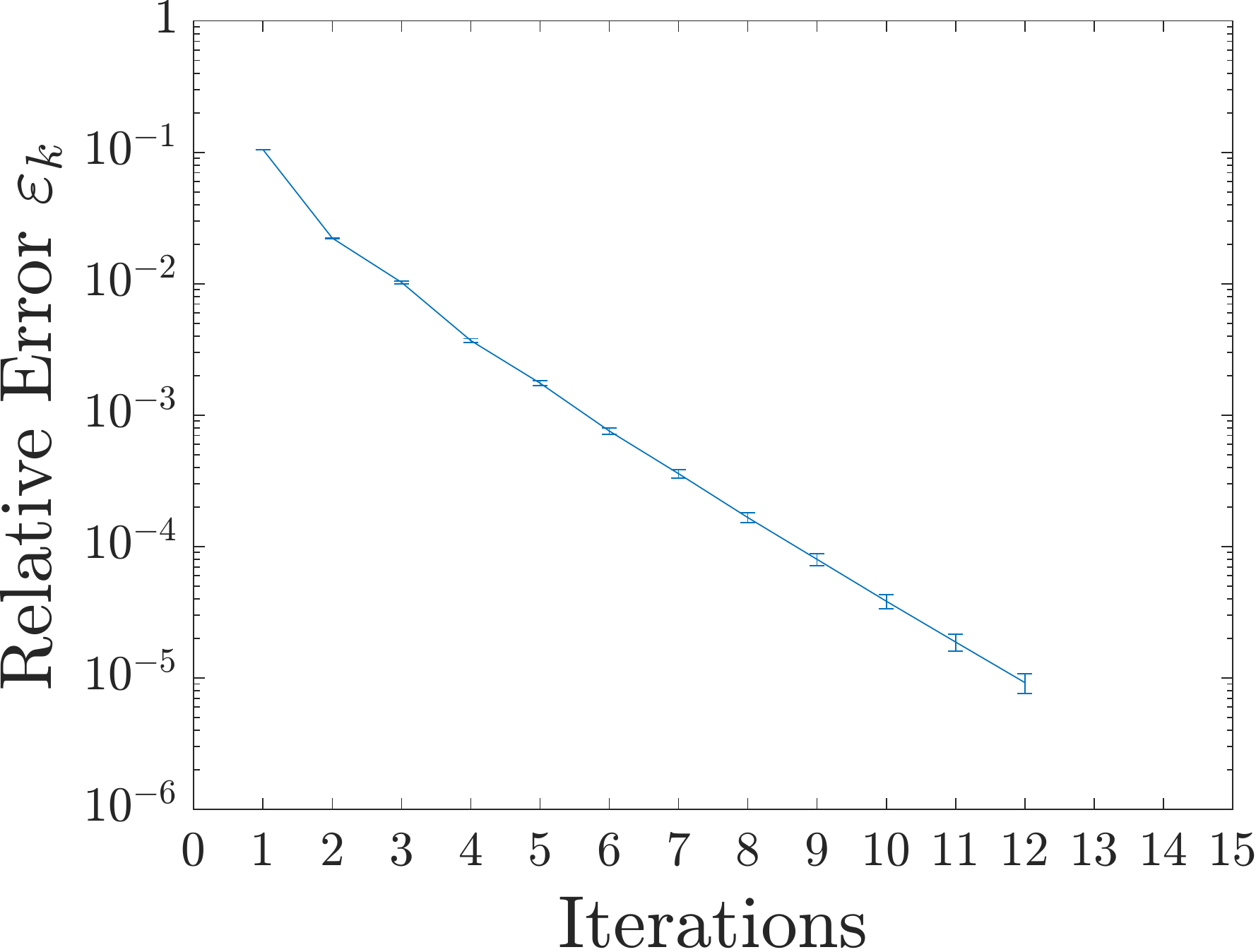}}
    \caption{The averaged relative error of ICURC with respect to iterations over $10$ independent trials with $c = 0.5$} 
    \label{fig:CR_5}
\end{figure}

\begin{figure}[h!]
\centering
    \subfloat[$r = 5$, $\alpha = 0.05$]{\includegraphics[width=0.45\linewidth]{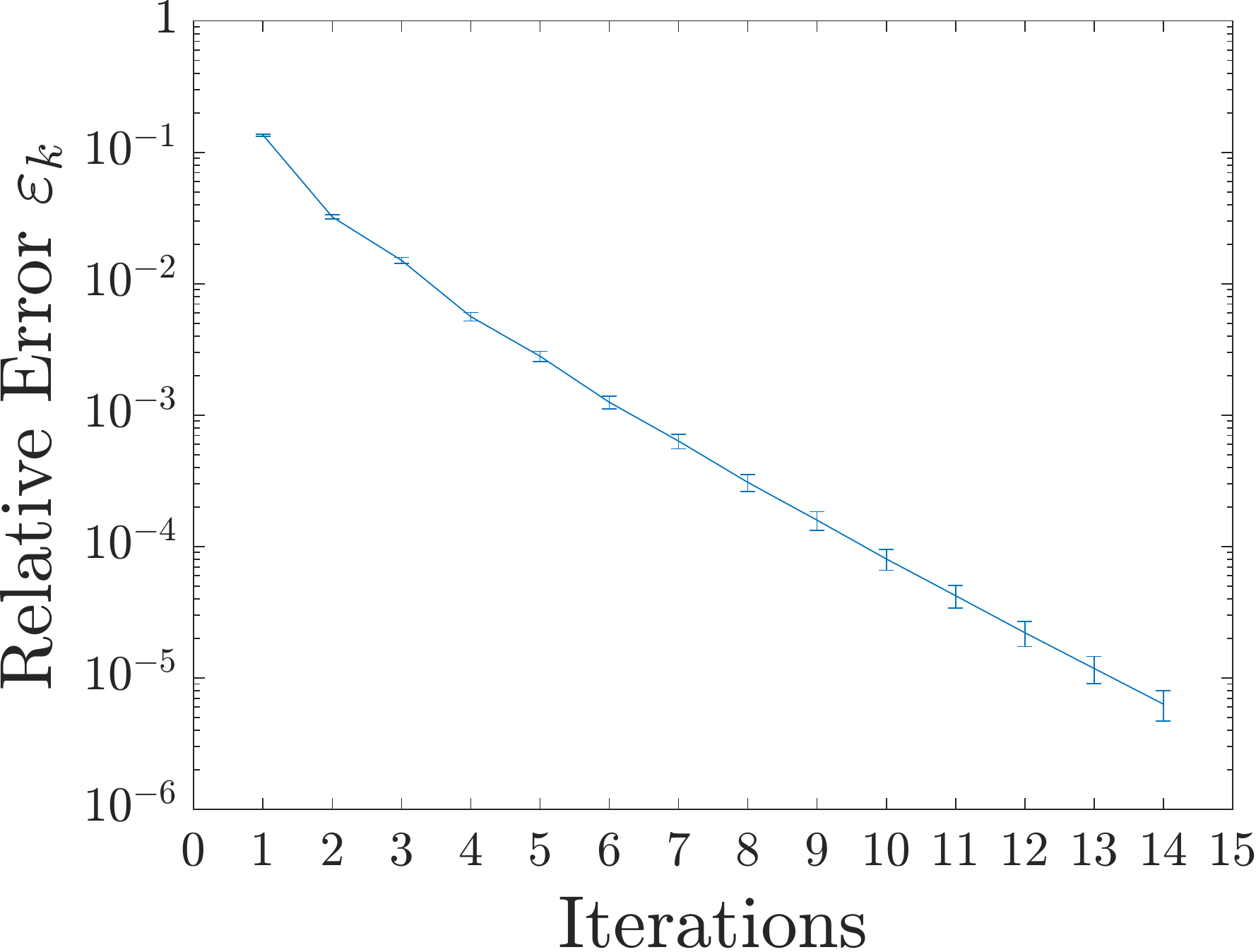}} \quad
   \subfloat[$r = 10$, $\alpha = 0.1$]{\includegraphics[width=0.45\linewidth]{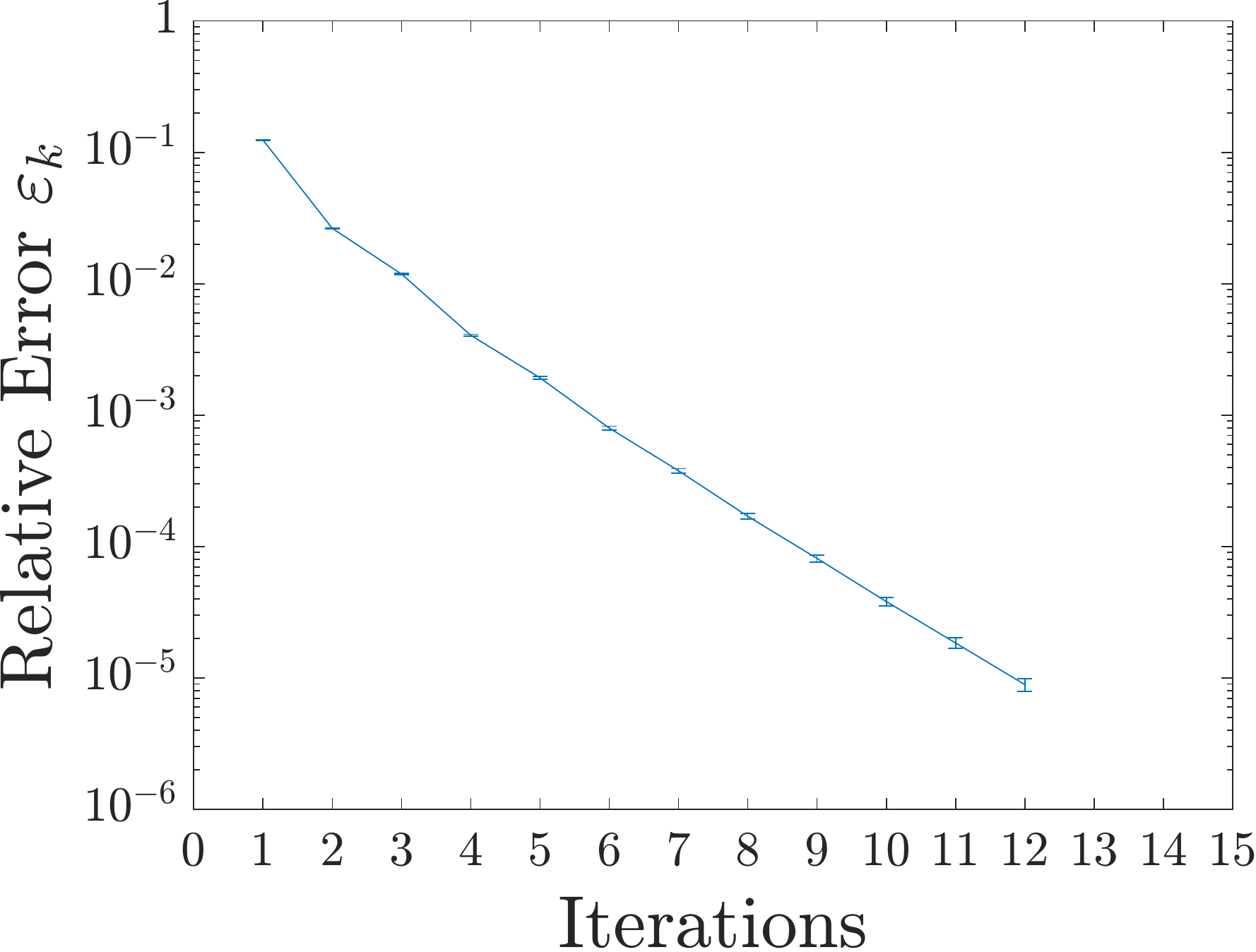}}
    \caption{The averaged relative error of ICURC with respect to iterations over $10$ independent trials with $c = 0.55$} 
    \label{fig:CR_55}
\end{figure}

\end{document}